\documentclass{article}





\usepackage[nonatbib,preprint]{neurips_2020}
\usepackage[utf8]{inputenc} 
\usepackage[T1]{fontenc}    
\usepackage{hyperref}       
\usepackage{url}            
\usepackage{booktabs}       
\usepackage{amsfonts}       
\usepackage{nicefrac}       
\usepackage{microtype}      
\usepackage{abbrevs}
\usepackage{amssymb}
\usepackage{color}
\usepackage{epstopdf}
\usepackage{nicefrac}
\usepackage[english]{babel}
\usepackage{amsmath,accents}
\usepackage[colorinlistoftodos, disable]{todonotes}
\usepackage{amsthm}
\usepackage{multicol}
\usepackage{verbatim}
\usepackage[linesnumbered, ruled,vlined]{algorithm2e}
\usepackage{algorithm,algorithmic}

\usepackage{subfigure}

\usepackage{url}
\usepackage{graphicx}
\usepackage[font=scriptsize]{caption}
\usepackage{amsmath}

\usepackage{amsfonts}
\usepackage{graphicx} 
\usepackage{natbib}

\usepackage{xcolor}
\usepackage{tabularx} 
\usepackage{amsthm}
\usepackage{amsfonts}
\usepackage{amsmath}
\usepackage{amssymb}
\usepackage{mathrsfs}
\usepackage{multirow}
\usepackage{bbm}
\usepackage{wrapfig}

\newtheorem{theorem}{Theorem}

\newtheorem{remark}{Remark}

\newtheorem{proposition}{Proposition}
\newtheorem{lemma}{Lemma}
\newcounter{assump}
\newtheorem{assumption}[assump]{Assumption}

\DeclareMathOperator*{\argmax}{arg\,max}

\usepackage{enumitem}
\usepackage{wrapfig}

\newcommand{\beq}{\begin{equation}}

\newcommand{\eeq}{\end{equation}}
\newcommand{\beqs}{\begin{equation*}}
\newcommand{\eeqs}{\end{equation*}}

\usepackage[framemethod=TikZ]{mdframed}
\usepackage{amsthm}
\newcounter{theo}[section] \setcounter{theo}{0}
\renewcommand{\thetheo}{\arabic{section}.\arabic{theo}}

\newcounter{lem}[section] \setcounter{lem}{0}
\renewcommand{\thelem}{\arabic{section}.\arabic{lem}}

\newcounter{prf}[section]\setcounter{prf}{0}
\renewcommand{\theprf}{\arabic{section}.\arabic{prf}}

\newcounter{cor}[section]\setcounter{cor}{0}
\renewcommand{\thecor}{\arabic{section}.\arabic{cor}}

\newcounter{prop}[section] \setcounter{prop}{0}
\renewcommand{\thelem}{\arabic{section}.\arabic{prop}}

\title{Stochastic Bandits with Linear Constraints}

%

\author{%
  Aldo Pacchiano \\
  UC Berkeley\\
  \texttt{pacchiano@berkeley.edu} \\
   \And
  Mohammad Ghavamzadeh \\
Google Research\\
  \texttt{ghavamza@google.com}\\
  \AND
  Peter Bartlett\\
  UC Berkeley\\
  \texttt{peter@berkeley.edu}\\
  \And
  Heinrich Jiang\\
  Google Research\\
  \texttt{heinrichj@google.com}\\
}

\begin{document}

\maketitle

\begin{abstract}
We study a constrained contextual linear bandit setting, where the goal of the agent is to produce a sequence of policies, whose expected cumulative reward over the course of $T$ rounds is maximum, and each has an expected cost below a certain threshold $\tau$. We propose an upper-confidence bound algorithm for this problem, called {\em optimistic pessimistic linear bandit} (OPLB), and prove an $\widetilde{\mathcal{O}}(\frac{d\sqrt{T}}{\tau-c_0})$ bound on its $T$-round regret, where the denominator is the difference between the constraint threshold and the cost of a known feasible action. We further specialize our results to multi-armed bandits and 
propose a computationally efficient algorithm for this setting. We prove a regret bound of $\widetilde{\mathcal{O}}(\frac{\sqrt{KT}}{\tau - c_0})$ for this algorithm in $K$-armed bandits, which is a $\sqrt{K}$ improvement over the regret bound we obtain by simply casting multi-armed bandits as an instance of contextual linear bandits and using the regret bound of OPLB. We also prove a lower-bound for the problem studied in the paper and provide simulations to validate our theoretical results.  




\end{abstract}

\section{Introduction}
\label{sec:intro}

A {\em multi-armed bandit} (MAB)~\citep{lai85asymptotically,auer02finitetime,lattimore2018bandit} is an online learning problem in which the agent acts by pulling arms. After an arm is pulled, the agent receives its {\em stochastic reward}. The goal of the agent is to maximize its expected cumulative reward without knowledge of the arms' distributions. To achieve this goal, the agent has to balance its {\em exploration} and {\em exploitation}: to decide when to {\em explore} and learn about the arms, and when to {\em exploit} and pull the arm with the highest estimated reward thus far. A \emph{stochastic linear bandit}~\citep{dani08stochastic,rusmevichientong10linearly,abbasi2011improved} is a generalization of MAB to the setting where each of (possibly) infinitely many arms is associated with a feature vector. The mean reward of an arm is the dot product of its feature vector and an unknown parameter vector, which is shared by all the arms. This formulation contains time-varying action (arm) sets and feature vectors, and thus, includes the {\em linear contextual bandit} setting. These models capture many practical applications spanning clinical trials~\citep{Villar15MA}, recommendation systems~\citep{Li10CB,balakrishnan2018using}, wireless networks~\citep{maghsudi2016multi}, sensors~\citep{washburn2008application}, and strategy games \citep{ontanon2013combinatorial}. 
The most popular exploration strategies in stochastic bandits are {\em optimism in the face of uncertainty} (OFU)~\citep{auer02finitetime} and {\em Thompson sampling} (TS)~\citep{thompson33likelihood,agrawal13further,russo18tutorial} that are relatively well understood in both multi-armed and linear bandits~\citep{dani08stochastic,abbasi2011improved,agrawal13thompson,lattimore2018bandit}.

In many practical problems, the agent requires to satisfy certain operational constraints while maximizing its cumulative reward. Depending on the form of the constraints, several {\em constrained stochastic bandit} settings have been formulated and analyzed. One such setting is what is known as {\em knapsack bandits}. In this setting, pulling each arm, in addition to producing a reward signal, results in a random consumption of a global budget, and the goal is to maximize the cumulative reward before the budget is fully consumed (e.g.,~\citealt{badanidiyuru2013bandits,badanidiyuru2014resourceful,agrawal2014bandits,wu2015algorithms,agrawal2016linear}). Another such setting is referred to as {\em conservative bandits}. In this setting, there is a baseline arm or policy, and the agent, in addition to maximizing its cumulative reward, should ensure that at each round, the difference between its cumulative reward and that of the baseline remains below a predefined fraction of the baseline cumulative reward~\citep{wu2016conservative,kazerouni2017conservative,Garcelon20IA}. In these two settings, the constraint applies to a cumulative quantity (budget consumption or reward) over the entire run of the algorithm. Thus, the set of feasible actions at each round is a function of the history of the algorithm. 

Another constrained bandit setting is where each arm is associated with two (unknown) distributions, generating reward and cost signals. The goal is to maximize the cumulative reward, while making sure that with high probability, the expected cost of the arm pulled at each round is below a certain threshold. Here the constraint is stage-wise, and unlike the last two settings, is independent of the history.~\citet{amani2019linear} and~\citet{Moradipari19SL} have recently studied this setting for linear bandits and derived and analyzed explore-exploit~\citep{amani2019linear} and Thompson sampling~\citep{Moradipari19SL} algorithms for it. 

This setting is the closest to the one we study in this paper. In our setting, we also assume two distributions for each arm, one for reward and for cost. At each round the agent constructs a policy according to which it takes its action. The goal of the agent is to produce a sequence of policies with maximum expected cumulative reward, while making sure that the expected cost of the constructed policy (not the pulled arm) at each round is below a certain threshold. This is a linear constraint and can be easily extended to more constraints by having more cost distributions associated to each arm, one per each constraint. Compared to the previous setting, our constraint is more relaxed (from high-probability to expectation), and as a result, it would be possible for us to obtain a solution with larger expected cumulative reward. We will have a detailed discussion on the relationship between these two settings and the similarities and differences of our results with those reported in~\citet{amani2019linear} and~\citet{Moradipari19SL} in Section~\ref{sec:related-work}.

In this paper, we study the above setting for contextual linear bandits. After defining the setting in Section~\ref{sec:setting}, we propose an upper-confidence bound (UCB) algorithm for it, called {\em optimistic pessimistic linear bandit} (OPLB), in Section~\ref{sec:algo}. We prove an $\widetilde{\mathcal{O}}(\frac{d\sqrt{T}}{\tau-c_0})$ bound on the $T$-round regret of OPLB in Section~\ref{section::lin_opt_pess_analysis}, where $d$ is the action dimension and $\tau-c_0$ is the difference between the constraint threshold and the cost of a known feasible action. The action set considered in our contextual linear bandit setting is general enough to include MAB. However, in Section~\ref{sec:constrained-MAB}, we further specialize our results to MAB and propose a computationally efficient algorithm for this setting, called {\em optimistic pessimistic bandit} (OPB). We show that in the MAB case, there always exists a feasible optimal policy with probability mass on at most $m+1$ arms, where $m$ is the number of linear constraints. This property plays an important role in the computational efficiency of OPB. We prove a regret bound of $\widetilde{\mathcal{O}}(\frac{\sqrt{KT}}{\tau - c_0})$ for OPB in $K$-armed bandits, which is a $\sqrt{K}$ improvement over the regret bound we obtain by simply casting MAB as an instance of contextual linear bandit and using the regret bound of OPLB. We also prove a lower-bound for the problem studied in the paper and provide simulations to validate our theoretical results.

\section{Problem Formulation}
\label{sec:setting}

We adopt the following notation. The set $\{1,\ldots,T\}$ is denoted by $[T]$. We represent the set of distributions with support over a compact set $\mathcal S$ by $\Delta_{\mathcal S}$. We denote by $\langle x,y \rangle:=x^\top y\in\mathbb R$, the inner product of two vectors $x,y\in\mathbb R^d$, and by $\|x\|:=\sqrt{x^\top x}$, the $\ell_2$-norm of vector $x$. 

The setting we study in this paper is {\em contextual linear bandit} with {\em linear constraints}. In each round $t$, the agent is given an decision set $\mathcal A_t\subset{\mathbb R}^d$ from which it has to choose an action $x_t$. Upon taking action $x_t\in\mathcal A$, it observes a pair $(r_t,c_t)$, where $r_t=\langle x_t,\theta_*\rangle + \xi^r_t$ and $c_t=\langle x_t,\mu_*\rangle + \xi^c_t$ are the reward and cost signals, respectively. In the reward and cost definitions, $\theta_*\in{\mathbb R}^d$ and $\mu_*\in{\mathbb R}^d$ are the unknown {\em reward} and {\em cost parameters}, and $\xi^r_t$ and $\xi^c_t$ are reward and cost noise, satisfying conditions that will be specified soon. The agent selects its action $x_t\in\mathcal A_t$ in each round $t$ according to its policy $\pi_t\in\Delta_{\mathcal A_t}$ at that round, i.e.,~$x_t\sim\pi_t$. 

The goal of the agent is to produce a sequence of policies $\{\pi_t\}_{t=1}^T$ with maximum {\em expected cumulative reward} over the course of $T$ rounds, while satisfying the {\bf\em linear constraint} 
\begin{equation}
\label{eq:constraint}
\mathbb E_{x\sim\pi_t}[\langle x,\mu_* \rangle]\leq \tau,\quad\forall t\in[T], \qquad (\tau\geq 0 \;\; \text{is referred to as the \em constraint threshold}). 
\end{equation}
Thus, the policy $\pi_t$ selected by the agent in each round $t\in [T]$ should belong to the set of {\em feasible policies} over the action set $\mathcal A_t$, i.e.,~$\Pi_t=\{\pi\in\Delta_{\mathcal A_t}:\mathbb E_{x\sim\pi}[\langle x,\mu_* \rangle]\leq \tau\}$. Maximizing the expected cumulative reward in $T$ rounds is equivalent to minimizing the $T$-round {\em constrained pseudo-regret},\footnote{In the rest of the paper, we simply refer to the $T$-round constrained pseudo-regret $\mathcal R_\Pi(T)$ as $T$-round regret.}
%
\begin{equation}
\label{eq:regret}
\mathcal R_\Pi(T) = \sum_{t=1}^T \mathbb E_{x\sim\pi_t^*}[\langle x,\theta_* \rangle] - E_{x\sim\pi_t}[\langle x,\theta_* \rangle], 
\end{equation}
%
where $\pi_t,\pi^*_t\in\Pi_t\;\forall t\in[T]$ and $\pi^*_t$ is the {\em optimal feasible} policy at round $t$, i.e.,~$\pi^*_t\in\max_{\pi\in\Pi_t}\mathbb E_{x\sim\pi}[\langle x,\theta_* \rangle]$. The terms $\mathbb E_{x\sim\pi}[\langle x,\theta_* \rangle]$ and $\mathbb E_{x\sim\pi}[\langle x,\mu_* \rangle]$ in~\eqref{eq:constraint} and~\eqref{eq:regret} are the expected reward and cost of policy $\pi$, respectively. Thus, a feasible policy is the one whose expected cost is below the constraint threshold $\tau$, and the optimal feasible policy is a feasible policy with maximum expected reward. We use the shorthand notations $x_\pi := \mathbb{E}_{x \sim \pi}[x]$, $r_\pi:=\mathbb E_{x\sim\pi}[\langle x,\theta_* \rangle]$ and $c_\pi:=\mathbb E_{x\sim\pi}[\langle x,\mu_* \rangle]$ for the expected action, reward and cost of a policy $\pi$. With these shorthand notations, we may write the $T$-round pseudo-regret as $\mathcal R_\Pi(T)=\sum_{t=1}^T r_{\pi^*_t} - r_{\pi_t}$.

We make the following assumptions for our setting. The first four assumptions are standard in linear bandits. The fifth one is necessary to guarantee constraint satisfaction ({\em safety}). 

\begin{assumption}
\label{ass:noise-sub-gaussian}
For all $t\in[T]$, the reward and cost noise random variables $\xi_t^r$ and $\xi_t^c$ are conditionally $R$-sub-Gaussian, i.e.,
%
\begin{align*}
&\mathbb{E}[\xi_t^r \mid \mathcal{F}_{t-1}] = 0, \qquad \mathbb{E}[\exp(\alpha \xi_t^r) \mid \mathcal{F}_{t-1}] \leq \exp(\alpha^2 R^2/2),\;\;\;\forall \alpha\in\mathbb R, \\
&\mathbb{E}[\xi_t^c \mid \mathcal{F}_{t-1}] = 0, \qquad \mathbb{E}[\exp(\alpha \xi_t^c) \mid \mathcal{F}_{t-1}] \leq \exp(\alpha^2 R^2/2),\;\;\;\forall \alpha\in\mathbb R,
\end{align*}
%
where $\mathcal F_t$ is the filtration that includes all the events $(x_{1:t+1},\xi^r_{1:t},\xi^c_{1:t})$ until the end of round $t$.
\end{assumption}
\begin{assumption}
\label{ass:bounded-reward-cost-param}
There is a known constant $S > 0$, such that $\| \theta_*\| \leq S$ and $\| \mu_*\| \leq S$.\footnote{The choice of the same upper-bounds for $\theta_*$ and $\mu_*$ is just for simplicity.}
\end{assumption}
\begin{assumption}
\label{ass:bounded-action}
The $\ell_2$-norm of all actions is bounded, i.e.,~$\max_{t\in[T]}\max_{x \in \mathcal{A}_t}\| x \|  \leq L$. 
\end{assumption}
\begin{assumption}
\label{ass:bounded-mean-reward-cost}
For all $t\in[T]$ and $x \in \mathcal{A}_t$, the mean rewards and costs are bounded, i.e.,~$\langle x, \theta_* \rangle \in [0,1]$ and $\langle x, \mu_* \rangle \in [0,1]$.
\end{assumption}
\begin{assumption}
\label{ass:safe-action}
There is a known safe action $x_0\in\mathcal A_t,\;\forall t\in[T]$ with known 
cost $c_0$, i.e.,~$\langle x_0, \mu_* \rangle = c_0 < \tau$. We will show how the assumption of knowing $c_0$ can be relaxed later in the paper.
\end{assumption}
{\bf Notation:} We conclude this section with introducing another set of notations that will be used in the rest of the paper. We define the normalized safe action as $e_0:=x_0/\|x_0\|$ and the span of the safe action as $\mathcal{V}_o := \mathrm{span}(x_0) =\{\eta x_0 : \eta\in\mathbb R\}$. We denote by $\mathcal{V}_o^{\perp}$, the orthogonal complement of $\mathcal{V}_o$, i.e.,~$\mathcal{V}_o^{\perp}=\{x\in\mathbb R^d : \langle x,y\rangle=0,\;\forall y\in\mathcal{V}_o\}$.\footnote{In the case of $x_0 = \mathbf{0}\in\mathbb R^d$, we define $\mathcal{V}_o$ as the empty subspace and $\mathcal{V}_o^\perp$ as the whole $\mathbb{R}^d$.} We define the projection of a vector $x\in\mathbb R^d$ into the sub-space $\mathcal{V}_o$, as $x^o:=\langle x,e_0\rangle e_0$, and into the sub-space $\mathcal{V}_o^{\perp}$, as $x^{o,\perp} := x - x^o$. We also define the projection of a policy $\pi$ into $\mathcal{V}_o$ and $\mathcal{V}_o^{\perp}$, as $x_{\pi}^o := \mathbb{E}_{x \sim \pi}[x^o]$ and $x_{\pi}^{o, \perp} := \mathbb{E}_{x \sim \pi}[x^{o, \perp}]$. 


\section{Optimistic-Pessimistic Linear Bandit Algorithm}
\label{sec:algo}

In this section, we propose an algorithm, called {\em optimistic-pessimistic linear bandit} (OPLB), whose pseudo-code is shown in Algorithm~\ref{alg:optimistic-pessimistic-LB}. Our OPLB algorithm balances a pessimistic assessment of the set of available policies, while acting optimistically within this set. Our principal innovation is the use of confidence intervals with asymmetric radii, proportional to $\alpha_r$ and $\alpha_c$, for the reward and cost signals. This will prove crucial in the regret analysis of the algorithm. 
 
{\centering
\begin{minipage}{.9\linewidth}
    \begin{algorithm}[H]
    \textbf{Input:} \begin{small}Horizon $T$, Confidence Parameter $\delta$, Regularization Parameter $\lambda$, Constants $\alpha_r, \alpha_c \geq 1$\end{small} \\
    \For{$t=1,\ldots,T$}{
    1. Compute RLS estimates $\;\widehat{\theta}_t\;$ and $\;\widehat{\mu}_t^{o,\perp}$ \hfill {\em (see Eqs.~\ref{eq:param-est} to~\ref{eq:proj-est})} \\
    2. Construct sets $\;\mathcal{C}_t^r(\alpha_r)\;$ and $\;\mathcal{C}_t^c(\alpha_c)$ \hfill {\em (see Eq.~\ref{eq:confidence-ellipsoids})} \\
    3. Observe $\;\mathcal{A}_t\;$ and construct the (estimated) safe policy set $\;\Pi_t$ \hfill {\em (see Eq.~\ref{eq:safe-policy-set})} \\
    4. Compute policy $\;(\pi_t,\widetilde{\theta}_t) = \argmax_{\pi \in \Pi_t,\;\theta \in \mathcal{C}_t^r(\alpha_r)} \mathbb{E}_{x \sim \pi}[ \langle x, \theta \rangle ]$\\
    5. Take action $\;x_t \sim \pi_t\;$ and observe reward and cost $\;(r_t,c_t)$
     }    
     \caption{Optimistic-Pessimistic Linear Bandit (OPLB)}
    \label{alg:optimistic-pessimistic-LB}
    \end{algorithm}
\end{minipage}
}

{\bf\em Line~1 of OPLB:} At each round $t\in[T]$, given the actions $\{x_s\}_{s=1}^{t-1}$, rewards $\{r_s\}_{s=1}^{t-1}$, and costs $\{c_s\}_{s=1}^{t-1}$ observed until the end of round $t-1$, OPLB first computes the $\ell_2$-regularized least-squares (RLS) estimates of $\theta_*$ and $\mu_*^{o,\perp}$ (projection of the cost parameter $\mu_*$ into the sub-space $\mathcal V_o^\perp$) as 

\vspace{-0.15in}
\begin{small}
\begin{equation}
\label{eq:param-est}
\widehat{\theta}_t = \Sigma_t^{-1} \sum_{s=1}^{t-1}r_sx_s, \qquad\quad \widehat{\mu}_t^{o,\perp} = (\Sigma_t^{o,\perp})^{-1}\sum_{s=1}^{t-1}c_s^{o,\perp}x_s^{o,\perp},    
\end{equation}
\end{small}
\vspace{-0.15in}

where $\lambda>0$ is the regularization parameter, and 

\vspace{-0.15in}
\begin{small}
\begin{align}
\label{eq:Sigmas}
&\Sigma_t = \lambda I + \sum_{s=1}^{t-1}x_sx_s^\top, \qquad\quad \Sigma_t^{o,\perp} = \lambda I_{\mathcal V_o^\perp} + \sum_{s=1}^{t-1}x_s^{o,\perp}(x_s^{o,\perp})^\top, \\ 
&c^{o,\perp}_t = c_t - \frac{\langle x_t,e_0\rangle}{\|x_0\|}c_0, \qquad\quad I_{\mathcal V_o^\perp} = I_{d\times d} - \frac{1}{\|x_0\|^2}x_0x_0^\top. 
\label{eq:proj-est}
\end{align}
\end{small}
\vspace{-0.15in}

In~\eqref{eq:Sigmas}, $\Sigma_t$ and $\Sigma_t^{o,\perp}$ are the Gram matrices of actions and projection of actions into the sub-space $\mathcal V_o^\perp$. Note that $\Sigma_t^{o, \perp}$ is a rank deficient matrix, but with abuse of notation, we use $(\Sigma_t^{o, \perp})^{-1}$ to denote its pseudo-inverse throughout the paper. In~\eqref{eq:proj-est}, $I_{\mathcal V_o^\perp}$ is the projection of the identity matrix, $I$, into $\mathcal V_o^\perp$, and $c^{o,\perp}_t$ is the noisy projection of the cost $c_t$ incurred by taking action $x_t$ into $\mathcal V_o^\perp$, i.e.,\footnote{In the derivation of~\eqref{eq:c-tilde-derivation}, we use the fact that $\langle x_t,\mu_*\rangle = \langle x_t^o+x_t^{o,\perp},\mu_*^o+\mu_*^{o,\perp}\rangle = \langle x_t^o,\mu_*^o\rangle + \langle x_t^{o,\perp},\mu_*^{o,\perp}\rangle$.}

\vspace{-0.15in}
\begin{small}
\begin{equation}
\label{eq:c-tilde-derivation}
c^{o,\perp}_t = \langle x_t^{o,\perp},\mu_*^{o,\perp}\rangle + \xi_t^c = \langle x_t,\mu_* \rangle - \langle x_t^o,\mu_*^o \rangle + \xi_t^c = c_t - \langle x_t^o,\mu_*^o \rangle = c_t - \frac{\langle x_t,e_0\rangle}{\|x_0\|}c_0.       
\end{equation}
\end{small}
\vspace{-0.125in}

{\bf\em Line~2:} Using the RLS estimates $\widehat{\theta}_t$ and $\widehat{\mu}_t^{o,\perp}$ in~\eqref{eq:param-est}, OPLB constructs the two {\em confidence sets} 

\vspace{-0.15in}
\begin{small}
\begin{equation}
\label{eq:confidence-ellipsoids}
\mathcal{C}_t^r(\alpha_r) = \big\{\theta \in \mathbb{R}^d: \|\theta -\widehat{\theta}_t\|_{\Sigma_t} \leq \alpha_r \beta_t(\delta, d)\big\}, \quad \mathcal{C}_t^c(\alpha_c) = \big\{ \mu \in \mathcal{V}_o^{\perp} : \| \mu - \widehat{\mu}^{o, \perp}_t\|_{\Sigma_t^{o,\perp}} \leq \alpha_c \beta_t(\delta, d-1)\big\},
\end{equation}
\end{small}
\vspace{-0.15in}

where $\alpha_r,\alpha_c\geq 1$ and $\beta_t(\delta,d)$ in the radii of these {\em confidence ellipsoids} is defined by the following theorem, originally proved in~\cite{abbasi2011improved}. 

\begin{theorem}\label{theorem::yasin_theorem}[Thm.~2 in~\citealt{abbasi2011improved}]
Let Assumptions~\ref{ass:noise-sub-gaussian} and~\ref{ass:bounded-reward-cost-param} hold, $\widehat{\theta}_t$, $\widehat{\mu}_t^{o,\perp}$, $\Sigma_t$, and $\Sigma_t^{o,\perp}$ defined by~\eqref{eq:param-est} and~\eqref{eq:Sigmas}, and $\mathcal{C}_t^r(\cdot)$ and $\mathcal{C}_t^c(\cdot)$ defined by~\eqref{eq:confidence-ellipsoids}. Then, for a fixed $\delta \in (0,1)$ and

\vspace{-0.1in}
\begin{small}
\begin{equation}
\label{eq:ellipsoid-radius}
\beta_t(\delta,d) = R \sqrt{d\log\Big(\frac{1 + (t-1)L^2/\lambda}{\delta}\Big)} + \sqrt{\lambda}\;S,
\end{equation}
\end{small}
\vspace{-0.125in}

with probability at least $1-\delta$ and for all $t \geq 1$, it holds that $\theta_*\in\mathcal{C}_t^r(1)$ and $\mu_*^{o,\perp}\in\mathcal{C}_t^c(1)$.
\end{theorem}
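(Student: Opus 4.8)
Since the statement is Theorem~2 of~\cite{abbasi2011improved} instantiated once for the reward regression and once for the cost regression, the plan is to give the self-normalized confidence argument in the $d$-dimensional reward case and then observe that the cost case is the same argument carried out inside the $(d-1)$-dimensional subspace $\mathcal V_o^\perp$. I would first reduce the reward claim to a martingale tail bound. Writing $r_s = \langle x_s,\theta_*\rangle + \xi_s^r$ and substituting into~\eqref{eq:param-est} gives the standard decomposition $\widehat\theta_t - \theta_* = \Sigma_t^{-1}S_t - \lambda\,\Sigma_t^{-1}\theta_*$, where $S_t := \sum_{s=1}^{t-1}\xi_s^r x_s$. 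Measuring in the $\Sigma_t$-weighted norm and using the triangle inequality yields $\|\widehat\theta_t-\theta_*\|_{\Sigma_t} \le \|S_t\|_{\Sigma_t^{-1}} + \lambda\sqrt{\theta_*^\top\Sigma_t^{-1}\theta_*}$, and since $\Sigma_t \succeq \lambda I$ the bias term is at most $\sqrt{\lambda}\,\|\theta_*\| \le \sqrt{\lambda}\,S$ by Assumption~\ref{ass:bounded-reward-cost-param}. Everything then rests on controlling the self-normalized term $\|S_t\|_{\Sigma_t^{-1}}$.

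The core step is the self-normalized (method-of-mixtures) bound. For each fixed $v\in\mathbb R^d$ I would define $M_t(v) = \exp\!\big(\sum_{s=1}^{t-1}(\langle v,x_s\rangle \xi_s^r/R - \tfrac12\langle v,x_s\rangle^2)\big)$; the conditional sub-Gaussianity in Assumption~\ref{ass:noise-sub-gaussian} makes $M_t(v)$ a nonnegative supermartingale adapted to $\mathcal F_t$ with $\mathbb E[M_t(v)]\le 1$. Integrating $M_t(v)$ against a Gaussian prior $v\sim\mathcal N(0,\lambda^{-1}I)$ preserves the supermartingale property and, after the explicit Gaussian integral, produces a quantity proportional to $(\det(\lambda I)/\det\Sigma_t)^{1/2}\exp(\|S_t\|_{\Sigma_t^{-1}}^2/2R^2)$. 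A maximal inequality for nonnegative supermartingales, combined with a stopping-time construction to make the conclusion hold simultaneously for all $t\ge 1$, then gives $\|S_t\|_{\Sigma_t^{-1}}^2 \le 2R^2\log\big(\det(\Sigma_t)^{1/2}\det(\lambda I)^{-1/2}/\delta\big)$ with probability at least $1-\delta$. I expect this to be the only genuinely delicate part: the mixture construction, and especially the stopping-time argument delivering uniformity over $t$ rather than a fixed-$t$ statement.

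To finish, I would bound the determinant: by Assumption~\ref{ass:bounded-action} each eigenvalue of $\Sigma_t$ is at most $\lambda + (t-1)L^2$, so $\det\Sigma_t/\det(\lambda I)\le (1+(t-1)L^2/\lambda)^d$, which turns the log-determinant into $d\log(1+(t-1)L^2/\lambda)$; folding the $\log(1/\delta)$ inside and applying $\sqrt{a+b}\le\sqrt a+\sqrt b$ to reattach the $\sqrt\lambda S$ bias recovers exactly the radius $\beta_t(\delta,d)$ of~\eqref{eq:ellipsoid-radius}, giving $\theta_*\in\mathcal C_t^r(1)$. The cost claim is the same argument restricted to $\mathcal V_o^\perp$: by~\eqref{eq:c-tilde-derivation} we have $c_s^{o,\perp} = \langle x_s^{o,\perp},\mu_*^{o,\perp}\rangle + \xi_s^c$, a linear regression with Gram matrix $\Sigma_t^{o,\perp}$ supported on the $(d-1)$-dimensional $\mathcal V_o^\perp$ and parameter satisfying $\|\mu_*^{o,\perp}\|\le\|\mu_*\|\le S$ (projections are non-expansive); repeating the determinant bound inside this subspace replaces $d$ by $d-1$ and yields $\mu_*^{o,\perp}\in\mathcal C_t^c(1)$ with radius $\beta_t(\delta,d-1)$. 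I would obtain the two inclusions jointly via a union bound over the reward and cost noise (absorbing the resulting constant into the logarithm), which is the one place where I would double-check the bookkeeping against the single $\delta$ appearing in both radii.
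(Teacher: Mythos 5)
Your reconstruction is correct and is essentially the argument the paper relies on: the paper gives no proof of its own here, citing Theorem~2 of Abbasi-Yadkori et al.\ (2011), whose proof is exactly your self-normalized method-of-mixtures bound (supermartingale, Gaussian mixing, stopping-time uniformity over $t$) plus the determinant estimate, instantiated a second time inside the $(d-1)$-dimensional subspace $\mathcal{V}_o^{\perp}$ for the projected cost regression, just as you describe. Your closing caveat is well placed: making both inclusions hold jointly with probability $1-\delta$ formally requires running each instance at level $\delta/2$ (a constant absorbed into the logarithm), a bookkeeping point the paper glosses over in stating the joint event.
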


Since $\alpha_r,\alpha_c\geq 1$, for all rounds $t\in[T]$, the sets $\mathcal{C}_t^r(\alpha_r)$ and $\mathcal{C}_t^c(\alpha_c)$ also contain $\theta_*$, the reward parameter, and $\mu_*^{o,\perp}$, the projection of the cost parameter into $\mathcal V_o^\perp$, respectively, with high probability. 

Given these confidence sets, we define the {\em optimistic reward} and {\em pessimistic cost} of any policy $\pi$ in round $t$ as 

\vspace{-0.225in}
\begin{small}
\begin{equation}
\label{eq:max-reward-cost}
\widetilde{r}_{\pi,t} := \max_{\theta\in\mathcal{C}_t^r(\alpha_r)}   \mathbb{E}_{x \sim \pi}[\langle x,\theta\rangle], \qquad\qquad
\widetilde{c}_{\pi,t} := \frac{\langle x_{\pi}^o,e_0 \rangle c_0}{\| x_0 \|} + \max_{\mu\in\mathcal{C}_t^c(\alpha_c)} \mathbb{E}_{x\sim\pi}[\langle x,\mu \rangle].
\end{equation}
\end{small}
\vspace{-0.075in}

\begin{proposition}
\label{prop:optimistic-reward-pessimistic-cost}
We may write~\eqref{eq:max-reward-cost} in closed-form as \hfill {\em (proof in Appendix~\ref{subsec:proof-prop-optimistic-reward})}

\vspace{-0.15in}
\begin{small}
\begin{align}
\label{eq:optimistic-reward-closed-form}
\widetilde{r}_{\pi,t} &= \langle x_\pi,\widehat{\theta}_t \rangle + \alpha_r \beta_t(\delta,d) \| x_\pi\|_{\Sigma_t^{-1}}, \\ \widetilde{c}_{\pi,t} &= \frac{\langle x_\pi^o, e_0\rangle c_0 }{\|x_0\|} + \langle x_\pi^{o, \perp}, \widehat{\mu}^{o, \perp}_t \rangle + \alpha_c \beta_t(\delta,d-1) \| x_{\pi}^{o, \perp}\|_{(\Sigma_t^{o, \perp})^{-1}}.
\label{eq:pessimistic-cost-closed-form}
\end{align}
\end{small}
\vspace{-0.15in}

\end{proposition}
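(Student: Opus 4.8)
The plan is to reduce each maximization to the maximization of a linear functional over a Euclidean ball, which is solved in closed form by the Cauchy--Schwarz (dual-norm) identity. The key preliminary observation is that expectation is linear, so for any fixed parameter the objective depends on $\pi$ only through the mean action $x_\pi = \E_{x\sim\pi}[x]$: concretely $\E_{x\sim\pi}[\langle x,\theta\rangle] = \langle x_\pi,\theta\rangle$ and $\E_{x\sim\pi}[\langle x,\mu\rangle] = \langle x_\pi,\mu\rangle$. This turns both optimization problems in~\eqref{eq:max-reward-cost} into the maximization of a linear objective over the confidence ellipsoids.

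For the reward, I would write a generic $\theta\in\mathcal{C}_t^r(\alpha_r)$ as $\theta = \widehat\theta_t + v$ with $\|v\|_{\Sigma_t}\le \alpha_r\beta_t(\delta,d)$, so that $\langle x_\pi,\theta\rangle = \langle x_\pi,\widehat\theta_t\rangle + \langle x_\pi, v\rangle$. It then remains to maximize the linear term $\langle x_\pi,v\rangle$ over the $\Sigma_t$-ball of radius $\alpha_r\beta_t(\delta,d)$. Writing $\langle x_\pi,v\rangle = \langle \Sigma_t^{-1/2}x_\pi,\ \Sigma_t^{1/2}v\rangle$ and applying Cauchy--Schwarz gives the upper bound $\|x_\pi\|_{\Sigma_t^{-1}}\,\|v\|_{\Sigma_t}$, attained by aligning $v$ with $\Sigma_t^{-1}x_\pi$; the maximum value is therefore $\alpha_r\beta_t(\delta,d)\,\|x_\pi\|_{\Sigma_t^{-1}}$, which yields~\eqref{eq:optimistic-reward-closed-form}.

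For the cost, the extra ingredient is that $\mu$ ranges over $\mathcal{C}_t^c(\alpha_c)\subset\mathcal{V}_o^\perp$. Since $x_\pi^o\in\mathcal{V}_o$ and $\mu\in\mathcal{V}_o^\perp$ are orthogonal, I would first note $\langle x_\pi,\mu\rangle = \langle x_\pi^{o,\perp},\mu\rangle$, so the $x_\pi^o$ component drops out of the maximization and contributes only through the known fixed term $\langle x_\pi^o,e_0\rangle c_0/\|x_0\|$ already present in~\eqref{eq:max-reward-cost}. Then, writing $\mu = \widehat\mu_t^{o,\perp}+w$ with $w\in\mathcal{V}_o^\perp$ and $\|w\|_{\Sigma_t^{o,\perp}}\le\alpha_c\beta_t(\delta,d-1)$, the same dual-norm computation applied on the subspace $\mathcal{V}_o^\perp$ gives maximum value $\alpha_c\beta_t(\delta,d-1)\,\|x_\pi^{o,\perp}\|_{(\Sigma_t^{o,\perp})^{-1}}$, which combined with the fixed term produces~\eqref{eq:pessimistic-cost-closed-form}.

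The one point that needs care --- and the main (mild) obstacle --- is that $\Sigma_t^{o,\perp}$ is rank deficient, so $(\Sigma_t^{o,\perp})^{-1}$ denotes a pseudo-inverse and the Cauchy--Schwarz step must be justified on the correct subspace. I would resolve this by observing that $\Sigma_t^{o,\perp} = \lambda I_{\mathcal{V}_o^\perp} + \sum_{s}x_s^{o,\perp}(x_s^{o,\perp})^\top$ vanishes on $\mathcal{V}_o$ and is positive definite when restricted to $\mathcal{V}_o^\perp$; hence its pseudo-inverse is the genuine inverse on $\mathcal{V}_o^\perp$ and zero on $\mathcal{V}_o$. Because both $x_\pi^{o,\perp}$ and the admissible directions $w$ lie in $\mathcal{V}_o^\perp$, the identity $\max_{\|w\|_A\le\rho}\langle u,w\rangle = \rho\|u\|_{A^{-1}}$ applies verbatim with $A$ the restriction of $\Sigma_t^{o,\perp}$ to $\mathcal{V}_o^\perp$, so $\|x_\pi^{o,\perp}\|_{(\Sigma_t^{o,\perp})^{-1}}$ is well defined. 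The maximizer $w^\ast\propto (\Sigma_t^{o,\perp})^{-1}x_\pi^{o,\perp}$ automatically stays in $\mathcal{V}_o^\perp$, so feasibility with respect to the constraint $\mu\in\mathcal{V}_o^\perp$ is preserved and the closed form is exact.
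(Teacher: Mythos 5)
Your proof is correct and follows essentially the same route as the paper's: write the parameter as the RLS estimate plus a perturbation in the confidence ball, apply Cauchy--Schwarz in the $\Sigma_t$-weighted geometry, and exhibit the aligned maximizer $v^*\propto \Sigma_t^{-1}x_\pi$ to show the bound is attained. The paper only writes out the reward case and declares the cost case analogous, so your explicit treatment of the pseudo-inverse on $\mathcal{V}_o^\perp$ is a welcome (and correct) elaboration of a step the paper leaves implicit, not a different argument.
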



{\bf\em Line~3:} After observing the action set $\mathcal A_t$, OPLB constructs its (estimated) feasible (safe) policy set

\vspace{-0.15in}
\begin{small}
\begin{equation}
\label{eq:safe-policy-set}
\Pi_t = \{\pi \in \Delta_{\mathcal A_t} : \widetilde{c}_{\pi,t} \leq \tau\}, 
\end{equation}
\end{small}
\vspace{-0.175in}

where $\widetilde{c}_{\pi,t}$ is the pessimistic cost of policy $\pi$ in round $t$ defined by~\eqref{eq:pessimistic-cost-closed-form}. Note that $\Pi_t$ is not empty since $\pi_0$, the policy that plays the safe action $x_0$ with probability (w.p.)~$1$, is always in $\Pi_t$. This is because $x_{\pi_0}^o = x_0$, $x_{\pi_0}^{o,\perp} =0$, and $\frac{\langle x_{\pi_0}^o, e_0\rangle c_0}{\| x_0 \|}  = c_0$. In the following proposition, whose proof is reported in Appendix~\ref{subsec:proof-safe-set}, we prove that all policies in $\Pi_t$ are feasible with high probability.

\begin{proposition}
\label{prop:safe-set}
With probability at least $1-\delta$, for all rounds $t\in[T]$, all policies in $\Pi_t$ are feasible. 
\end{proposition}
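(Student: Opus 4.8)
The plan is to show that the pessimistic cost $\widetilde{c}_{\pi,t}$ is, on a high-probability event, a valid \emph{upper} bound on the true cost $c_\pi=\langle x_\pi,\mu_*\rangle$ of every policy $\pi$; feasibility of $\Pi_t$ then follows immediately, since $\pi\in\Pi_t$ means $\widetilde{c}_{\pi,t}\le\tau$, which gives $c_\pi\le\widetilde{c}_{\pi,t}\le\tau$. I would work throughout on the event $\mathcal{E}$ of Theorem~\ref{theorem::yasin_theorem}, on which $\mu_*^{o,\perp}\in\mathcal{C}_t^c(1)$ holds simultaneously for all $t\ge 1$. This event has probability at least $1-\delta$, exactly the probability claimed in the proposition, and since it covers all rounds at once, establishing $c_\pi\le\widetilde{c}_{\pi,t}$ on $\mathcal{E}$ for each $t$ yields the ``for all $t\in[T]$'' statement for free.

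The first key step is an exact decomposition of the true cost along $\mathcal{V}_o$ and $\mathcal{V}_o^\perp$. Writing $\mu_*=\mu_*^o+\mu_*^{o,\perp}$ and $x_\pi=x_\pi^o+x_\pi^{o,\perp}$ and using orthogonality, I get $c_\pi=\langle x_\pi^o,\mu_*^o\rangle+\langle x_\pi^{o,\perp},\mu_*^{o,\perp}\rangle$. For the in-subspace term, Assumption~\ref{ass:safe-action} together with $x_0=\|x_0\|e_0$ gives $\langle\mu_*,e_0\rangle=c_0/\|x_0\|$, so that $\langle x_\pi^o,\mu_*^o\rangle=\langle x_\pi^o,e_0\rangle\,c_0/\|x_0\|$, which is precisely the first (deterministic, error-free) term of $\widetilde{c}_{\pi,t}$ in~\eqref{eq:pessimistic-cost-closed-form}. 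This is the crux of why the cost confidence set in~\eqref{eq:confidence-ellipsoids} only needs to cover the $\mathcal{V}_o^\perp$-component $\mu_*^{o,\perp}$: the $\mathcal{V}_o$-component of the cost is known exactly through $c_0$ and incurs no estimation error.

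The second step handles the orthogonal component. Subtracting $c_\pi$ from $\widetilde{c}_{\pi,t}$ and using the exact match of the in-subspace terms leaves $\widetilde{c}_{\pi,t}-c_\pi=\langle x_\pi^{o,\perp},\widehat{\mu}_t^{o,\perp}-\mu_*^{o,\perp}\rangle+\alpha_c\beta_t(\delta,d-1)\|x_\pi^{o,\perp}\|_{(\Sigma_t^{o,\perp})^{-1}}$. Applying the generalized Cauchy--Schwarz inequality in the $\Sigma_t^{o,\perp}$-geometry (valid because all three vectors lie in $\mathcal{V}_o^\perp$, where $\Sigma_t^{o,\perp}$ is positive definite thanks to the $\lambda I_{\mathcal{V}_o^\perp}$ regularizer), the first term is at least $-\|x_\pi^{o,\perp}\|_{(\Sigma_t^{o,\perp})^{-1}}\|\widehat{\mu}_t^{o,\perp}-\mu_*^{o,\perp}\|_{\Sigma_t^{o,\perp}}$. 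On $\mathcal{E}$ the membership $\mu_*^{o,\perp}\in\mathcal{C}_t^c(1)$ bounds $\|\widehat{\mu}_t^{o,\perp}-\mu_*^{o,\perp}\|_{\Sigma_t^{o,\perp}}\le\beta_t(\delta,d-1)\le\alpha_c\beta_t(\delta,d-1)$ since $\alpha_c\ge 1$, so the negative term is dominated by the inflation term and $\widetilde{c}_{\pi,t}-c_\pi\ge 0$.

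I expect the only real subtlety to be the bookkeeping with projections and the pseudo-inverse: one must check that $x_\pi^{o,\perp}$, $\widehat{\mu}_t^{o,\perp}$, and $\mu_*^{o,\perp}$ all live in $\mathcal{V}_o^\perp$, so that the weighted inner products and norms involving $(\Sigma_t^{o,\perp})^{-1}$ are well defined and the Cauchy--Schwarz step is legitimate on the reduced space. Once the error-free cancellation of the $\mathcal{V}_o$-term is in hand, the remainder is a one-line application of the confidence bound, so the geometric decomposition is where all the content sits.
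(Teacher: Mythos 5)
Your proof is correct and takes essentially the same route as the paper's own proof: the same orthogonal decomposition of $c_\pi$ along $\mathcal{V}_o$ and $\mathcal{V}_o^{\perp}$ with the exact, error-free cancellation of the $c_0$-term, followed by Cauchy--Schwarz in the $\Sigma_t^{o,\perp}$-geometry on the event $\mathcal{E}$ and the observation $\alpha_c \geq 1$, yielding $c_\pi \leq \widetilde{c}_{\pi,t} \leq \tau$ for every $\pi \in \Pi_t$ simultaneously over $t \in [T]$. Your write-up is in fact cleaner than the paper's (whose Cauchy--Schwarz display contains a typo), but there is no substantive difference in approach.
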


{\bf\em Line~4:} The agent computes its policy, $\pi_t$, as the one that is safe (belongs to $\Pi_t$) and attains the maximum optimistic reward. We refer to $\widetilde{\theta}_t$ as the {\em optimistic reward parameter}. Thus, we write the optimistic reward of policy $\pi_t$ as $\widetilde{r}_{\pi_t,t} = \langle x_{\pi_t},\widetilde{\theta}_t \rangle$.

{\bf\em Line~5:} Finally, the agent selects an action $x_t\sim\pi_t$ and observes the reward-cost pair $(r_t,c_t)$.

{\bf Computational Complexity of OPLB.} $\;$ As shown in Line~4 of Algorithm~\ref{alg:optimistic-pessimistic-LB} and in Proposition~\ref{prop:optimistic-reward-pessimistic-cost}, in each round $t$, OPLB solves the following optimization problem:

\vspace{-0.15in}
\begin{small}
\begin{align}
\label{equation::opt_problem_linear}
\max_{\pi \in \Delta_{\mathcal{A}_t}}&  \langle x_\pi, \widehat{\theta}_t \rangle + \alpha_r \beta_t(\delta,d) \| x_\pi\|_{\Sigma_t^{-1}} \\ 
\text{s.t. }&   \frac{\langle x_\pi^o, e_0\rangle c_0 }{\|x_0\|} + \langle x_\pi^{o, \perp}, \widehat{\mu}^{o, \perp}_t \rangle + \alpha_c \beta_t(\delta,d-1) \| x_{\pi}^{o, \perp}\|_{(\Sigma_t^{o, \perp})^{-1}} \leq \tau. \nonumber
\end{align}
\end{small}
\vspace{-0.15in}

However, solving~\eqref{equation::opt_problem_linear} can be challenging. The bottleneck is computing the safe policy set $\Pi_t$, which is the intersection between $\Delta_{\mathcal{A}_t}$ and the ellipsoidal constraint.

\begin{remark}
The main challenge in obtaining a regret bound for OPLB is to ensure that optimism holds in each round $t$, i.e.,~the solution $(\pi_t,\widetilde\theta_t)$ of~\eqref{equation::opt_problem_linear} satisfy $\widetilde{r}_{\pi_t,t} = \langle x_{\pi_t},\widetilde\theta_t \rangle \geq r_{\pi^*_t}$. This is not obvious, since the (estimated) safe policy set $\Pi_t$ may not contain the optimal policy $\pi_t^*$. Our main algorithmic innovation is the use of asymmetric confidence intervals $\mathcal{C}_t^r(\alpha_r)$ and $\mathcal{C}_t^c(\alpha_c)$ for $\theta_*$ and $\mu^{o, \perp}_*$, which allows us to guarantee optimism, by appropriately selecting the ratio $\gamma = \alpha_r/\alpha_c$. Of course, this comes at the cost of scaling the regret by a factor $\gamma$. As it will be shown in our analysis in Section~\ref{section::lin_opt_pess_analysis}, $\gamma$ depends on the inverse gap $1/(\tau-c_0)$, which indicates when $\tau - c_0$ is small (the cost of the safe arm is close to the constraint threshold), the agent will have a difficult time to identify a safe arm and to compete against the optimal feasible policy $\pi_t^*$. We will formalize this in Lemma \ref{lemma:linear_bandits_optimism}. 
\end{remark}

\begin{remark}
If the cost of the safe arm $c_0$ is unknown, we start by taking the safe action $x_0$ for $T_0$ rounds to produce a conservative estimate $\hat\delta_c$ of $\tau - c_0$ that satisfies $\hat\delta_c \geq \frac{\tau - c_0}{2}$. We warm start our estimators for $\theta_*$ and $\mu_*$ using the data collected by playing $x_0$. However, instead of estimating $\mu_*^{o, \perp}$, we build an estimator for $\mu_*$ over all its directions, including $e_0$, similar to what OPLB does for $\theta_*$. We then set $\frac{\alpha_r}{\alpha_c} = 1/\hat\delta_c$ and run Algorithm~\ref{alg:optimistic-pessimistic-LB} for rounds $t > T_0$ (see Appendix~\ref{section::safe_policy_value} for more details). 
\end{remark}


\section{Regret Analysis}
\label{section::lin_opt_pess_analysis}

In this section, we prove the following regret bound for OPLB (Algorithm~\ref{alg:optimistic-pessimistic-LB}). 

\begin{theorem}[Regret of OPLB]
\label{theorem::main_theorem_linear}
Let $\alpha_c = 1$ and $\alpha_r = \frac{ 2+\tau-c_0}{\tau-c_0}$. Then, with probability at least $1-2\delta$, the regret of OPLB satisfies 

\vspace{-0.15in}
\begin{small}
\begin{equation}
\label{eq:regret-OPLB}
\mathcal{R}_{\Pi}(T) \leq \frac{2L(\alpha_r+1)\beta_T(\delta, d)}{\sqrt{\lambda}}\sqrt{2T\log(1/\delta)} + (\alpha_r+1)\beta_T(\delta, d)\sqrt{2Td\log(1+\frac{TL^2}{\lambda})}.
\end{equation}
\end{small}
\vspace{-0.15in}
\end{theorem}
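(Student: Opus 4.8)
The plan is to split the argument into two largely independent parts: an \emph{optimism} guarantee (Lemma~\ref{lemma:linear_bandits_optimism}) stating that the optimistic reward of the played policy never underestimates the optimal feasible reward, and a standard ``sum of confidence widths'' bound controlling the accumulated estimation error of the played policies. First I would fix the good event on which the conclusion of Theorem~\ref{theorem::yasin_theorem} holds, i.e.~$\theta_*\in\mathcal{C}_t^r(1)$ and $\mu_*^{o,\perp}\in\mathcal{C}_t^c(1)$ for all $t$; this has probability at least $1-\delta$ and already yields Proposition~\ref{prop:safe-set}, so every $\pi\in\Pi_t$ is truly feasible. All of the optimism analysis then proceeds deterministically on this event.

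The heart of the proof is optimism, namely $\widetilde r_{\pi_t,t}\ge r_{\pi^*_t}$ for every $t$. Since $\pi^*_t$ need not lie in the pessimistic set $\Pi_t$, I would not argue directly but instead exhibit a feasible witness by mixing the optimal policy with the safe policy, $\pi'=(1-p)\pi^*_t+p\,\pi_0$. Because $x_0\in\mathcal{V}_o$ and hence $x_0^{o,\perp}=0$, the pessimistic cost collapses to a convex combination, $\widetilde c_{\pi',t}=(1-p)\,\widetilde c_{\pi^*_t,t}+p\,c_0$, so the choice $p=\frac{\widetilde c_{\pi^*_t,t}-\tau}{\widetilde c_{\pi^*_t,t}-c_0}$ places $\pi'$ exactly on the boundary of $\Pi_t$ (and $p=0$ if $\pi^*_t$ is already safe). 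It then remains to show $\widetilde r_{\pi',t}\ge r_{\pi^*_t}$, after which optimality of $\pi_t$ over $\Pi_t$ finishes the lemma. Using $\theta_*\in\mathcal C_t^r(1)\subseteq\mathcal C_t^r(\alpha_r)$ and Proposition~\ref{prop:optimistic-reward-pessimistic-cost} I would lower-bound $\widetilde r_{\pi',t}\ge r_{\pi'}+(\alpha_r-1)\beta_t(\delta,d)\|x_{\pi'}\|_{\Sigma_t^{-1}}$, and note $r_{\pi'}=r_{\pi^*_t}-p\,(r_{\pi^*_t}-r_{\pi_0})\ge r_{\pi^*_t}-p$ by Assumption~\ref{ass:bounded-mean-reward-cost}; the task reduces to covering the reward loss $p$ with the extra width. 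Two ingredients close this gap: (i) feasibility of $\pi^*_t$ gives $\widetilde c_{\pi^*_t,t}-\tau\le \widetilde c_{\pi^*_t,t}-c_{\pi^*_t}\le(1+\alpha_c)\beta_t(\delta,d-1)\|x^{o,\perp}_{\pi^*_t}\|_{(\Sigma_t^{o,\perp})^{-1}}$, which bounds $p$; and (ii) the norm-domination inequality $\|x_{\pi'}\|_{\Sigma_t^{-1}}\ge\|x^{o,\perp}_{\pi'}\|_{(\Sigma_t^{o,\perp})^{-1}}=(1-p)\|x^{o,\perp}_{\pi^*_t}\|_{(\Sigma_t^{o,\perp})^{-1}}$, which I would prove by partially minimizing the quadratic form $x\mapsto\|x\|^2_{\Sigma_t^{-1}}$ over the $\mathcal{V}_o$-component, using that $\Sigma_t^{o,\perp}$ is exactly the $\mathcal{V}_o^\perp$-diagonal block of $\Sigma_t$. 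Substituting (i)--(ii) and simplifying turns the desired inequality into $(\alpha_r-1)(\tau-c_0)\ge 1+\alpha_c$, which $\alpha_c=1$ and $\alpha_r=\frac{2+\tau-c_0}{\tau-c_0}$ satisfy with equality. This is precisely where the ratio $\gamma=\alpha_r/\alpha_c\sim 1/(\tau-c_0)$ enters.

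With optimism in hand, the per-round regret telescopes into widths: $r_{\pi^*_t}-r_{\pi_t}\le\widetilde r_{\pi_t,t}-r_{\pi_t}\le(\alpha_r+1)\beta_t(\delta,d)\|x_{\pi_t}\|_{\Sigma_t^{-1}}$ by Cauchy--Schwarz and $\theta_*\in\mathcal C_t^r(1)$. Because actions are sampled $x_t\sim\pi_t$ while $\Sigma_t$ is built from realized actions, I would first pass from expected to realized widths: Jensen gives $\|x_{\pi_t}\|_{\Sigma_t^{-1}}\le\mathbb E[\|x_t\|_{\Sigma_t^{-1}}\mid\mathcal F_{t-1}]$, and an Azuma--Hoeffding bound on the resulting martingale differences (each at most $2L/\sqrt\lambda$, since $\Sigma_t\succeq\lambda I$ and $\|x\|\le L$ by Assumption~\ref{ass:bounded-action}) controls $\sum_t\big(\mathbb E[\|x_t\|_{\Sigma_t^{-1}}\mid\mathcal F_{t-1}]-\|x_t\|_{\Sigma_t^{-1}}\big)$ by $\frac{2L}{\sqrt\lambda}\sqrt{2T\log(1/\delta)}$ on a further event of probability $1-\delta$. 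Finally I would apply Cauchy--Schwarz together with the elliptical-potential lemma of~\cite{abbasi2011improved} to obtain $\sum_t\|x_t\|_{\Sigma_t^{-1}}\le\sqrt{2Td\log(1+TL^2/\lambda)}$, and monotonicity of $\beta_t$ in $t$ to replace $\beta_t$ by $\beta_T$. Combining the two high-probability events by a union bound gives the stated $1-2\delta$ guarantee and the two-term bound in~\eqref{eq:regret-OPLB}.

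The main obstacle is the optimism step. Unlike unconstrained linear bandits, optimism is not automatic here because $\pi^*_t$ can be excluded from the estimated safe set $\Pi_t$; the entire argument hinges on the mixture witness, the exact linearization $\widetilde c_{\pi',t}=(1-p)\widetilde c_{\pi^*_t,t}+p\,c_0$ afforded by projecting the cost onto $\mathcal{V}_o^\perp$, and the norm-domination inequality that lets the \emph{reward} confidence width absorb a loss governed by the \emph{cost} confidence width. It is this interplay that forces the precise calibration of $\alpha_r$ against $\alpha_c$ and produces the $1/(\tau-c_0)$ factor in the regret; everything downstream is routine linear-bandit bookkeeping.
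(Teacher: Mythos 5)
Your proposal is correct and follows essentially the same route as the paper: the same good event from Theorem~\ref{theorem::yasin_theorem}, the same optimism lemma via mixing $\pi^*_t$ with the safe policy $\pi_0$ (your weight $p$ is just $1-\eta_t$ in the paper's parametrization, and your exact collapse $\widetilde c_{\pi',t}=(1-p)\widetilde c_{\pi^*_t,t}+p\,c_0$, feasibility bound, and norm-domination via the Schur-complement/block structure of $\Sigma_t$ reproduce Lemmas~\ref{lemma:inverse_norm_domination} and~\ref{lemma:linear_bandits_optimism}, arriving at the identical calibration $1+\alpha_c\leq(\tau-c_0)(\alpha_r-1)$), followed by the same elliptical-potential plus Azuma--Hoeffding bookkeeping and union bound. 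The only cosmetic deviations are that you account for the mixture's reward loss additively ($r_{\pi'}\geq r_{\pi^*_t}-p$) where the paper argues multiplicatively through $\eta_t$, and you handle the expected-versus-realized action gap by Jensen plus Azuma on the widths rather than the paper's martingale $\sum_t\langle x_{\pi_t}-x_t,\widetilde\theta_t-\theta_*\rangle$; both yield the same two-term bound in~\eqref{eq:regret-OPLB}.
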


We start the proof of Theorem~\ref{theorem::main_theorem_linear}, by defining the following event that holds w.p.~at least $1-\delta$: 

\vspace{-0.125in}
\begin{small}
\begin{equation}
\label{eq:high-prob-event}
\mathcal{E} := \big\{\|\widehat{\theta}_t - \theta_*\|_{\Sigma_t} \leq \beta_t(\delta, d) \; \wedge \; \|\widehat{\mu}_t^{o, \perp} - \mu_*^{o, \perp}\|_{\Sigma_t^{o,\perp}} \leq \beta_t(\delta, d-1), \; \forall t\in[T]\big\}.
\end{equation}
\end{small}
\vspace{-0.15in}

The regret $\mathcal{R}_{\Pi}(T)$ in~\eqref{eq:regret} can be decomposed as ($\widetilde{r}_{\pi_t,t}$ is the optimistic reward defined by Eq.~\ref{eq:max-reward-cost})

\vspace{-0.15in}
\begin{small}
\begin{equation}
\label{eq:regret-decomp}
\mathcal{R}_{\Pi}(T) = \underbrace{\sum_{t=1}^T r_{\pi_t^*} - \widetilde{r}_{\pi_t,t}}_{(\mathrm{I})} \; + \; \underbrace{\sum_{t=1}^T \widetilde{r}_{\pi_t,t} - r_{\pi_t}}_{(\mathrm{II})}.
\end{equation}
\end{small}
\vspace{-0.125in}

We first bound the term $(\mathrm{II})$ in~\eqref{eq:regret-decomp}. To bound $(\mathrm{II})$, we further decompose it as

\vspace{-0.15in}
\begin{small}
\begin{equation}
\label{eq:II-decomp}
(\mathrm{II}) = \underbrace{\sum_{t=1}^T  \langle x_{\pi_t}, \widetilde{\theta}_t \rangle - \langle x_t, \widetilde{\theta}_t \rangle}_{(\mathrm{III})} + \underbrace{\sum_{t=1}^T \langle x_t, \widetilde{\theta}_t \rangle - \langle x_t, \theta_* \rangle}_{(\mathrm{IV})} + \underbrace{\sum_{t=1}^T \langle x_t, \theta_* \rangle - \langle x_{\pi_t}, \theta_* \rangle }_{(\mathrm{V})}.
\end{equation}
\end{small}
\vspace{-0.125in}

In the following lemmas, we first bound the sum of $(\mathrm{III})$ and $(\mathrm{V})$ terms, and then bound $(\mathrm{IV})$.

\begin{lemma}
\label{lemma:bounding-3+5}
On the event $\mathcal{E}$ defined by~\eqref{eq:high-prob-event}, for any $\gamma \in (0, 1)$, w.p.~at least $1-\gamma$, we have

\vspace{-0.125in}
\begin{small}
\begin{equation*}
(\mathrm{III}) + (\mathrm{V}) \leq \frac{2L(\alpha_r + 1)\beta_T(\delta, d)}{\sqrt{\lambda}}\cdot\sqrt{2 T \log(1/\gamma)}\;.
\end{equation*}
\end{small}
\end{lemma}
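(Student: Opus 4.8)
The plan is to merge the two partial sums into a single martingale-difference sum and control it with the Azuma--Hoeffding inequality. First I would regroup the summands: writing $(\mathrm{V}) = -\sum_{t=1}^T \langle x_{\pi_t} - x_t, \theta_* \rangle$ and $(\mathrm{III}) = \sum_{t=1}^T \langle x_{\pi_t} - x_t, \widetilde{\theta}_t \rangle$, the two sums combine into
\begin{equation*}
(\mathrm{III}) + (\mathrm{V}) = \sum_{t=1}^T \langle x_{\pi_t} - x_t, \widetilde{\theta}_t - \theta_* \rangle =: \sum_{t=1}^T Z_t.
\end{equation*}

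The key structural point is that $\{Z_t\}$ is a martingale-difference sequence adapted to $\{\mathcal{F}_{t-1}\}$. The policy $\pi_t$—and therefore both its mean action $x_{\pi_t} = \mathbb{E}_{x \sim \pi_t}[x]$ and the optimistic parameter $\widetilde{\theta}_t$—is computed in Line~4 of OPLB from data available through round $t-1$, so these quantities are $\mathcal{F}_{t-1}$-measurable. Since the action is then drawn $x_t \sim \pi_t$ in Line~5, we have $\mathbb{E}[x_t \mid \mathcal{F}_{t-1}] = x_{\pi_t}$, whence $\mathbb{E}[Z_t \mid \mathcal{F}_{t-1}] = \langle x_{\pi_t} - x_{\pi_t}, \widetilde{\theta}_t - \theta_* \rangle = 0$.

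Next I would establish a uniform almost-sure bound $|Z_t| \le c$. Applying Cauchy--Schwarz in the $\Sigma_t$-geometry gives $|Z_t| \le \|x_{\pi_t} - x_t\|_{\Sigma_t^{-1}} \, \|\widetilde{\theta}_t - \theta_*\|_{\Sigma_t}$. For the parameter factor, on the event $\mathcal{E}$ we have $\|\widehat{\theta}_t - \theta_*\|_{\Sigma_t} \le \beta_t(\delta, d)$, and since $\widetilde{\theta}_t \in \mathcal{C}_t^r(\alpha_r)$ we have $\|\widetilde{\theta}_t - \widehat{\theta}_t\|_{\Sigma_t} \le \alpha_r \beta_t(\delta, d)$; the triangle inequality then yields $\|\widetilde{\theta}_t - \theta_*\|_{\Sigma_t} \le (\alpha_r + 1)\beta_t(\delta, d)$. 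For the action factor, $\Sigma_t \succeq \lambda I$ implies $\|x_{\pi_t} - x_t\|_{\Sigma_t^{-1}} \le \lambda^{-1/2}\|x_{\pi_t} - x_t\|$, and Assumption~\ref{ass:bounded-action} together with Jensen's inequality ($\|x_{\pi_t}\| \le \mathbb{E}_{x \sim \pi_t}[\|x\|] \le L$) gives $\|x_{\pi_t} - x_t\| \le 2L$. Using monotonicity of $\beta_t$ in $t$, these combine to $|Z_t| \le \frac{2L(\alpha_r + 1)\beta_T(\delta, d)}{\sqrt{\lambda}} =: c$ for every $t$.

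Finally I would invoke the one-sided Azuma--Hoeffding inequality: for a martingale-difference sequence with increments bounded by $c$, $\mathbb{P}\big[\sum_{t=1}^T Z_t \ge \epsilon\big] \le \exp\big(-\epsilon^2/(2Tc^2)\big)$, and choosing $\epsilon = c\sqrt{2T\log(1/\gamma)}$ reproduces the stated bound with probability at least $1 - \gamma$. The only genuinely delicate step is the adaptedness claim—confirming that $\widetilde{\theta}_t$ and $x_{\pi_t}$ are $\mathcal{F}_{t-1}$-measurable so that the conditional-mean-zero property holds—while the remaining estimates are a routine Cauchy--Schwarz-and-concentration computation. The one modeling choice worth flagging is measuring the action gap in the $\Sigma_t^{-1}$-norm rather than the Euclidean norm, which is precisely what makes the confidence-ellipsoid bound on the parameter gap directly applicable.
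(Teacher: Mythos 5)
Your proposal is correct and follows essentially the same route as the paper's own proof: rewriting $(\mathrm{III})+(\mathrm{V})$ as $\sum_{t=1}^T \langle x_{\pi_t}-x_t,\widetilde{\theta}_t-\theta_*\rangle$, bounding each increment via Cauchy--Schwarz in the $\Sigma_t$-geometry by $(\alpha_r+1)\beta_t(\delta,d)\cdot 2L/\sqrt{\lambda}$ on the event $\mathcal{E}$, and applying Azuma--Hoeffding with the monotonicity $\beta_t(\delta,d)\leq\beta_T(\delta,d)$. Your explicit verification of the martingale-difference property (that $x_{\pi_t}$ and $\widetilde{\theta}_t$ are $\mathcal{F}_{t-1}$-measurable and $\mathbb{E}[x_t\mid\mathcal{F}_{t-1}]=x_{\pi_t}$) is a detail the paper leaves implicit, but it is not a departure in approach.
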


\vspace{-0.175in}
\begin{proof}
We write $(\mathrm{III}) + (\mathrm{V}) = \sum_{t=1}^T \langle x_{\pi_t} - x_t, \widetilde{\theta}_t - \theta_* \rangle$. By Cauchy-Schwartz, we have $|\langle x_{\pi_t} - x_t, \widetilde{\theta}_t - \theta_* \rangle | \leq \| x_{\pi_t} - x_t \|_{\Sigma^{-1}_t} \| \widetilde{\theta}_t - \theta_*\|_{\Sigma_t}$. Since $\widetilde{\theta}_t\in \mathcal{C}_t^r(\alpha_r)$, on event $\mathcal E$, we have $\|\widetilde{\theta}_t - \theta_*\|_{\Sigma_t} \leq (\alpha_r+1) \beta_t(\delta, d)$. Also from the definition of $\Sigma_t$, we have $\Sigma_t \succeq \lambda I$, and thus, $\| x_{\pi_t} - x_t \|_{\Sigma^{-1}_t} \leq \| x_{\pi_t} - x_t\|/\sqrt{\lambda} \leq 2L/\sqrt{\lambda}$. Therefore, $Y_t =\sum_{s=1}^t \langle x_{\pi_s} - x_s, \widetilde{\theta}_s - \theta_* \rangle$ is a martingale sequence with $|Y_t - Y_{t-1}| \leq 2L(\alpha_r + 1)\beta_t(\delta, d)/\sqrt{\lambda}$, for $t\in [T]$. By the Azuma–Hoeffding inequality and since $\beta_t$ is an increasing function of $t$, i.e.,~$\beta_t(\delta, d) \leq \beta_T(\delta, d),\;\forall t\in[T]$, w.p.~at least  $1-\gamma$, we have $\mathbb{P}\big(Y_T \geq 2L(\alpha_r + 1) \beta_T(\delta, d)\sqrt{2T \log(1/\gamma)/\lambda}\big) \leq \gamma$, which concludes the proof. 
\end{proof}

\begin{lemma}
\label{lemma:bounding-4}
On event $\mathcal{E}$, we have $(\mathrm{IV}) \leq (\alpha_r+1) \beta_T(\delta,d)\sqrt{2Td\log\big(1+\frac{TL^2}{\lambda}\big)}$. \end{lemma}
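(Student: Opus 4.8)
The plan is to treat $(\mathrm{IV})$ as a standard self-normalized linear-bandit sum. First I would write $(\mathrm{IV}) = \sum_{t=1}^T \langle x_t, \widetilde{\theta}_t - \theta_* \rangle$ and apply Cauchy--Schwarz in the $\Sigma_t$-geometry to each summand, giving $\langle x_t, \widetilde{\theta}_t - \theta_* \rangle \leq \|x_t\|_{\Sigma_t^{-1}}\,\|\widetilde{\theta}_t - \theta_*\|_{\Sigma_t}$. The key algorithm-specific input is that the optimistic parameter $\widetilde{\theta}_t$ lies in $\mathcal{C}_t^r(\alpha_r)$ by construction (Line~4 of OPLB), so a triangle inequality together with event $\mathcal{E}$ yields $\|\widetilde{\theta}_t - \theta_*\|_{\Sigma_t} \leq \|\widetilde{\theta}_t - \widehat{\theta}_t\|_{\Sigma_t} + \|\widehat{\theta}_t - \theta_*\|_{\Sigma_t} \leq \alpha_r \beta_t(\delta,d) + \beta_t(\delta,d) = (\alpha_r+1)\beta_t(\delta,d)$, which is at most $(\alpha_r+1)\beta_T(\delta,d)$ since $\beta_t$ is nondecreasing in $t$. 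This factors out of the sum, leaving $(\mathrm{IV}) \leq (\alpha_r+1)\beta_T(\delta,d)\sum_{t=1}^T \|x_t\|_{\Sigma_t^{-1}}$.

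Next, a second Cauchy--Schwarz across the $T$ rounds gives $\sum_{t=1}^T \|x_t\|_{\Sigma_t^{-1}} \leq \sqrt{T\sum_{t=1}^T \|x_t\|_{\Sigma_t^{-1}}^2}$, so it remains to control the sum of squared self-normalized action norms. For this I would invoke the elliptical potential (log-determinant) lemma, $\sum_{t=1}^T \|x_t\|_{\Sigma_t^{-1}}^2 \leq 2\log\frac{\det \Sigma_{T+1}}{\det \Sigma_1}$, and then bound the determinant ratio via the determinant--trace inequality: using Assumption~\ref{ass:bounded-action} we have $\det \Sigma_{T+1} \leq (\lambda + TL^2/d)^d$ while $\det \Sigma_1 = \lambda^d$, hence $\sum_{t=1}^T \|x_t\|_{\Sigma_t^{-1}}^2 \leq 2d\log(1 + TL^2/\lambda)$. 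Substituting into the previous display produces exactly $(\mathrm{IV}) \leq (\alpha_r+1)\beta_T(\delta,d)\sqrt{2Td\log(1 + TL^2/\lambda)}$, as claimed.

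The only point requiring care is the applicability of the potential lemma, whose standard form uses $u \leq 2\log(1+u)$ termwise and therefore assumes $\|x_t\|_{\Sigma_t^{-1}} \leq 1$. I would discharge this by taking $\lambda \geq L^2$ (or, equivalently, by using the $\min(1,\cdot)$ version of the lemma), under which $\|x_t\|_{\Sigma_t^{-1}}^2 \leq \|x_t\|^2/\lambda \leq L^2/\lambda \leq 1$, since $\Sigma_t \succeq \lambda I$ and $\|x_t\| \leq L$. Beyond this bookkeeping the argument is entirely routine; the conceptual content is confined to the observation that $\widetilde{\theta}_t$ is guaranteed to sit in the reward confidence ellipsoid, which is what lets event $\mathcal{E}$ convert the deviation $\|\widetilde{\theta}_t - \theta_*\|_{\Sigma_t}$ into the uniform radius $(\alpha_r+1)\beta_T(\delta,d)$.
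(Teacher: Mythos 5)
Your proof is correct and follows essentially the same route as the paper's: Cauchy--Schwarz in the $\Sigma_t$-geometry, the radius bound $\|\widetilde{\theta}_t - \theta_*\|_{\Sigma_t} \leq (\alpha_r+1)\beta_t(\delta,d)$ from $\widetilde{\theta}_t \in \mathcal{C}_t^r(\alpha_r)$ combined with event $\mathcal{E}$, monotonicity of $\beta_t$, and the elliptical-potential bound on $\sum_{t=1}^T \|x_t\|_{\Sigma_t^{-1}}$. The only difference is presentational: the paper invokes Proposition~\ref{proposition:det_lemma} (stated under $\lambda \geq 1$) for the final step, while you re-derive it via the log-determinant lemma and the determinant--trace inequality, correctly flagging the $\|x_t\|_{\Sigma_t^{-1}} \leq 1$ bookkeeping that the citation absorbs.
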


We report the proof of Lemma~\ref{lemma:bounding-4} in Appendix~\ref{subsec:proof-Lemma-bounding-4}. After bounding all the terms in $(\mathrm{II})$, we now process the term $(\mathrm{I})$ in~\eqref{eq:regret-decomp}. Before stating the main result for this term  in Lemma~\ref{lemma:linear_bandits_optimism}, we need to prove the following lemma (proof in Appendix~\ref{subsec:proof-Lemma-inverse-norm-domination}). 
\begin{lemma}
\label{lemma:inverse_norm_domination}

For any policy $\pi$, the following inequality holds: 

\vspace{-0.125in}
\begin{small}
\begin{equation}
\label{equation::comparison_conf_bounds}
\| x_\pi^{o, \perp}\|_{(\Sigma_t^{o,\perp})^{-1}} \leq \| x_\pi \|_{\Sigma_t^{-1}}.
\end{equation}
\end{small}
\end{lemma}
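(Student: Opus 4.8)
The plan is to recognize the $\mathcal{V}_o^\perp$-projection as an orthogonal projector and reduce the claim to a variational (duality) comparison between a constrained and an unconstrained quadratic program.

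First I would set $P := I_{\mathcal{V}_o^\perp} = I - e_0 e_0^\top$, the orthogonal projection onto $\mathcal{V}_o^\perp$, and observe from the definitions in~\eqref{eq:proj-est} that every projected action satisfies $x^{o,\perp} = Px$, so that $x_\pi^{o,\perp} = P x_\pi$ by linearity of expectation. The first key step is the matrix identity $\Sigma_t^{o,\perp} = P\,\Sigma_t\,P$. This follows directly from~\eqref{eq:Sigmas}: using $P^2 = P$ and $P^\top = P$, one has $\lambda I_{\mathcal{V}_o^\perp} = \lambda P = P(\lambda I)P$ and $\sum_s x_s^{o,\perp}(x_s^{o,\perp})^\top = \sum_s (Px_s)(Px_s)^\top = P\big(\sum_s x_s x_s^\top\big)P$, so summing the two pieces gives $\Sigma_t^{o,\perp} = P\Sigma_t P$. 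Since $\Sigma_t \succeq \lambda I$ is positive definite, the restriction of $\Sigma_t^{o,\perp}$ to $\mathcal{V}_o^\perp = \mathrm{range}(P)$ is positive definite, and its pseudo-inverse acts as the inverse of this restriction on $\mathcal{V}_o^\perp$ and as zero on $\mathcal{V}_o$.

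The heart of the argument is a dual variational formula. For positive definite $A$ and any $v$, $v^\top A^{-1} v = \max_{w \in \mathbb{R}^d}\big(2\langle w, v\rangle - w^\top A w\big)$, with maximizer $w = A^{-1}v$. Writing $A = \Sigma_t$ and $v = x_\pi$, I would establish the analogous formula for the projected quantity: because $(\Sigma_t^{o,\perp})^{-1}$ is the inverse of $P\Sigma_t P$ on $\mathrm{range}(P)$ and $x_\pi^{o,\perp} = Px_\pi \in \mathrm{range}(P)$,
\[
\|x_\pi^{o,\perp}\|_{(\Sigma_t^{o,\perp})^{-1}}^2 = \max_{w:\, Pw = w}\big(2\langle w, Px_\pi\rangle - w^\top (P\Sigma_t P)\, w\big).
\]
For any $w$ with $Pw = w$ one has $\langle w, Px_\pi\rangle = \langle Pw, x_\pi\rangle = \langle w, x_\pi\rangle$ and $w^\top (P\Sigma_t P)w = (Pw)^\top \Sigma_t (Pw) = w^\top \Sigma_t w$, so the objective above coincides with the objective $2\langle w, x_\pi\rangle - w^\top \Sigma_t w$ of the unconstrained problem. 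The constrained maximum, taken over the subspace $\{w : Pw = w\}$, is therefore at most the unconstrained maximum over all $w \in \mathbb{R}^d$, which equals $\|x_\pi\|_{\Sigma_t^{-1}}^2$. Taking square roots yields~\eqref{equation::comparison_conf_bounds}.

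I expect the main obstacle to be handling the rank-deficiency of $\Sigma_t^{o,\perp}$ cleanly, namely justifying that the pseudo-inverse quadratic form equals exactly the value of the maximization restricted to $\mathrm{range}(P)$ rather than over all of $\mathbb{R}^d$; this restriction is precisely what makes the subset-of-feasible-sets comparison valid. Once the identity $\Sigma_t^{o,\perp} = P\Sigma_t P$ and the restricted variational formula are in place, the inequality is immediate. An alternative, more computational route would express both quadratic forms through Schur complements of $\Sigma_t$ with respect to the splitting $\mathbb{R}^d = \mathcal{V}_o \oplus \mathcal{V}_o^\perp$, but the variational comparison avoids explicit block-matrix inversions.
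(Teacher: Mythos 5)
Your proof is correct, and it takes a genuinely different route from the paper's. The paper proves Lemma~\ref{lemma:inverse_norm_domination} computationally (Appendix~\ref{subsec:proof-Lemma-inverse-norm-domination}): it assumes w.l.o.g.\ that $x_0 = e_1$, identifies $\Sigma_t^{o,\perp}$ with the trailing $(d-1)\times(d-1)$ principal submatrix of $\Sigma_t$, writes out the block-inverse of $\Sigma_t$ via the Schur complement $D = \Sigma_t[1,1] - \Sigma_t[2{:}d]^\top \big(\Sigma_t^{o,\perp}\big)^{-1}\Sigma_t[2{:}d] \geq 0$, and checks that the difference $x_\pi^\top \Sigma_t^{-1} x_\pi - \big(x_\pi^{o,\perp}\big)^\top \big(\Sigma_t^{o,\perp}\big)^{-1} x_\pi^{o,\perp}$ collects into a nonnegative quantity (a square over $D$). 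You instead argue coordinate-free: the identity $\Sigma_t^{o,\perp} = P\Sigma_t P$ with $P = I_{\mathcal{V}_o^\perp}$ follows directly from~\eqref{eq:Sigmas} and~\eqref{eq:proj-est} exactly as you compute, and the inequality then drops out of the dual variational formula $v^\top A^{-1} v = \max_w \big(2\langle w, v\rangle - w^\top A w\big)$ by comparing a maximum over the subspace $\{w : Pw = w\}$ against the unconstrained maximum. Your handling of the rank deficiency is also sound: since $\Sigma_t \succeq \lambda I$, the kernel of $P\Sigma_t P$ is exactly $\mathcal{V}_o$, so the pseudo-inverse quadratic form at $x_\pi^{o,\perp} = Px_\pi \in \mathrm{range}(P)$ equals the restricted maximum, which is precisely the point you flagged as the main obstacle. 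As for what each approach buys: the paper's Schur-complement computation makes the gap explicit (the squared cross term over $D$), whereas your argument avoids the w.l.o.g.\ rotation and all block-inversion algebra, handles the degenerate case $x_0 = \mathbf{0}$ (where $P = I$ and the claim is an equality) with no extra care, and generalizes verbatim to projections of any codimension, which would matter if several safe directions were known. The two arguments are at bottom dual to one another: your constrained-versus-unconstrained comparison over $w$ is the Lagrangian shadow of the paper's nonnegative Schur-complement term.
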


In the following lemma, we prove that by appropriately setting the parameters $\alpha_r$ and $\alpha_c$, we can guarantee that at each round $t\in[T]$, OPLB selects an optimistic policy, i.e.,~a policy $\pi_t$, whose optimistic reward, $\widetilde{r}_{\pi_t,t}$, is larger than the reward of the optimal policy $r_{\pi^*_t}$, given the event $\mathcal E$. This means that with our choice of parameters $\alpha_r$ and $\alpha_c$, the term $(\mathrm{I})$ in~\eqref{eq:regret-decomp} is always non-positive.

\begin{lemma}
\label{lemma:linear_bandits_optimism}
On the event $\mathcal{E}$, if we set $\alpha_r$ and $\alpha_c$, such that $\alpha_r,\alpha_c\geq 1$ and $1+\alpha_c \leq (\tau-c_0) (\alpha_r-1)$, then for any $t\in[T]$, we have $\widetilde{r}_{\pi_t,t} \geq r_{\pi_t^*}$.
\end{lemma}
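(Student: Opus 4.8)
The plan is to exploit that OPLB selects $\pi_t$ as the maximizer of the optimistic reward over the estimated safe set $\Pi_t$, so it suffices to exhibit a single \emph{surrogate} policy $\pi'\in\Pi_t$ with $\widetilde r_{\pi',t}\ge r_{\pi^*_t}$; then $\widetilde r_{\pi_t,t}\ge\widetilde r_{\pi',t}\ge r_{\pi^*_t}$. The difficulty is that $\pi^*_t$ need not belong to $\Pi_t$, since its pessimistic cost $\widetilde c_{\pi^*_t,t}$ may exceed $\tau$. If in fact $\widetilde c_{\pi^*_t,t}\le\tau$ (in particular whenever $x^{o,\perp}_{\pi^*_t}=0$), then $\pi^*_t\in\Pi_t$ and the claim follows at once from reward optimism, so the real work is the case $\widetilde c_{\pi^*_t,t}>\tau$.

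First I would record two one-sided optimism estimates valid on $\mathcal E$, both by Cauchy--Schwarz together with the confidence-set memberships. For the reward, $\widetilde r_{\pi,t}-r_\pi\ge(\alpha_r-1)\beta_t(\delta,d)\,\|x_\pi\|_{\Sigma_t^{-1}}\ge 0$; for the cost, using the identity~\eqref{eq:c-tilde-derivation} that the $\mathcal V_o$-part of $\widetilde c_{\pi,t}$ reproduces the true $\langle x^o_\pi,\mu^o_*\rangle$, one gets $0\le\widetilde c_{\pi,t}-c_\pi\le(\alpha_c+1)\beta_t(\delta,d-1)\,\|x^{o,\perp}_\pi\|_{(\Sigma^{o,\perp}_t)^{-1}}$.

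Next I would build the surrogate as the mixture $\pi':=(1-\eta)\pi^*_t+\eta\pi_0$, where $\pi_0$ plays the safe action $x_0$. Because $x^{o,\perp}_0=0$ and each of the three terms of the pessimistic cost is affine in $x_\pi$, the mixture obeys the clean identity $\widetilde c_{\pi',t}=(1-\eta)\widetilde c_{\pi^*_t,t}+\eta c_0$; choosing $\eta=\frac{\widetilde c_{\pi^*_t,t}-\tau}{\widetilde c_{\pi^*_t,t}-c_0}\in(0,1)$ forces $\widetilde c_{\pi',t}=\tau$, so $\pi'\in\Pi_t$. Writing $r_0:=\langle x_0,\theta_*\rangle$ and using reward optimism with $r_{\pi'}=r_{\pi^*_t}-\eta(r_{\pi^*_t}-r_0)$, the target $\widetilde r_{\pi',t}\ge r_{\pi^*_t}$ reduces to
\begin{equation*}
(\alpha_r-1)\beta_t(\delta,d)\,\|x_{\pi'}\|_{\Sigma_t^{-1}}\ \ge\ \eta\,(r_{\pi^*_t}-r_0),
\end{equation*}
i.e., the reward optimism bonus must cover the reward sacrificed by mixing toward the safe action.

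Finally I would drive both sides to the common quantity $N:=\|x^{o,\perp}_{\pi^*_t}\|_{(\Sigma^{o,\perp}_t)^{-1}}$. On the right, $r_{\pi^*_t}-r_0\le 1$ by Assumption~\ref{ass:bounded-mean-reward-cost}, and feasibility of $\pi^*_t$ ($c_{\pi^*_t}\le\tau$) combined with the cost estimate gives $\widetilde c_{\pi^*_t,t}-\tau\le(\alpha_c+1)\beta_t(\delta,d-1)N$, so that $\frac{\eta}{1-\eta}=\frac{\widetilde c_{\pi^*_t,t}-\tau}{\tau-c_0}\le\frac{(\alpha_c+1)\beta_t(\delta,d-1)N}{\tau-c_0}$. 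On the left, Lemma~\ref{lemma:inverse_norm_domination} applied to $\pi'$ together with $x^{o,\perp}_{\pi'}=(1-\eta)x^{o,\perp}_{\pi^*_t}$ yields $\|x_{\pi'}\|_{\Sigma_t^{-1}}\ge(1-\eta)N$. Dividing the target inequality by $(1-\eta)N$ and using monotonicity $\beta_t(\delta,d-1)\le\beta_t(\delta,d)$ collapses everything to $(\alpha_r-1)(\tau-c_0)\ge\alpha_c+1$, which is precisely the hypothesis; the degenerate case $N=0$ reverts to $\pi^*_t\in\Pi_t$. I expect the main obstacle to be exactly this bookkeeping: arranging that the factors $(1-\eta)$ and $N$ cancel on \emph{both} sides simultaneously. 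This is what makes the orthogonal-complement parametrization essential — it guarantees $x^{o,\perp}_0=0$ and lets the single quantity $N$ govern both the cost slack and (through Lemma~\ref{lemma:inverse_norm_domination}) the reward bonus — and it is the reason the neat threshold $1+\alpha_c\le(\tau-c_0)(\alpha_r-1)$ is what finally suffices.
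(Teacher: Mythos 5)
Your proposal is correct and takes essentially the same route as the paper's proof: the same case split on whether $\pi_t^*\in\Pi_t$, the same safe-mixture policy calibrated so its pessimistic cost equals $\tau$ (your $\eta$ is just $1-\eta_t$ in the paper's parametrization), and the same key combination of Lemma~\ref{lemma:inverse_norm_domination}, the monotonicity $\beta_t(\delta,d-1)\leq\beta_t(\delta,d)$, the cost-side Cauchy--Schwarz bound with feasibility of $\pi_t^*$, and the boundedness of rewards to collapse everything to $1+\alpha_c\leq(\tau-c_0)(\alpha_r-1)$. Your only departures are cosmetic: you phrase the conclusion as a ``bonus covers the mixing sacrifice'' inequality and retain $r_0$ via $r_{\pi_t^*}-r_0\leq 1$, where the paper instead drops $(1-\eta_t)\langle x_0,\theta_*\rangle\geq 0$ and uses $r_{\pi_t^*}\leq 1$.
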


Here we provide a proof sketch for Lemma~\ref{lemma:linear_bandits_optimism}. The detailed proof is reported in Appendix~\ref{subsec:proof-Lemma-term-I}. 

\begin{proof}[Proof Sketch]
We divide the proof into two cases, depending on whether in each round $t$, the optimal policy $\pi^*_t$ belongs to the (estimated) set of feasible policies $\Pi_t$, or not. 

{\bf Case~yes
1.} $\;$ If $\pi_t^*\in\Pi_t$, then its optimistic reward is less than that of the policy $\pi_t$ selected at round $t$ (by the definition of $\pi_t$ on Line~4 of Algorithm~\ref{alg:optimistic-pessimistic-LB}), i.e.,~$\widetilde{r}_{\pi_t^*,t}\leq\widetilde{r}_{\pi_t,t}$. This together with the fact that the optimistic reward of any policy $\pi$ is larger than its expected reward, i.e.,~$\widetilde{r}_{\pi,t}\geq r_\pi$, gives us the desired result that $\widetilde{r}_{\pi_t,t}\geq r_{\pi^*_t}$. 

{\bf Case~2.} $\;$ If $\pi_t^*\not\in\Pi_t$, then we define a mixture policy $\widetilde{\pi}_t=\eta_t\pi_t^*+(1-\eta_t)\pi_0$, where $\pi_0$ is the policy that always selects the safe action $x_0$ and $\eta_t\in[0,1]$ is the maximum value of $\eta$ for which the mixture policy belongs to the set of feasible actions, i.e.,~$\widetilde{\pi}_t\in\Pi_t$. Conceptually, we can think of $\eta_t$ as a measure for safety of the optimal policy $\pi_t^*$. Mathematically, $\eta_t$ is the value at which the pessimistic cost of the mixture policy equals to the constraint threshold, i.e.,~$\widetilde{c}_{\widetilde{\pi}_t,t} =\tau$. In the rest of the proof, we first write $\widetilde{c}_{\widetilde{\pi}_t,t}$ in terms of the pessimistic cost of the optimal policy as $\widetilde{c}_{\widetilde{\pi}_t,t}=(1-\eta_t)c_0+\eta_t\widetilde{c}_{\pi^*_t,t}$ ($c_0$ is the expected cost of the safe action $x_0$), and find a lower-bound for $\eta_t$ (see Eq.~\ref{eq:tempB3} in Appendix~\ref{subsec:proof-Lemma-term-I}). We then use the fact that since $\widetilde{\pi}_t\in\Pi_t$, its optimistic reward is less than that of $\pi_t$, i.e.,~$\widetilde{r}_{\pi_t,t}\geq \widetilde{r}_{\widetilde{\pi}_t,t}$, and obtain a lower-bound for  $\widetilde{r}_{\widetilde{\pi}_t,t}$ as a function of $r_{\pi_t^*}$ (see Eq.~\ref{eq:tempB4} in Appendix~\ref{subsec:proof-Lemma-term-I}). Finally, we conclude the proof by using this lower-bound and finding the relationship between the parameters $\alpha_r$ and $\alpha_c$ for which the desired result $\widetilde{r}_{\pi_t,t}\geq r_{\pi_t^*}$ is obtained, i.e.,~$1+\alpha_c\leq (\tau-c_0)(\alpha_r-1)$.
\end{proof}

\begin{proof}[Proof of Theorem~\ref{theorem::main_theorem_linear}]
The proof follows from the fact that the term $(\mathrm{I})$ is negative (Lemma~\ref{lemma:linear_bandits_optimism}), and by combining the upper-bounds on the term $(\mathrm{II})$ from Lemmas~\ref{lemma:bounding-3+5} and~\ref{lemma:bounding-4}, and setting $\gamma = \delta$.
\end{proof}





\section{Constrained Multi-Armed Bandits}
\label{sec:constrained-MAB}

In this section, we specialize our results for contextual linear bandits to multi-armed bandits (MAB) and show that the structure of the MAB problem allows a computationally efficient implementation of the algorithm and an improvement in the regret bound.  

In the MAB setting, the action set consists of $K$ arms $\mathcal{A}=\{1,\ldots,K\}$. Each arm $a\in [K]$ has a reward and a cost distribution with means $\bar{r}_a,\bar{c}_a\in [0,1]$. In each round $t\in[T]$, the agent constructs a policy $\pi_t$ over $\mathcal A$, pulls an arm $a_t\sim\pi_t$, and observes a reward-cost pair $(r_{a_t},c_{a_t})$ sampled i.i.d.~from the reward and cost distributions of arm $a_t$. Similar to the constrained contextual linear case, the goal of the agent is to produce a sequence of policies $\{\pi_t\}_{t=1}^T$ with maximum expected cumulative reward over $T$ rounds, i.e.,~$\sum_{t=1}^T\mathbb E_{a_t\sim\pi_t}[\bar{r}_{a_t}]$, while satisfying the {\bf\em linear constraint} $\mathbb E_{a_t\sim\pi_t}[\bar{c}_{a_t}] \leq \tau,\;\forall t\in[T]$. Moreover, arm $1$ is assumed to be the known safe arm, i.e.,~$\bar{c}_1 \leq \tau$. 

{\bf Optimistic Pessimistic Bandit (OPB) Algorithm.} $\;$ Let $\{T_a(t)\}_{a=1}^K$ and $\{\widehat{r}_a(t),\widehat{c}_a(t)\}_{a=1}^K$ be the total number of times that arm $a$ has been pulled and the estimated mean reward and cost of arm $a$ up until round $t$. In each round $t\in[T]$, OPB relies on the high-probability upper-bounds on the mean reward and cost of the arms, i.e.,~$\{u_a^r(t),u_a^c(t)\}_{a=1}^K$, where $u_a^r(t) = \widehat{r}_a(t) + \alpha_r\beta_a(t)$, $u_a^c(t) = \widehat{c}_a(t) + \alpha_c\beta_a(t)$, $\beta_a(t) = \sqrt{2\log(1/\delta')/T_a(t)}$, and constants $\alpha_r,\alpha_c\geq 1$. In order to produce a feasible policy, OPB solves the following linear program (LP) in each round $t\in[T]$: 

\vspace{-0.125in}
\begin{small}
\begin{equation}
\label{eq::noisy_LP}
\max_{\pi\in\Delta_K} \; \sum_{a\in\mathcal{A}}\pi_a \; u^r_a(t), \qquad\quad \text{s.t.} \; \sum_{a\in\mathcal{A}}\pi_a \; u_a^c(t) \leq \tau.
\end{equation}
\end{small}
\vspace{-0.125in}

As shown in~\eqref{eq::noisy_LP}, OPB selects its policy by being optimistic about reward (using an upper-bound for $r$) and pessimistic about cost (using an upper-bound for $c$). We report the details of OPB and its pseudo-code (Algorithm~\ref{alg::optimism_pessimism}) in Appendix~\ref{section::appendix_MAB_optimism_pessimism_algorithm}.


{\bf Computational Complexity of OPB.} $\;$ Unlike OPLB, whose optimization problem might be complex, OPB can be implemented extremely efficiently. Lemma~\ref{lemma::LP_support}, whose proof we report in Appendix~\ref{section::LP_structure_appendix}, show that~\eqref{eq::noisy_LP} always has a solution (policy) with support of at most $2$. This property allows us to solve~\eqref{eq::noisy_LP} in closed form, without a LP solver, and implement OPB quite efficiently. 

\begin{lemma}
\label{lemma::LP_support} 
There exists a policy that solves~\eqref{eq::noisy_LP} and has at most $2$ non-zero entries.
\end{lemma}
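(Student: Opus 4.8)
The plan is to invoke the fundamental theorem of linear programming. Since~\eqref{eq::noisy_LP} is the maximization of a linear objective over a nonempty compact polytope, the optimum is attained at an extreme point (vertex) of the feasible region, and I will show that every such vertex has at most two nonzero coordinates. First I would record the preconditions: the feasible set $\{\pi\in\Delta_K:\sum_{a}\pi_a u_a^c(t)\le\tau\}$ is the intersection of the probability simplex with a halfspace, hence closed and bounded (it sits inside $\Delta_K$), therefore compact; and it is nonempty because the algorithm guarantees a feasible policy (e.g.\ built from the safe arm~$1$, as detailed in the appendix). Thus the LP is feasible and bounded, so an optimal vertex exists, and it suffices to bound the support of vertices.

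Next I would carry out the vertex count in $\mathbb{R}^K$. A feasible point is a vertex exactly when the constraints active at it have rank $K$. The defining constraints here are the normalization equality $\sum_a\pi_a=1$ (always active, contributing rank one), the single cost inequality $\sum_a \pi_a u_a^c(t)\le\tau$, and the $K$ nonnegativity constraints $\pi_a\ge 0$. To reach rank $K$ we need $K-1$ further linearly independent active rows from the inequalities; at most one of these can be the cost constraint, so at least $K-2$ of the nonnegativity constraints must be active, i.e.\ at least $K-2$ coordinates $\pi_a$ equal zero. Hence at most two coordinates are nonzero. If instead the cost constraint is inactive, all $K-1$ extra active rows are nonnegativity constraints, forcing support at most one; either way the support is at most two, which proves the claim.

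The cleanest way to avoid the linear-independence bookkeeping (and the degenerate cases where active constraints are redundant) is the standard-form / basic-feasible-solution route, which I would use to present the final argument. Introducing a slack variable $s\ge 0$ rewrites the cost constraint as $\sum_a \pi_a u_a^c(t)+s=\tau$, so the feasible region becomes the set of nonnegative $(\pi_1,\dots,\pi_K,s)$ satisfying two linear equalities. Every basic feasible solution then has at most two nonzero entries, since the number of basic (potentially nonzero) variables equals the number of equality constraints, and this bound holds regardless of degeneracy. One of those at most two nonzeros may be the slack $s$, leaving at most two nonzero $\pi_a$, and an optimal BFS exists by the fundamental theorem. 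The only point requiring a sentence of care is the feasibility/nonemptiness of the LP, which is the sole nontrivial hypothesis; the support bound itself is a routine consequence of basic-solution structure, so I expect no substantive obstacle beyond stating the LP preconditions correctly.
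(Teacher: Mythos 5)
Your proof is correct, but it takes a genuinely different route from the paper's. You argue via polyhedral structure: the feasible set is a compact polytope, an optimum is attained at a vertex, and a vertex (equivalently, after adding a slack variable, a basic feasible solution of a system with two equality constraints) can have at most two nonzero coordinates among the $\pi_a$. The paper instead (Lemma~\ref{lemma::LP_support_appendix}) goes through LP duality: it writes the dual $\min_{\lambda\ge 0}\max_a \lambda(\tau-u_a^c(t))+u_a^r(t)$, uses complementary slackness to restrict the support of any optimal $\pi^*$ to the dual-optimal argmax set $\mathcal{A}^*$, solves for $\lambda^*$ explicitly when $|\mathcal{A}^*|\ge 2$, and exhibits a feasible primal solution supported on a pair $(a_1,a_2)$ with $u_{a_1}^c(t)\le\tau\le u_{a_2}^c(t)$, with closed-form weights as in Equation~\ref{equation::optimal_policy_pair}. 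Each approach buys something: yours is more elementary, immune to degeneracy bookkeeping, and generalizes instantly to $m$ cost constraints (giving support at most $m+1$, a fact the paper states but leaves as an exercise); the paper's dual argument is constructive, identifying \emph{which} arms can support an optimum and producing the explicit two-arm mixture weights that underlie the paper's efficient closed-form solver for~\eqref{eq::noisy_LP}. One small point you handled appropriately: the lemma implicitly requires feasibility of the LP (the appendix version states it as an explicit hypothesis), and your observation that this is the sole nontrivial precondition matches the paper's treatment.
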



{\bf Regret Analysis of OPB.} $\;$ We prove the following regret-bound for OPB in Appendix~\ref{section:regret_analysis_appendix}. 

\begin{theorem}[Regret of OPB]
\label{theorem::contrained_MAB}
Let $\delta = 4KT\delta'$, $\alpha_c=1$, and $\alpha_r = 1+2/(\tau-\bar{c}_1)$. Then, with probability at least $1-\delta$, the regret of OPB satisfies

\vspace{-0.15in}
\begin{small}
\begin{equation*}
\mathcal{R}_\Pi(T) \leq \big(1 + \frac{2}{\tau-\bar{c}_1}\big) \times \big(2\sqrt{2KT\log(4KT/\delta)} + 4\sqrt{T\log(2/\delta)\log(4KT/\delta)}\big).
\end{equation*}
\end{small}
\vspace{-0.15in}
\end{theorem}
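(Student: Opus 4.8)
The plan is to mirror the regret analysis of OPLB (Theorem~\ref{theorem::main_theorem_linear}) while exploiting the simpler structure of MAB, where reward and cost share the same per-arm confidence radius $\beta_a(t) = \sqrt{2\log(1/\delta')/T_a(t)}$. First I would define the favorable event on which $|\widehat r_a(t) - \bar r_a| \le \beta_a(t)$ and $|\widehat c_a(t) - \bar c_a| \le \beta_a(t)$ for every arm $a \in [K]$ and round $t \in [T]$. Applying Hoeffding's inequality to the i.i.d.\ reward and cost samples of each arm and union-bounding over the $K$ arms, the $T$ rounds, and the two (two-sided) signals shows this event holds with probability at least $1 - 4KT\delta' = 1 - \delta$, which fixes the relation $\delta = 4KT\delta'$. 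On this event, since $\alpha_c = 1$ we have $u^c_a(t) = \widehat c_a(t) + \beta_a(t) \ge \bar c_a$, so any $\pi$ feasible for~\eqref{eq::noisy_LP} satisfies $\sum_a \pi_a \bar c_a \le \sum_a \pi_a u^c_a(t) \le \tau$; in particular the played policy $\pi_t$ is genuinely feasible, giving the MAB analog of Proposition~\ref{prop:safe-set}.

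Next I would decompose the regret exactly as in~\eqref{eq:regret-decomp}, writing $\mathcal R_\Pi(T) = \sum_t (r_{\pi^*_t} - \widetilde r_{\pi_t,t}) + \sum_t (\widetilde r_{\pi_t,t} - r_{\pi_t})$, where $\widetilde r_{\pi_t,t} = \sum_a (\pi_t)_a u^r_a(t)$ is the optimistic LP value. The first sum is the \emph{optimism term} and the second the \emph{estimation term}. For the optimism term I would prove the MAB counterpart of Lemma~\ref{lemma:linear_bandits_optimism}: with $\alpha_c = 1$ and $\alpha_r = 1 + 2/(\tau - \bar c_1)$, which satisfies $1 + \alpha_c \le (\tau - \bar c_1)(\alpha_r - 1)$, one has $\widetilde r_{\pi_t,t} \ge r_{\pi^*_t}$ for every $t$, so the optimism term is nonpositive. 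The argument splits into two cases. If $\pi^*_t$ lies in the empirical safe set, optimality of $\pi_t$ in~\eqref{eq::noisy_LP} together with $u^r_a(t) \ge \bar r_a$ gives $\widetilde r_{\pi_t,t} \ge \widetilde r_{\pi^*_t,t} \ge r_{\pi^*_t}$. Otherwise I form the mixture $\widetilde\pi_t = \eta_t \pi^*_t + (1-\eta_t)\pi_0$ with $\pi_0$ the point mass on the safe arm $1$, choose $\eta_t$ so the mixture sits exactly on the empirical constraint boundary, lower bound $\eta_t$ using the slack $\tau - \bar c_1$, and combine with $\widetilde r_{\pi_t,t} \ge \widetilde r_{\widetilde\pi_t,t}$; the condition on $\alpha_r,\alpha_c$ is precisely what absorbs the loss from mixing. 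Here MAB is cleaner than the linear case because the reward and cost widths coincide, so the role played by Lemma~\ref{lemma:inverse_norm_domination} is trivial.

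The estimation term is where the $\sqrt{KT}$ rate appears. On the favorable event $\widetilde r_{\pi_t,t} - r_{\pi_t} = \sum_a (\pi_t)_a\big(u^r_a(t) - \bar r_a\big) \le (\alpha_r + 1)\sum_a (\pi_t)_a \beta_a(t)$; tracking which part of this width carries the deliberately added bonus $\alpha_r\beta_a(t)$ versus the base estimation error is what lets one collect the advertised prefactor $1 + 2/(\tau-\bar c_1)$. I would then bridge from the policy-averaged width to the realized arm $a_t \sim \pi_t$ by writing $\sum_a (\pi_t)_a \beta_a(t) = \beta_{a_t}(t) + \big(\sum_a (\pi_t)_a \beta_a(t) - \beta_{a_t}(t)\big)$. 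Summing over $t$, the realized-arm piece is controlled by the pigeonhole bound $\sum_t \beta_{a_t}(t) = \sqrt{2\log(1/\delta')}\sum_t T_{a_t}(t)^{-1/2} \le \sqrt{2\log(1/\delta')}\sum_a \sum_{j=1}^{T_a(T)} j^{-1/2} \le 2\sqrt{2KT\log(4KT/\delta)}$, the last step by Cauchy--Schwarz using $\sum_a T_a(T) = T$ and $\log(1/\delta') = \log(4KT/\delta)$. The remaining piece is a martingale-difference sum, conditionally mean-zero given $\pi_t$ with bounded increments (each $\beta_a(t) \le \sqrt{2\log(1/\delta')}$), so Azuma--Hoeffding with $\gamma = \delta/2$ bounds it by $O\!\big(\sqrt{T\log(2/\delta)\log(4KT/\delta)}\big)$, matching the second summand. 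Combining the nonpositive optimism term with these two contributions yields the stated bound on an event of probability at least $1-\delta$.

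I expect the optimism lemma (Case~2, the mixture construction and the resulting inequality tying $\alpha_r,\alpha_c$ to the slack $\tau-\bar c_1$) to be the conceptual crux, exactly as in the linear analysis. The one genuinely MAB-specific subtlety is bridging the policy-averaged confidence width, which is what the LP~\eqref{eq::noisy_LP} actually optimizes, to the realized pull counts $\{T_a(t)\}$ on which the pigeonhole bound lives; this mismatch is what forces the Azuma step and produces the lower-order $\sqrt{T\log(\cdot)}$ term. The $\sqrt{K}$ improvement over casting MAB into OPLB then comes precisely from this pigeonhole bound scaling as $\sqrt{KT}$ rather than $d\sqrt{T}$.
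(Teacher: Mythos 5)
Your proposal follows essentially the same route as the paper's own proof: the identical decomposition into an optimism term (shown nonpositive via the safe-arm mixture construction with $\eta_t$ on the empirical constraint boundary, exactly the paper's Lemma~\ref{lemma::optimism}, the MAB analogue of Lemma~\ref{lemma:linear_bandits_optimism}) and an estimation term, with the same bridge from the policy-averaged width $\mathbb{E}_{a\sim\pi_t}[\beta_a(t)]$ to realized pulls via an Azuma--Hoeffding martingale step and the same pigeonhole/Cauchy--Schwarz bound $2\sqrt{2KT\log(1/\delta')}$. Two cosmetic caveats: your failure-probability budget totals $3\delta/2$ (assign $\delta/2$ to the confidence event and $\delta/2$ to Azuma, as the paper does), and the honest width bound is $(\alpha_r+1)\beta_a(t)$, yielding prefactor $2+\frac{2}{\tau-\bar c_1}$ exactly as in the paper's Proposition~\ref{proposition::bounding_term_II} --- the mismatch with the theorem's stated $1+\frac{2}{\tau-\bar c_1}$ is an off-by-one in the paper itself, and your \emph{tracking the bonus versus the base error} remark cannot actually shave that $+1$.
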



The main component in the proof of Theorem~\ref{theorem::contrained_MAB} is the following lemma, whose proof is reported in Appendix~\ref{section:regret_analysis_appendix}. This lemma is the analogous to Lemma~\ref{lemma:linear_bandits_optimism} in the contextual linear bandit case. 

\begin{lemma}
\label{lemma::optimism}
If we set the parameters $\alpha_r$ and $\alpha_c$, such that $\alpha_r,\alpha_c\geq 1$ and $\alpha_c \leq (\tau-\bar{c}_1)(\alpha_r-1)$, then with high probability, for any $t\in[T]$, we have $\mathbb{E}_{a \sim \pi_t}\left[u_a^r(t)\right] \geq \mathbb{E}_{a \sim \pi^*}\left[\bar{r}_a \right]$.
\end{lemma}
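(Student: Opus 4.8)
The plan is to mirror the structure of the proof of Lemma~\ref{lemma:linear_bandits_optimism}, exploiting that in the MAB case the per-arm confidence widths $\beta_a(t)$ play the role of the ellipsoidal norms. First I would fix the high-probability event $\mathcal{E}'$ on which $|\widehat{r}_a(t)-\bar r_a|\le\beta_a(t)$ and $|\widehat{c}_a(t)-\bar c_a|\le\beta_a(t)$ hold simultaneously for every arm $a\in[K]$ and every round $t\in[T]$; a union bound over the $K$ arms, the $T$ rounds, and the two signals (each two-sided) together with the sub-Gaussian tails of Assumption~\ref{ass:noise-sub-gaussian} gives $\mathcal{E}'$ probability at least $1-\delta$ under the calibration $\delta=4KT\delta'$. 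On $\mathcal{E}'$, since $\alpha_r,\alpha_c\ge 1$, the upper confidence values dominate the true means, $u_a^r(t)\ge\bar r_a$ and $u_a^c(t)\ge\bar c_a$ for all arms; these are the only probabilistic inputs, and everything downstream is deterministic given $\mathcal{E}'$.

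I would then split into two cases according to whether the optimal feasible policy $\pi^*$ satisfies the \emph{empirical} constraint of the linear program~\eqref{eq::noisy_LP}, namely $\mathbb{E}_{a\sim\pi^*}[u_a^c(t)]\le\tau$. In the easy case $\pi^*$ is itself feasible for~\eqref{eq::noisy_LP}, so optimality of $\pi_t$ gives $\mathbb{E}_{a\sim\pi_t}[u_a^r(t)]\ge\mathbb{E}_{a\sim\pi^*}[u_a^r(t)]$, and $u_a^r(t)\ge\bar r_a$ finishes immediately. The substance is the second case, where $\pi^*$ violates the empirical cost constraint. Following the linear argument, I would introduce the mixture $\widetilde{\pi}_t=\eta_t\pi^*+(1-\eta_t)\pi^{(1)}$, where $\pi^{(1)}$ deterministically plays the known safe arm $1$ (whose cost $\bar c_1<\tau$ enters the program exactly) and $\eta_t\in[0,1]$ is taken as large as possible subject to $\widetilde{\pi}_t$ staying empirically feasible. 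Since $\mathbb{E}_{a\sim\widetilde\pi_t}[u_a^c(t)]$ is affine in $\eta$, this maximal $\eta_t$ saturates the constraint, giving $\eta_t=(\tau-\bar c_1)/(C_t^*-\bar c_1)$ with $C_t^*:=\mathbb{E}_{a\sim\pi^*}[u_a^c(t)]>\tau$, so that $1-\eta_t=(C_t^*-\tau)/(C_t^*-\bar c_1)$. As $\widetilde\pi_t$ is empirically feasible, optimality of $\pi_t$ yields $\mathbb{E}_{a\sim\pi_t}[u_a^r(t)]\ge\mathbb{E}_{a\sim\widetilde\pi_t}[u_a^r(t)]=\eta_t\,\mathbb{E}_{a\sim\pi^*}[u_a^r(t)]+(1-\eta_t)\,u_1^r(t)$, and it remains to show the right-hand side is at least $r_{\pi^*}=\mathbb{E}_{a\sim\pi^*}[\bar r_a]$.

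To close this I would use two estimates that both scale with the same confidence budget $B^*:=\mathbb{E}_{a\sim\pi^*}[\beta_a(t)]$. On the reward side, $u_a^r(t)=\widehat r_a(t)+\alpha_r\beta_a(t)\ge\bar r_a+(\alpha_r-1)\beta_a(t)$ gives the optimism surplus $\mathbb{E}_{a\sim\pi^*}[u_a^r(t)]\ge r_{\pi^*}+(\alpha_r-1)B^*$. On the cost side, true feasibility $\mathbb{E}_{a\sim\pi^*}[\bar c_a]\le\tau$ together with $\widehat c_a(t)\le\bar c_a+\beta_a(t)$ bounds the empirical overshoot $C_t^*-\tau$ by a constant multiple of $B^*$, which in turn lower-bounds $\eta_t/(1-\eta_t)=(\tau-\bar c_1)/(C_t^*-\tau)$. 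Substituting both estimates into the displayed inequality and invoking the boundedness $r_{\pi^*}\le 1$ and $u_1^r(t)\ge 0$ from Assumption~\ref{ass:bounded-mean-reward-cost} makes the data-dependent factor $B^*$ cancel, reducing the target to the required relation between $\alpha_r$, $\alpha_c$, and $\tau-\bar c_1$. The main obstacle is exactly this final bookkeeping in the infeasible case: one must lower-bound $\eta_t$ sharply enough that the reward surplus $(\alpha_r-1)B^*$ earned by mixing toward $\pi^*$ dominates the reward sacrificed by mixing toward the conservative arm $1$, and it is the cancellation of $B^*$ (enabled by the bounded-mean assumption) that turns a round-dependent chain of inequalities into a single parameter condition uniform over $t\in[T]$.
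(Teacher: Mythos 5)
Your proposal is correct and follows essentially the same route as the paper's proof: the same case split on whether $\pi^*$ satisfies the empirical constraint, the same saturated mixture with the safe-arm delta policy, the same lower bound on the mixing coefficient via true feasibility plus the confidence widths, and the same cancellation of the common budget $\mathbb{E}_{a\sim\pi^*}[\beta_a(t)]$ using $r_{\pi^*}\leq 1$ and $u_1^r(t)\geq 0$, arriving at the condition $1+\alpha_c\leq(\tau-\bar{c}_1)(\alpha_r-1)$ exactly as the paper does. The only cosmetic difference is that you work with $\pi^*$ directly rather than first factoring it as $\rho^*\bar{\pi}^*+(1-\rho^*)\pi_0$ as the paper does, which slightly streamlines the bookkeeping without changing the argument.
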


Our contextual linear bandit formulation is general enough to include MAB. The regret analysis of OPLB (Theorem~\ref{theorem::main_theorem_linear}) yields a regret bound of order $\widetilde{\mathcal O}(\frac{K\sqrt{T}}{\tau-\bar{c}_1})$ for MAB. However, our OPB regret bound in Theorem~\ref{theorem::contrained_MAB} is of order $\widetilde{\mathcal O}(\frac{\sqrt{KT}}{\tau-\bar{c}_1})$, which shows a $\sqrt{K}$ improvement over simply casting MAB as an instance of contextual linear bandit and using the regret bound of OPLB. 


{\bf Extension to $m$ Constraints.} $\;$ In this case, the agent receives $m$ cost signals after pulling each arm. The cost vector of the safe arm $\boldsymbol{c}_1$ satisfies $\boldsymbol{c}_1(i)<\tau_i,\forall i\in[m]$, where $\{\tau_i\}_{i=1}^m$ are the constraint thresholds. Similar to single-constraint OPB, multi-constraint OPB is also computationally efficient. The main reason is that the LP of $m$-constraint OPB has a solution with at most $m+1$ non-zero entries. We obtain a regret bound of $\widetilde{\mathcal{O}}(\frac{\sqrt{KT }}{ \min_i \tau_i - \boldsymbol{c}_1(i)})$ for $m$-constraint OPB in Appendix~\ref{section::multiple_constraints_appendix}.



{\bf Lower-bound.} $\;$ We also prove a mini-max lower-bound for this constrained MAB problem that shows no algorithm can attain a regret better than $\mathcal{O}(\max(\sqrt{KT}, \frac{1}{(\tau - \bar{c}_1)^2} ) )$. The formal statement of the lower-bound and the proof are reported in Appendix~\ref{section::lower_bound_MAB_appendix}.
\section{Experiments}
\label{sec:experiments}
\vspace{-0.1in}

We run a set of experiments to show the behavior of OPB and validate our theoretical results. We consider a $K=4$-armed bandits in which the reward and cost distributions of the arms are Bernoulli with means $\bar r=(.1,.2,.4,.7)$ and $\bar c =(0,.4,.5,.2)$. So, the cost of the safe arm is $\bar c_1=0$. In Figures~\ref{fig:constrained_bandits1} to~\ref{fig:constrained_bandits3}, we gradually reduce the constraint threshold $\tau$, and as a result the complexity of the problem $\tau - \bar{c}_1$, and show the regret {\em (left)} and the cost {\em (middle)} and reward {\em (right)} evolution of OPB. All the results are averaged over $10$ runs and the shade is the $\pm .5$ standard deviation around the regret. 

Our results show that the regret of OPB grows as we reduce $\tau$ {\em (left)}. They also indicate that the algorithm is successful in satisfying the constraint {\em (middle)} and reaching the optimal reward/performance {\em (right)}. In Figure~\ref{fig:constrained_bandits3}, the reason that the cost evolution of OPB is the same as that of the optimal policy {\em (middle)} is that in this case, the cost of the best arm (arm $4$) is equal to the constraint threshold $\tau=.2$. 


\begin{wrapfigure}{r}{0.625\textwidth}
        \vspace{-0.3in}    
        \begin{minipage}{0.625\textwidth}
        \centering\subfigure{\includegraphics[width=0.325\linewidth]{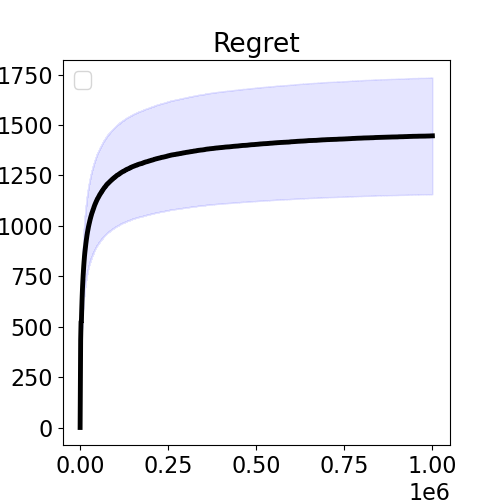}} 
        \centering\subfigure{\includegraphics[width=0.325\linewidth]{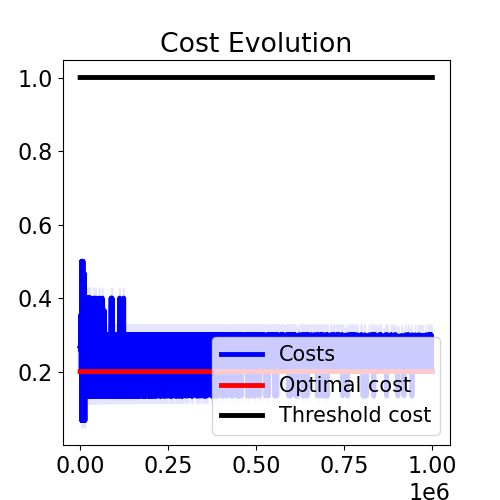}} 
        \centering\subfigure{\includegraphics[width=0.325\linewidth]{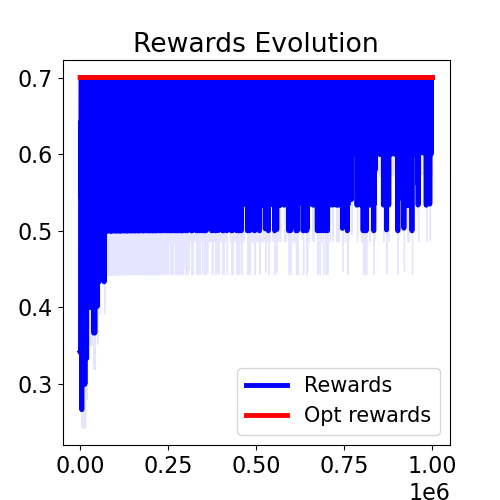}} 
        \end{minipage}
        \vspace{-4mm}
        \caption{{Constraint Threshold $\;\tau = 1$.}}
        \vspace{-2mm}
        \label{fig:constrained_bandits1}
        \begin{minipage}{0.625\textwidth}
        \centering\subfigure{\includegraphics[width=0.325\linewidth]{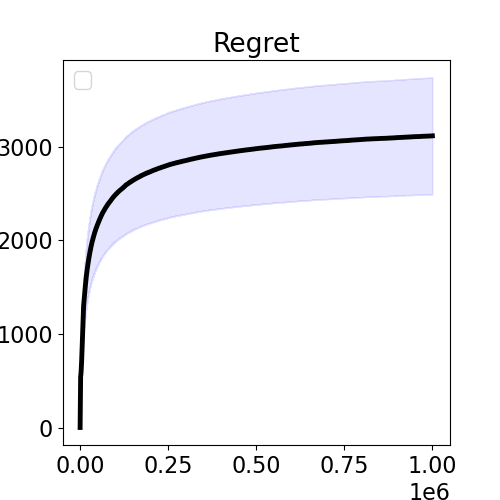}} 
        \centering\subfigure{\includegraphics[width=0.325\linewidth]{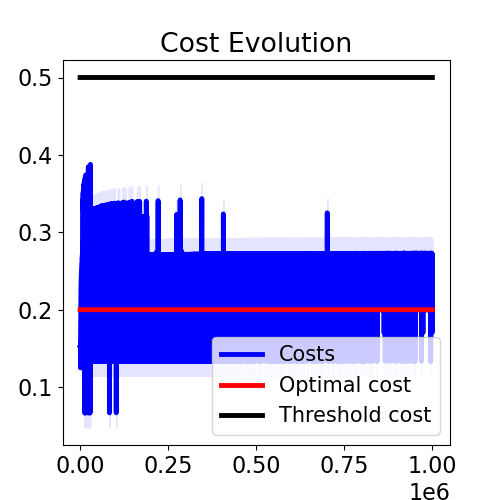}} 
        \centering\subfigure{\includegraphics[width=0.325\linewidth]{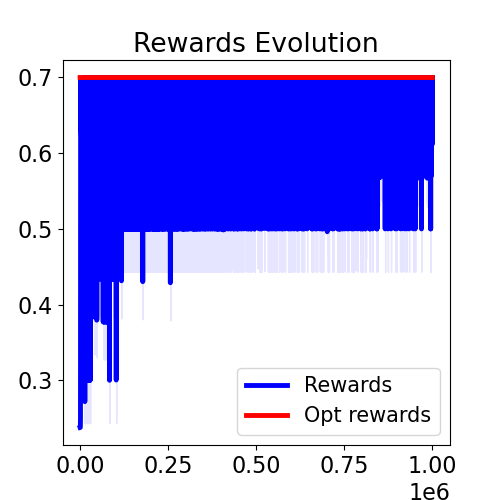}} 
        \end{minipage}
        \vspace{-4mm}
        \caption{{Constraint Threshold $\;\tau = 0.5$.}}
        \vspace{-2mm}
        \label{fig:constrained_bandits2}
        \begin{minipage}{0.625\textwidth}
        \centering\subfigure{\includegraphics[width=0.325\linewidth]{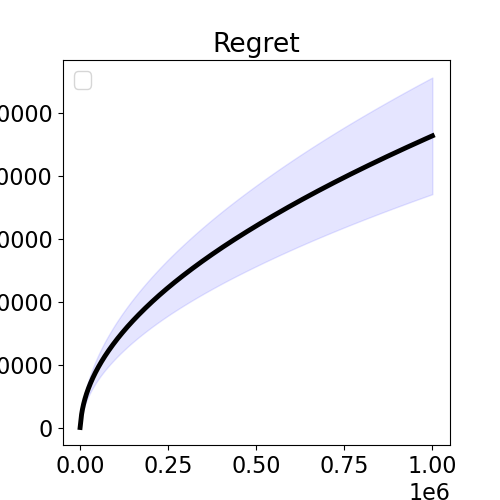}} 
        \centering\subfigure{\includegraphics[width=0.325\linewidth]{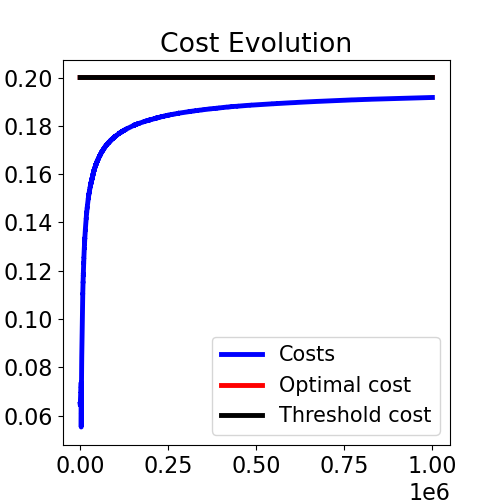}} 
        \centering\subfigure{\includegraphics[width=0.325\linewidth]{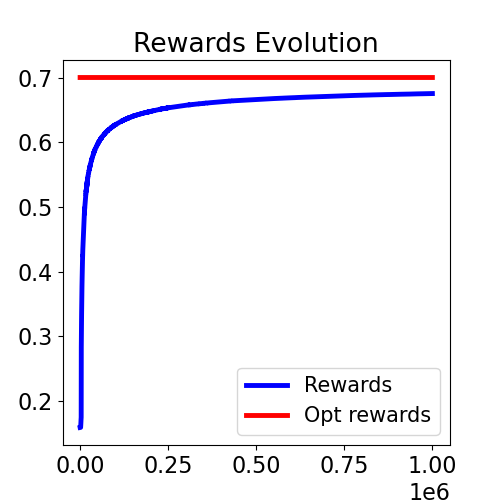}} 
        \end{minipage}
        \vspace{-4mm}
        \caption{{Constraint Threshold $\;\tau = 0.2$.}}
        \label{fig:constrained_bandits3}
\begin{center}
\begin{minipage}{0.625\textwidth}
\begin{center}   \small{ \textbf{OPB.} Bernoulli arms. $\bar{r} = (.1, .2, .4, .7)$, $\bar{c}=(0, .4, .5, .2)$, $\bar{c}_1 = 0$.} \end{center}
    \end{minipage}
    \end{center}
\vspace{-0.225in}    
\end{wrapfigure}


\section{Related Work}
\label{sec:related-work}
\vspace{-0.1in}

As described in Section~\ref{sec:intro}, our setting is the closest to the one studied by~\cite{amani2019linear} and~\cite{Moradipari19SL}. They study a slightly different setting, in which the mean cost of the action that the agent takes should satisfy the constraint, i.e.,~$\langle x_t,\mu_*\rangle \leq \tau$, not the mean cost of the policy it computes, i.e.,~$\langle x_{\pi_t},\mu_*\rangle \leq \tau$, as in our case. Clearly, the setting studied in our paper is more relaxed, and thus, is expected to obtain more rewards.~\cite{Moradipari19SL} propose a TS algorithm for their setting and prove an $\widetilde{O}(d^{3/2}\sqrt{T}/\tau)$ regret bound for it. They restrict themselves to linear bandits, i.e.,~$\mathcal A_t=\mathcal A,\forall t\in[T]$, and the safe action being the origin, i.e.,~$x_0=\mathbf{0}$ and $c_0=0$. This is why $c_0$ does not appear in their bounds. They consider their action set to be any convex compact subset of $\mathbb R^d$ that contains the origin. Although later in their proofs, to guarantee that their algorithm does not violate the constraint in the first round, they require the action set to also contain the ball with radius $\tau/S$ around the origin. Therefore, our action set is more general than theirs. Moreover, unlike us, their action set does not allows their results to be immediately applicable to MAB. Our regret bound also has a better dependence on $d$ and $\log T$ than theirs, similar to the best regret results for UCB vs.~TS. However, their algorithm is TS, and thus, is less complex than ours. Although it can be still intractable, even when $\mathcal A$ is convex. They needed to do several approximations in order to make their algorithm tractable in their experiments. 


In~\cite{amani2019linear}, reward and cost have the same unknown parameter $\theta_*$, and the cost is defined as $c_t=x_t^\top B\theta_*\leq\tau$, where $B$ is a known matrix. They derive and analyze an explore-exploit algorithm for this setting. Although our rate is better than theirs, i.e.,~$\widetilde{O}(T^{2/3})$, our algorithm cannot immediately give a $\widetilde{O}(\sqrt{T})$ regret for their setting, unless in special cases.

\section{Conclusions}
\label{sec:conclu}
\vspace{-0.1in}

We derived a UCB-style algorithm for a new constrained contextual linear bandit setting, in which the goal is to produce a sequence of policies with maximum expected cumulative reward, while each policy has an expected cost below a certain threshold $\tau$. We proved a $T$-round regret bound of $\widetilde{\mathcal{O}}(\frac{d\sqrt{T}}{\tau-c_0})$ for our algorithm, which shows that the difficulty of the problem depends on the difference between the constraint threshold and the cost of a known feasible action $c_0$. We further specialized our results to MAB and proposed and analyzed a computationally efficient algorithm for this setting. We also proved a lower-bound for our constrained bandit problem and provided simulations to validate our theoretical results. A future direction is to use the optimism-pessimism idea behind our algorithm in other constrained bandit settings, including deriving a UCB-style algorithm for the setting studied in~\cite{amani2019linear} and~\cite{Moradipari19SL}. 




\bibliography{constrained_bandits}
\bibliographystyle{plainnat}

\newpage
\appendix
\section{Proofs of Section~\ref{sec:algo}}
\label{sec:proofs-algo-section}


\subsection{Proof of Proposition~\ref{prop:optimistic-reward-pessimistic-cost}}
\label{subsec:proof-prop-optimistic-reward}

\begin{proof}
We only prove the statement for the optimistic reward, $\widetilde{r}_{\pi,t}$. The proof for the pessimistic cost, $\widetilde{c}_{\pi,t}$, is analogous. From the definition of the confidence set $\mathcal{C}_t^r(\alpha_r)$ in~\eqref{eq:confidence-ellipsoids}, any vector $\theta \in \mathcal{C}_t^r(\alpha_r)$ can be written as $\widehat{\theta}_t + v$, where $v$ satisfying $\| v\|_{\Sigma_t} \leq \alpha_r \beta_t(\delta, d)$. Thus, we may write 

\begin{align*}
\widetilde{r}_{\pi,t} &=  \max_{\theta \in \mathcal{C}_t^r(\alpha_r)} \mathbb{E}_{x \sim \pi}[ \langle x, \theta \rangle ] = \max_{\theta \in \mathcal{C}_t^r(\alpha_r)}  \langle x_\pi, \theta \rangle = \langle x_\pi, \widehat{\theta}_t \rangle + \max_{v:\|v\|_{\Sigma_t} \leq \alpha_r \beta_t(\delta, d)} \langle x_\pi, v \rangle \\
&\stackrel{\text{(a)}}{\leq} \langle x_\pi,\widehat{\theta}_t \rangle + \alpha_r \beta_t(\delta, d) \|  x_\pi \|_{\Sigma_t^{-1}}.
\end{align*}

{\bf (a)} By Cauchy-Schwartz, for all $v$, we have $\langle x_{\pi}, v \rangle \leq \| x_{\pi}\|_{\Sigma_t^{-1}} \| v\|_{\Sigma_t}$. The result follows from the condition on $v$ in the maximum, i.e.,~$\| v \|_{\Sigma_t} \leq \alpha_r \beta_t(\delta,d)$.

Let us define $v^* := \frac{ \alpha_r \beta_t(\delta, d) \Sigma^{-1}_t x_\pi}{\| x_\pi \|_{\Sigma^{-1}_t}}$. This value of $v^*$ is feasible because 
\begin{equation*}
\|v^*\|_{\Sigma_t} = \frac{\alpha_r \beta_t(\delta,d)}{\| x_\pi \|_{\Sigma^{-1}_t}} \sqrt{x_\pi^\top \Sigma_t^{-1} \Sigma_t \Sigma_t^{-1} x_\pi} = \frac{\alpha_r\beta_t(\delta,d)}{\|x_\pi\|_{\Sigma^{-1}_t}} \sqrt{x_\pi^\top\Sigma_t^{-1}x_\pi} = \alpha_r\beta_t(\delta, d).
\end{equation*}
We now show that $v^*$ also achieves the upper-bound in the above inequality resulted from Cauchy-Schwartz 
\begin{equation*}
\langle x_\pi, v^* \rangle =  \frac{\alpha_r \beta_t(\delta, d)  x_\pi^\top \Sigma_t^{-1} x_\pi}{\| x_\pi \|_{\Sigma_t^{-1}}}   = \alpha_r \beta_t(\delta, d) \|  x_\pi \|_{\Sigma_t^{-1}}.
\end{equation*}
Thus, $v^*$ is the maximizer and we can write 
\begin{align*}
\widetilde{r}_{\pi,t} = \langle x_\pi,\widehat{\theta}_t \rangle + \langle x_\pi,v^* \rangle = \langle x_\pi,\widehat{\theta}_t \rangle + \alpha_r \beta_t(\delta, d) \|  x_\pi \|_{\Sigma_t^{-1}},
\end{align*}
which concludes the proof. 
\end{proof}


\subsection{Proof of Proposition~\ref{prop:safe-set}}
\label{subsec:proof-safe-set}


\begin{proof}
Recall that $\tilde{c}_{\pi, t} = \frac{ \langle x_{\pi}^o, e_0\rangle c_0}{ \| x_0\| }  + \langle x_{\pi}^{o, \perp}, \widehat{t}_{\pi}^{o, \perp}\rangle  + \alpha_c \beta_t(\delta, d-1)\| x_{\pi}^{o, \perp} \|_{ (\Sigma^{o, \perp}_t)^{-1}}\leq \tau$.

Conditioned on the event $\mathcal{E}$ as defined in equation \ref{eq:high-prob-event}, it follows that:
\begin{align*}
    |\langle x_{\pi}^{o, \perp} ,   \widehat{\mu}_t^{o, \perp} -\mu_*^{o, \perp}\rangle| &\leq \| \mu_*^{o, \perp} - \widehat{\mu}_t^{o, \perp} \|_{\Sigma_t^{o, \perp}}\| x_{\pi}\|_{(\Sigma^{o, \perp}_t)^{-1}}   \\
    &\leq \langle x_{\pi}^{o, \perp} ,   \widehat{\mu}_t^{o, \perp} -\mu_*^{o, \perp}\rangle \beta_t
    (\delta, d-1) \| x_{\pi}\|_{(\Sigma^{o, \perp}_t)^{-1}}
\end{align*}
And therefore:
\begin{equation}
    0 \leq \langle x_{\pi}^{o, \perp} ,   \widehat{\mu}_t^{o, \perp} -\mu_*^{o, \perp}\rangle + \beta_t
    (\delta, d-1) \| x_{\pi}\|_{(\Sigma^{o, \perp}_t)^{-1}} \label{equation::prop_3_eq1}
\end{equation}
Observe that:
\begin{align}
c_{\pi} &= \frac{ \langle x_{\pi}^o, e_0\rangle c_0}{ \| x_0\| }  + \langle x_{\pi}^{o, \perp}, \mu_*^{o, \perp}\rangle \notag\\ 
&\leq \underbrace{\frac{ \langle x_{\pi}^o, e_0\rangle c_0}{ \| x_0\| }  + \langle x_{\pi}^{o, \perp}, \widehat{\mu}_{t}^{o, \perp}\rangle  + \alpha_c \beta_t(\delta, d-1)\| x_{\pi}^{o, \perp} \|_{ (\Sigma^{o, \perp}_t)^{-1}}}_{\mathrm{I}}\label{equation::prop_3_eq2}
\end{align}
The last inequality holds by adding Inequality \ref{equation::prop_3_eq1} to Inequality \ref{equation::prop_3_eq2}. Since by assumption for all $\pi \in \Pi_t$ term $I \leq \tau$, we obtain that $c_\pi \leq \tau$. The result follows.
\end{proof}


\newpage
\section{Proofs of Section~\ref{section::lin_opt_pess_analysis}}
\label{sec:proofs-analysis-section}


\subsection{Proof of Lemma~\ref{lemma:bounding-4}}
\label{subsec:proof-Lemma-bounding-4}

We first state the following proposition that is used in the proof of Lemma~\ref{lemma:bounding-4}. This proposition is a direct consequence of Eq.~20.9 and Lemma~19.4 in~\cite{lattimore2018bandit}. Similar result has also been reported in the appendix of~\cite{amani2019linear}.

\begin{proposition}
\label{proposition:det_lemma}
For any sequence of actions $(x_1,\ldots,x_t)$, let $\Sigma_t$ be its corresponding Gram matrix defined by~\eqref{eq:Sigmas} with $\lambda \geq 1$. Then, for all $t\in[T]$, we have

\vspace{-0.15in}
\begin{small}
\begin{equation*}
\sum_{s=1}^T \| x_s \|_{\Sigma^{-1}_{s}} \leq  \sqrt{2Td\log\big( 1+\frac{TL^2}{\lambda}\big)}.
\end{equation*}
\end{small}
\end{proposition}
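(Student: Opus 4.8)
The plan is to invoke the standard elliptical-potential argument, reducing the claim to a log-determinant bound. First I would apply the Cauchy--Schwarz inequality termwise,
$$\sum_{s=1}^T \|x_s\|_{\Sigma_s^{-1}} \leq \sqrt{T \sum_{s=1}^T \|x_s\|_{\Sigma_s^{-1}}^2},$$
so that it suffices to establish $\sum_{s=1}^T \|x_s\|_{\Sigma_s^{-1}}^2 \leq 2d\log\big(1 + TL^2/\lambda\big)$. This reduces a sum of norms to a sum of the more tractable squared weighted norms.

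Next I would control this sum of squared weighted norms through the growth of $\det \Sigma_t$. Since the recursion $\Sigma_{s+1} = \Sigma_s + x_s x_s^\top$ follows from the definition in~\eqref{eq:Sigmas}, the matrix determinant lemma gives $\det \Sigma_{s+1} = \det \Sigma_s \,(1 + \|x_s\|_{\Sigma_s^{-1}}^2)$, and telescoping yields $\sum_{s=1}^T \log\big(1 + \|x_s\|_{\Sigma_s^{-1}}^2\big) = \log\big(\det\Sigma_{T+1}/\det\Sigma_1\big)$. Using the elementary inequality $u \leq 2\log(1+u)$, I would then bound $\sum_{s=1}^T \|x_s\|_{\Sigma_s^{-1}}^2 \leq 2\log\big(\det\Sigma_{T+1}/\det\Sigma_1\big)$; this is precisely the elliptical-potential statement (Lemma~19.4 of~\cite{lattimore2018bandit}). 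Finally I would bound the determinant ratio by a trace argument: the AM--GM inequality on the eigenvalues of $\Sigma_{T+1}$ gives $\det\Sigma_{T+1} \leq \big(\mathrm{tr}(\Sigma_{T+1})/d\big)^d$, while Assumption~\ref{ass:bounded-action} gives $\mathrm{tr}(\Sigma_{T+1}) = \lambda d + \sum_{s=1}^T \|x_s\|^2 \leq \lambda d + TL^2$. Since $\det\Sigma_1 = \lambda^d$, this produces $\log\big(\det\Sigma_{T+1}/\det\Sigma_1\big) \leq d\log\big(1 + TL^2/(\lambda d)\big) \leq d\log\big(1 + TL^2/\lambda\big)$ (Eq.~20.9 of~\cite{lattimore2018bandit}). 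Combining the three steps gives the claim.

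The main subtlety to watch is the range of validity of $u \leq 2\log(1+u)$, which holds for $u\in[0,1]$ and therefore requires the per-round term $\|x_s\|_{\Sigma_s^{-1}}^2$ to lie in $[0,1]$. Here $\Sigma_s \succeq \lambda I$ with $\lambda \geq 1$ only yields $\|x_s\|_{\Sigma_s^{-1}}^2 \leq \|x_s\|^2/\lambda \leq L^2/\lambda$, which need not be below $1$ unless one additionally normalizes so that $\lambda \geq L^2$ (equivalently $\|x_s\|\leq 1$). If one insists on $\lambda \geq 1$ alone, the clean workaround is to invoke the truncated form $\sum_{s=1}^T \min\big(1,\|x_s\|_{\Sigma_s^{-1}}^2\big) \leq 2\log\big(\det\Sigma_{T+1}/\det\Sigma_1\big)$ from the referenced lemma and then account for the (finitely many) rounds in which the per-round term exceeds $1$. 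I expect this normalization bookkeeping, rather than any of the algebraic steps, to be the only delicate point; everything else is a direct application of the cited results of~\cite{lattimore2018bandit}.
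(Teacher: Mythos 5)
Your proof is correct and is exactly the argument the paper relies on: the paper's ``proof'' of this proposition is simply the citation to Lemma~19.4 and Eq.~20.9 of \cite{lattimore2018bandit}, which you unpack via Cauchy--Schwarz, the telescoping log-determinant identity, and the determinant--trace (AM--GM) bound. Your closing caveat is also well taken---with only $\lambda \geq 1$ the inequality $u \leq 2\log(1+u)$ need not apply per round unless $L^2 \leq \lambda$, so one must use the truncated form $\sum_{s=1}^T \min\bigl(1,\|x_s\|_{\Sigma_s^{-1}}^2\bigr) \leq 2\log\bigl(\det\Sigma_{T+1}/\det\Sigma_1\bigr)$ (at the cost of a $\max(1, L^2/\lambda)$ factor in general); this is a normalization issue the paper silently glosses over.
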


We now state the proof of Lemma~\ref{lemma:bounding-4}.

\begin{proof}[Proof of Lemma~\ref{lemma:bounding-4}]
We prove this lemma through the following sequence of inequalities:

\vspace{-0.15in}
\begin{small}
\begin{align*}
\sum_{t=1}^T \langle x_t, \widetilde{\theta}_t\rangle - \langle x_t, \theta_* \rangle &\stackrel{\text{(a)}}{\leq} \sum_{t=1}^T \| x_t \|_{\Sigma_t^{-1}}\| \widetilde{\theta}_t - \theta_* \|_{\Sigma_t} \stackrel{\text{(b)}}{\leq} \sum_{t=1}^T (1+\alpha_r) \beta_t(\delta,d)\| x_t \|_{\Sigma_t^{-1}} \\ 
&\stackrel{\text{(c)}}{\leq} (1+\alpha_r)\beta_T(\delta,d) \sum_{t=1}^T  \| x_t \|_{\Sigma_t^{-1}} \stackrel{\text{(d)}}{\leq}(1+\alpha_r)\beta_T(\delta,d)  \sqrt{2Td\log\big(1+\frac{TL^2}{\lambda}\big)}
\end{align*}
\end{small}
\vspace{-0.15in}

{\bf (a)} This is by Cauchy-Schwartz. 

{\bf (b)} This follows from the fact that $\widetilde{\theta}_t\in \mathcal{C}_t^r(\alpha_r)$ and we are on event $\mathcal E$.

{\bf (c)} This is because $\beta_t(\delta,d)$ is an increasing function of $t$, i.e.,~$\beta_T(\delta,d) \geq \beta_t(\delta,d),\;\forall t\in [T]$.

{\bf (d)} This is a direct result of Proposition~\ref{proposition:det_lemma}.
\end{proof}


\subsection{Proof of Lemma~\ref{lemma:inverse_norm_domination}}
\label{subsec:proof-Lemma-inverse-norm-domination}

\begin{proof}

In order to prove the desired result it is enough to show that:
\begin{equation*}
    \left( x_{\pi}^{o, \perp}\right)^\top  \left(  \Sigma_t^{o, \perp}\right)^{\dagger} x_\pi^{o, \perp} \leq x_\pi^\top \Sigma_t^{-1}x_\pi
\end{equation*}

w.l.o.g. we can assume $x_o = e_1$, the first basis vector. Notice that in this case $\Sigma_t^{o, \perp}$ can be thought of as a submatrix of $\Sigma_t$ such that $\Sigma_t[2:, 2:] = \Sigma_t^{o, \perp}$, where $\Sigma_t[2:, 2:]$ denotes the submatrix with row and column indices from $2$ onwards.

Using the following formula for the inverse of a psd symmetric matrix:

\begin{equation*}
    \begin{bmatrix}
    Z & \delta\\
    \delta^\top & A 
    \end{bmatrix} = \begin{bmatrix}
                   \frac{1}{D} &  -\frac{ A^{-1}\delta}{D}\\
                   - \frac{\delta^\top A^{-1} }{D}&A^{-1} + \frac{ A^{1} \delta \delta^\top A^{-1}}{D} 
                \end{bmatrix}
\end{equation*}

Where $D = z- \delta^\top A^{-1} \delta$. In our case $D = \Sigma_t[1,1] - \Sigma_t[2:d]^\top \left(  \Sigma_t^{o, \perp}  \right)^{-1} \Sigma_t[2:d] \in \mathbb{R}$. Observe that since $\Sigma_t$ is PSD, $D \geq 0$. Therefore:
\begin{equation*}
    \Sigma_t^{-1} = \begin{bmatrix}
                    1/D & -\frac{\left(\Sigma_t^{o, \perp}\right)^{-1} \Sigma_t[2, :d] }{D} \\
                    -\frac{\Sigma_t^\top [2:d]\left( \Sigma_t^{o, \perp}\right)^{-1} }{D} & \left( \Sigma_t^{o, \perp}    \right)^{-1} + \frac{ \left( \Sigma_t^{o, \perp}\right)^{-1} \Sigma_t[2:d]\Sigma_t[2:d]\left(\Sigma_t^{o, \perp}\right)^{-1} }{D}
                \end{bmatrix}
\end{equation*}

Then:

\begin{align*}
    x_\pi^\top \left(    \Sigma_t^{-1} \right)^{-1} x_\pi &= \frac{ x_\pi(1)^2 - 2x_\pi(1)\Sigma_t[2:d]^\top\left(   \Sigma_t^{o, \perp}  \right)^{-1} x_\pi[2:d]}{D} +\\
    &\frac{ x_\pi[2:d]^\top \left(\Sigma_t^{o, \perp} \right)^{-1} \Sigma_t[2:d]\Sigma_t[2:d]^\top\left(   \Sigma_t^{o, \perp}  \right)^{-1} x_\pi[2:d] }{D} \\
    &+ x_\pi[2:d]^\top \left( \Sigma_t^{o, \perp}     \right)^{-1} x_\pi[2:d]\\
    &\geq x_\pi[2:d]^\top \left( \Sigma_t^{o, \perp}     \right)^{-1} x_\pi[2:d]
\end{align*}

The result follows by noting that $x_\pi[2:d] = x_\pi^{o, \perp}$.
\end{proof}


\subsection{Proof of Lemma~\ref{lemma:linear_bandits_optimism}}
\label{subsec:proof-Lemma-term-I}

\begin{proof}
For any policy $\pi$, we have 
\begin{equation}
\label{eq:tempB0}
\widetilde{r}_{\pi,t} = \max_{\theta \in \mathcal{C}_t^r(\alpha_r)} \langle x_\pi,\theta \rangle \geq \langle x_\pi, \theta_* \rangle = r_\pi.
\end{equation}
If $\pi_t^* \in \Pi_t$, then by the definition of $\pi_t$ (Line~4 of Algorithm~\ref{alg:optimistic-pessimistic-LB}), we have
\begin{equation}
\label{eq:tempB1}
\widetilde{r}_{\pi_t,t} \geq \widetilde{r}_{\pi_t^*,t}. 
\end{equation}
Combining~\eqref{eq:tempB0} and~\eqref{eq:tempB1}, we may conclude that $\widetilde{r}_{\pi_t,t} \geq r_{\pi_t^*}$ as desired.

We now focus on the case that $\pi_t^* \not\in \Pi_t$, i.e., 
\begin{equation*}
\widetilde{c}_{\pi_t^*,t} = \frac{\langle x_{\pi^*_t}^o,e_0 \rangle c_0}{\|x_0\|} + \langle x_{\pi^*_t}^{o,\perp},\widehat{\mu}^{o,\perp}_t \rangle + \alpha_c\beta_t(\delta,d-1) \| x_{\pi^*_t}^{o,\perp} \|_{(\Sigma_t^{o,\perp})^{-1}} > \tau.
\end{equation*}
We define a mixture policy $\widetilde{\pi}_t = \eta_t \pi_t^* + (1-\eta_t)\pi_0$, where $\pi_0$ is the policy that always selects the safe action $x_0$ and $\eta_t \in [0,1]$ is the maximum value of $\eta$ such that $\big(\eta\pi^*_t+(1-\eta)\pi_0\big) \in \Pi_t$. Conceptually, $\eta_t$ shows how close is the optimal policy $\pi^*_t$ to the set of safe policies $\Pi_t$. 

By the definition of $\widetilde{\pi}_t$, we have
\begin{equation}
\label{eq:tempB2}
x^o_{\widetilde{\pi}_t} = \eta_t x^o_{\pi_t^*} + (1-\eta_t)x_0, \qquad x^{o,\perp}_{\widetilde{\pi}_t} = \eta_t x^{o,\perp}_{\pi_t^*},
\end{equation}
which allows us to write
\begin{align*}
\widetilde{c}_{\widetilde{\pi}_t,t} &= \frac{\eta_t\langle x_{\pi_t^*}^o,e_0 \rangle + (1-\eta_t) \langle x_0,e_0 \rangle}{\|x_0\|}\cdot c_0 + \eta_t \langle x_{\pi_t^*}^{o,\perp}, \widehat{\mu}^{o,\perp}_t \rangle + \eta_t\alpha_c\beta_t(\delta,d-1) \| x_{\pi_t^*}^{o,\perp} \|_{(\Sigma_t^{o,\perp})^{-1} } \\ 
&= \frac{(1-\eta_t) \langle x_0,e_0 \rangle c_0}{\|x_0\|} + \eta_t \widetilde{c}_{\pi_t^*,t}.
\end{align*}
From the definition of $\eta_t$, we have $\widetilde{c}_{\widetilde{\pi}_t,t}=\frac{(1-\eta_t) \langle x_0,e_0 \rangle c_0}{\|x_0\|} + \eta_t \widetilde{c}_{\pi_t^*,t}=\tau$, and thus, we may write
\begin{align}
\eta_t &= \frac{\tau - \frac{\langle x_0,e_0 \rangle c_0}{\|x_0\|}}{\widetilde{c}_{\pi_t^*,t} - \frac{\langle x_0,e_0 \rangle c_0}{\|x_0\|}} = \frac{\tau - c_0}{\frac{\langle x_{\pi_t^*}^o,e_0\rangle c_0}{\|x_0\|} + \langle x_{\pi_t^*}^{o,\perp}, \widehat{\mu}^{o,\perp}_t \rangle + \alpha_c\beta_t(\delta,d-1)\|x^{o,\perp}_{\pi_t^*}\|_{(\Sigma^{0, \perp}_t)^{-1}} - c_0} \nonumber \\
&= \frac{\tau - c_0}{\frac{\langle x_{\pi_t^*}^o,e_0 \rangle c_0}{\|x_0\|} + \langle x_{\pi_t^*}^{o,\perp},\mu_* \rangle + \langle x_{\pi_t^*}^{o, \perp}, \widehat{\mu}^{o,\perp}_t - \mu_* \rangle + \alpha_c\beta_t(\delta, d-1)\|x^{o,\perp}_{\pi_t^*}\|_{(\Sigma^{o, \perp}_t)^{-1}} - c_0} \nonumber \\
&\stackrel{\text{(a)}}{\geq} \frac{\tau - c_0}{\frac{\langle x_{\pi_t^*}^o, e_0 \rangle c_0}{\|x_0\|} + \langle x_{\pi_t^*}^{o,\perp},\mu_* \rangle + (1+\alpha_c)\beta_t(\delta,d-1)\|x^{o,\perp}_{\pi_t^*}\|_{(\Sigma_t^{o,\perp})^{-1}} - c_0} \nonumber \\
&\stackrel{\text{(b)}}{\geq} \frac{\tau - c_0}{\tau + (\alpha_c+1)\beta_t(\delta,d-1)\|x^{o,\perp}_{\pi_t^*}\|_{(\Sigma_t^{o,\perp})^{-1}} - c_0}.
\label{eq:tempB3}
\end{align}
{\bf (a)} This holds because 
\begin{equation*}
\langle x^{o,\perp}_{\pi_t^*},\widehat{\mu}^{o,\perp}_t - \mu_* \rangle = \langle x^{o,\perp}_{\pi_t^*},\widehat{\mu}^{o,\perp}_t - \mu^{o,\perp}_* \rangle \leq \|\widehat{\mu}^{o,\perp}_t - \mu_*^{o,\perp}\|_{\Sigma^{o,\perp}_t} \|x^{o,\perp}_{\pi_t^*}\|_{(\Sigma^{o,\perp}_t)^{-1}} \leq  \beta_t(\delta, d-1) \|x^{o,\perp}_{\pi_t^*}\|_{(\Sigma^{o, \perp}_t)^{-1}}, 
\end{equation*}
where the last inequality is because we are on the event $\mathcal E$. 

{\bf (b)} This passage is due to the fact that the optimal policy $\pi^*_t$ is feasible, and thus, $\mathbb E_{x\sim\pi_t^*}[\langle x,\mu_* \rangle] \leq \tau$. Therefore, we may write 
\begin{align*}
\mathbb E_{x\sim\pi_t^*}[\langle x,\mu_* \rangle] &= \mathbb E_{x\sim\pi_t^*}[\langle x^o,\mu_* \rangle] + \langle x_{\pi_t^*}^{o,\perp},\mu_* \rangle = \mathbb E_{x\sim\pi_t^*}[\langle \langle x,e_0 \rangle e_0,\mu_* \rangle] + \langle x_{\pi_t^*}^{o,\perp},\mu_* \rangle \\
&= \mathbb E_{x\sim\pi_t^*}[\langle \langle x,e_0 \rangle \frac{x_0}{\|x_0\|},\mu_* \rangle] + \langle x_{\pi_t^*}^{o,\perp},\mu_* \rangle = \frac{c_0}{\|x_0\|}\mathbb E_{x\sim\pi_t^*}[\langle x,e_0\rangle] + \langle x_{\pi_t^*}^{o,\perp},\mu_* \rangle \\
&= \frac{\langle x^o_{\pi_t^*},e_0 \rangle c_0}{\|x_0\|} + \langle x_{\pi_t^*}^{o,\perp},\mu_* \rangle \leq \tau.
\end{align*}
Since $\widetilde{\pi}_t \in \Pi_t$, we have
\begin{align}
\widetilde{r}_{\pi_t,t} &\geq \widetilde{r}_{\widetilde{\pi}_t,t} = \langle x_{\widetilde{\pi}_t},\widehat{\theta}_t \rangle + \alpha_r\beta_t(\delta,d)\|x_{\widetilde{\pi}_t}\|_{\Sigma_t^{-1}} = \langle x_{\widetilde{\pi}_t},\theta_* \rangle + \langle x_{\widetilde{\pi}_t},\widehat{\theta}_t - \theta_* \rangle + \alpha_r\beta_t(\delta,d)\|x_{\widetilde{\pi}_t}\|_{\Sigma_t^{-1}} \nonumber \\
&\stackrel{\text{(a)}}{\geq} \langle x_{\widetilde{\pi}_t},\theta_* \rangle +  (\alpha_r - 1)\beta_t(\delta,d)\|x_{\widetilde{\pi}_t}\|_{\Sigma_t^{-1}} \stackrel{\text{(b)}}{\geq} \langle x_{\widetilde{\pi}_t},\theta_* \rangle + (\alpha_r - 1)\beta_t(\delta,d-1)\|x^{o,\perp}_{\widetilde{\pi}_t}\|_{(\Sigma^{o,\perp}_t)^{-1}} \nonumber \\
&\stackrel{\text{(c)}}{=} \eta_t \langle x_{\pi^*},\theta_* \rangle + (1-\eta_t)\langle x_0,\theta_* \rangle + \eta_t(\alpha_r-1)\beta_t(\delta,d-1)\|x^{o,\perp}_{\pi_t^*}\|_{(\Sigma^{o, \perp}_t)^{-1}} \nonumber \\
&\stackrel{\text{(d)}}{\geq} \eta_t \langle x_{\pi_t^*},\theta_* \rangle + \eta_t(\alpha_r-1)\beta_t(\delta,d-1)\|x^{o,\perp}_{\pi^*}\|_{(\Sigma^{o,\perp}_t)^{-1}} \nonumber \\
&\stackrel{\text{(e)}}{\geq} \underbrace{\Big(\frac{\tau - c_0}{\tau - c_0 + (\alpha_c+1) \beta_t(\delta, d-1) \|x^{o,\perp}_{\pi_t^*}\|_{(\Sigma_t^{o,\perp})^{-1}}}\Big) \Big(\langle x_{\pi_t^*},\theta_* \rangle + (\alpha_r-1)\beta_t(\delta,d-1) \|x^{o,\perp}_{\pi_t^*}\|_{(\Sigma^{o,\perp}_t)^{-1}}\Big)}_{C_0}.
\label{eq:tempB4}
\end{align}
{\bf (a)} This is because we may write 
\begin{equation*}
|\langle x_{\widetilde{\pi}_t},\widehat{\theta}_t - \theta_* \rangle| \leq \|\widehat{\theta}_t - \theta_*\|_{\Sigma_t} \|x_{\widetilde{\pi}_t}\|_{\Sigma_t^{-1}} \leq \beta_t(\delta,d) \|x_{\widetilde{\pi}_t}\|_{\Sigma_t^{-1}},
\end{equation*}
where the last inequality is due to the fact that we are on the event $\mathcal E$. Thus, $\langle x_{\widetilde{\pi}_t},\widehat{\theta}_t - \theta_* \rangle \geq -\beta_t(\delta,d) \|x_{\widetilde{\pi}_t}\|_{\Sigma_t^{-1}}$.

{\bf (b)} This is a consequence of Lemma~\ref{lemma:inverse_norm_domination} stated in the paper and proved in Appendix~\ref{subsec:proof-Lemma-inverse-norm-domination}. 

{\bf (c)} This is from the definition of $\widetilde{\pi}$ and Eq.~\ref{eq:tempB2}.

{\bf (d)} This is because $\eta_t\in [0,1]$ and from  Assumption~\ref{ass:bounded-mean-reward-cost} we have that all expected rewards are positive (belong to $[0,1]$), and thus, $\langle x_0,\theta_* \rangle \geq 0$. 

{\bf (e)} This is by lower-bounding $\eta_t$ from~\eqref{eq:tempB3}. 

Let us define the shorthand notation $C_1:=\beta_t(\delta,d-1)\|x^{o,\perp}_{\pi_t^*}\|_{(\Sigma^{o,\perp}_t)^{-1}}$. Thus, we may write $C_0$ as 
\begin{equation*}
C_0 = \frac{\tau - c_0}{\tau - c_0 + (1+\alpha_c)C_1}\times\big(\langle x_{\pi_t^*},\theta_* \rangle + (\alpha_r-1)C_1\big).
\end{equation*}
Note that $C_0 \geq \langle x_{\pi_t^*}, \theta_* \rangle = r_{\pi^*_t}$ (and as a results $\widetilde{r}_{\pi_t,t}\geq r_{\pi^*_t}$ as desired) iff:
\begin{equation*}
(\tau-c_0) r_{\pi_t^*} + (\tau - c_0) (\alpha_r - 1) C_1 \geq (\tau - c_0) r_{\pi_t^*} + (1 + \alpha_c) C_1 r_{\pi_t^*},
\end{equation*}
which holds iff: $(\tau-c_0)(\alpha_r-1)C_1 \geq (1+\alpha_c)C_1r_{\pi_t^*}$.

Since $r_{\pi_t^*}\leq 1$ from Assumption~\ref{ass:bounded-mean-reward-cost}, this holds iff: $1+\alpha_c \leq (\tau-c_0) (\alpha_r-1)$. This concludes the proof as for both cases of $\pi^*_t\in\Pi_t$ and $\pi^*_t\not\in\Pi_t$, we proved that $\widetilde{r}_{\pi_t,t}\geq r_{\pi_t^*}$.
\end{proof}

\subsection{Learning the safe policy's value}\label{section::safe_policy_value}

In this section we relax Assumption~\ref{ass:safe-action}, and instead assume we only have the knowledge of a safe arm, but not any knowledge of its value $c_0$.

If the cost of the safe arm $c_0$ is unknown, we start by taking the safe action $x_0$ for $T_0$ rounds to produce first an empirical mean estimator for $\hat{c}_9$. Notice that for all $\delta \in (0,1)$, $\hat{c}_0$ satisfies:

\begin{equation}\label{equation::helper_unknown_c0}
    \mathbb{P}\left(\hat{c}_0 \leq c_0 - \sqrt{\frac{2 \log\left(1/\delta \right)}{T_0} }       \right) \leq \delta
\end{equation}

Let $\tilde{c}_0 = \hat{c}_0 + \sqrt{\frac{2 \log\left(1/\delta \right)}{T_0} } $. By inequality~\ref{equation::helper_unknown_c0}, it follows that with probability at least $1-\delta$:
\begin{equation*}
\tilde{c}_0 \geq c_0  
\end{equation*}
We select $T_0$ in an adaptive way. In other words, we do the following:

Let $\delta  = \frac{1}{T^2}$. And let $\hat{c}_0(t)$ be the sample mean estimator of $c_0$, when using only $t$ samples. Similarly define $\tilde{c}_0(t) = \hat{c}_0(t) + \sqrt{\frac{2\log(1/\delta)}{t} }$ Let's condition on the event $\mathcal{E}$ that for all $t \in [T]$:
\begin{equation*}
    |\hat{c}_0(t)  - c_0| \leq \sqrt{\frac{ 2\log(1/\delta)}{t} } 
\end{equation*}
By assumption $\mathbb{P}(\mathcal{E}) \geq 1-T2\delta = 1-\frac{2}{T}$. Let $T_0$ be the first time that $\tilde{c}_0(T_0) + 2\sqrt{\frac{ 2\log(1/\delta)}{T_0}} \leq \tau$. 

Notice that in this case and conditioned on $\mathcal{E}$ and therefore on $\tilde{c}_0(T_0) \geq c_0$:

\begin{equation*}
    \sqrt{ \frac{2 \log(1/\delta)}{T_0}} \leq \frac{\tau - c_0}{2} \quad\text{ i.e. }\quad T_0 \geq \frac{8\log(1/\delta)}{(\tau - c_0)^2}
\end{equation*}

In other words, this test does not stop until $T_0 \geq \frac{8\log(1/\delta)}{(\tau - c_0)^2}$. Now we see it won't take much longer than that to stop:

Conversely, let $T_0' \geq \frac{32\log(1/\delta)}{(\tau-  c_0)^2}$. For any such $T_0'$ we observe that by conditioning on $\mathcal{E}$:
\begin{equation*}
    \tilde{c}_0(T_0')  + 2 \sqrt{\frac{ 2\log(1/\delta)}{T_0'} } \leq c_0 + 4 \sqrt{\frac{ 2\log(1/\delta)}{T_0'} }\leq \tau
\end{equation*}
Thus conditioned on $\mathcal{E}$, we conclude $\frac{8\log(1/\delta)}{(\tau - c_0)^2} \leq T_0 \leq \frac{32\log(1/\delta)}{(\tau-  c_0)^2} $. Then, 

Therefore $\hat{\delta}_c = \sqrt{ \frac{8 \log(1/\delta)}{T_0}}$ would serve as a conservative estimator for $\frac{\tau - c_0}{2}$ satisfying:
\begin{equation*}
 \frac{\tau-c_0}{2} \leq \hat{\delta}_c \leq   \tau - c_0  
\end{equation*}

We proceed by warm starting our estimators for $\theta_*$ and $\mu_*$ using the data collected by playing $x_0$. However, instead of estimating $\mu_*^{o, \perp}$, we build an estimator for $\mu_*$ over all its directions, including $e_0$, similar to what OPLB does for $\theta_*$. We then set $\frac{\alpha_r}{\alpha_c} = 1/\hat\delta_c$ and run Algorithm~\ref{alg:optimistic-pessimistic-LB} for rounds $t > T_0$. Since the scaling of $\alpha_r$ w.r.t. $\alpha_c$ is optimal up to constants, the same arguments hold.

\newpage

\section{Constrained Multi-Armed Bandits}
\label{section::constrained_multi_armed_bandit_setup}

\subsection{Optimism Pessimism}\label{section::appendix_MAB_optimism_pessimism_algorithm}

Here we reproduce the full pseudo-code for OPB:

{\centering
\begin{minipage}{.9\linewidth}
    \begin{algorithm}[H]
    \textbf{Input:} Number of arms $K$, constants $\alpha_r, \alpha_c \geq 1$. \\
    \For{$t=1, \ldots , T$}{
    1. Compute estimates $\{u_a^r(t) \}_{a\in\mathcal A}$, $\{u_a^c(t)  \}_{a\in\mathcal A}$.  \\
    2. Form the approximate LP (\ref{eq::noisy_LP}) using these estimates.\\
    3. Find policy $\pi_t$ by solving~\eqref{eq::noisy_LP}.\\
    4. Play arm $a \sim \pi_t$ \; \\
     }    
     \caption{Optimism-Pessimism}
    \label{alg::optimism_pessimism}
    \end{algorithm}
\end{minipage}
}

Similar to the case of OPLB, we define $\Pi_t = \{ \pi \in \Delta_{\mathcal{A}} : \sum_{a \in \mathcal{A}} \pi_a u_a^c(t) \leq \tau \}$. We also define $\beta_a(0) = 0$ for all $a\in \mathcal{A}$. 

\subsection{The LP Structure}\label{section::LP_structure_appendix}

The main purpose of this section is to prove the optimal solutions of the linear program from (\ref{eq::noisy_LP}) are supported on a set of size at most $2$. This structural result will prove important to develop simple efficient algorithms to solve for solving it. Let's recall the form of the Linear program in \ref{eq::noisy_LP} is:
\begin{align*}
    \max_{\pi \in \Delta_K} \sum_{a \in \mathcal{A}} \pi_a u^r_{a}(t) \\
    \text{s.t. } \sum_{a \in \mathcal{A}} \pi_a u_a^c(t) \leq \tau 
\end{align*}

Let's start by observing that in the case $K= 2$ with $\mathcal{A}  = \{ a_1, a_2\}$ and $u_{a_1}^c(t) < \tau < u_{a_2}^c(t) $, the optimal policy $\pi^*$ is a mixture policy satisfying:
\begin{align}
    \pi_{a_1}^* &= \frac{ u_{a_2}^c(t) - \tau  }{u^c_{a_2}(t) - u^c_{a_1}(t)} \notag\\
    \pi_{a_2}^* &= \frac{ \tau - u_{a_1}^c(t)}{u^c_{a_2}(t) - u^c_{a_1}(t)} \label{equation::optimal_policy_pair}
\end{align}

The main result in this section is the following Lemma:

\begin{lemma}[$\pi^*$ support]\label{lemma::LP_support_appendix} If (\ref{eq::noisy_LP}) is feasible, there exists an optimal solution with at most $2$ non-zero entries. 
\end{lemma}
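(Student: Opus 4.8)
The plan is to prove the slightly stronger statement that the feasible polytope of~(\ref{eq::noisy_LP}) admits a maximizer supported on at most two arms, via a support-reduction argument; this is cleaner for me than invoking the general theory of basic feasible solutions, though the vertex-counting argument (two active constraints — the normalization $\sum_a \pi_a = 1$ and the cost inequality — force at least $K-2$ of the nonnegativity constraints to be active at a vertex) yields the same conclusion. First I would note that a maximizer exists at all: the feasible set is a closed subset of the probability simplex $\Delta_K$, hence compact, and the objective $\pi \mapsto \sum_a \pi_a u_a^r(t)$ is continuous, so by the assumed feasibility a maximizer is attained. Among all maximizers, I would fix one, $\pi^\star$, whose support $S = \{a : \pi^\star_a > 0\}$ has minimal cardinality, and aim to show $|S| \leq 2$.

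Suppose for contradiction that $|S| \geq 3$. I would look for a nonzero perturbation direction $v \in \mathbb{R}^K$ supported on $S$ that preserves both the normalization and the cost, i.e. satisfying the two homogeneous linear equations $\sum_{a \in S} v_a = 0$ and $\sum_{a \in S} v_a u_a^c(t) = 0$. Since these are two constraints on $|S| \geq 3$ unknowns, the solution space has dimension at least $|S| - 2 \geq 1$, so a nonzero $v$ exists; and because $\sum_{a \in S} v_a = 0$ with $v \neq 0$, the vector $v$ must have both a strictly positive and a strictly negative coordinate.

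Next I would analyze the line $\pi^\star + \epsilon v$. For $|\epsilon|$ small it remains in $\Delta_K$ (the coordinates in $S$ are strictly positive, and those outside $S$ stay at zero because $v$ is supported on $S$), while its cost equals that of $\pi^\star$ and is therefore still $\leq \tau$; so it is feasible. The objective changes by $\epsilon \sum_a v_a u_a^r(t)$. If this coefficient were nonzero, moving a small step in the improving direction would yield a feasible point with strictly larger objective, contradicting the optimality of $\pi^\star$; hence $\sum_a v_a u_a^r(t) = 0$, so the objective is constant along $v$. I would then grow $|\epsilon|$ in the direction that drives some strictly positive coordinate of $\pi^\star$ to zero — a finite such $\epsilon$ exists precisely because $v$ has entries of both signs — producing a feasible maximizer whose support is a proper subset of $S$. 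This contradicts the minimality of $|S|$, so $|S| \leq 2$, as claimed. The two-arm case is then exactly the situation handled in closed form by~(\ref{equation::optimal_policy_pair}).

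The hard part will not be any single calculation but rather the careful bookkeeping around the inequality constraint: I must make sure that requiring $\sum_{a\in S} v_a u_a^c(t) = 0$ keeps the cost constraint satisfied whether or not it is active at $\pi^\star$, that the dimension count genuinely delivers a nonzero $v$ once $|S|\geq 3$, and that the step size reaching the boundary is finite. These are the places where a careless version of the argument could slip, but each is routine once the sign structure of $v$ is used.
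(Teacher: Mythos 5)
Your proof is correct, but it takes a genuinely different route from the paper's. The paper argues through LP duality: it writes the dual $\min_{\lambda \geq 0}\max_a \lambda(\tau - u_a^c(t)) + u_a^r(t)$, invokes complementary slackness to confine the support of any optimal $\pi^*$ to the dual argmax set $\mathcal{A}^*$, computes $\lambda^*$ explicitly when two arms of $\mathcal{A}^*$ are tied, and then reconstructs a feasible two-arm primal solution whose value matches the dual value. Your argument is purely primal: among maximizers (which exist by compactness, as you note) you take one of minimal support $S$, and if $|S|\geq 3$ you find a nonzero direction $v$ supported on $S$ annihilating both the normalization and the pessimistic cost; the sign structure forced by $\sum_{a\in S} v_a = 0$ lets you first conclude the objective is constant along $v$ (else a small move would beat the optimum) and then push $\epsilon$ to a finite boundary value where a coordinate of $S$ vanishes, strictly shrinking the support --- contradicting minimality. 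Your handling of the case where the cost constraint is inactive at the optimum is uniform (the cost is preserved exactly along $v$, active or not), whereas the paper routes that case through the $\lambda^* = 0$ branch of the dual, and the paper's closing feasibility case analysis is in fact the most delicate part of its proof. What the dual route buys is algorithmic content: the explicit $\lambda^*$ and the closed-form mixture weights in (\ref{equation::optimal_policy_pair}) directly underlie the pair-enumeration implementation of OPB described in Appendix~\ref{section::LP_structure_appendix}. What your route buys is elementarity and immediate generality: replacing your two homogeneous equations by $m+1$ of them shows a solution with support at most $m+1$ under $m$ constraints, precisely the extension the paper asserts and leaves as an exercise. Both are complete proofs of the lemma.
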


\begin{proof}
We start by inspecting the dual problem of (\ref{eq::noisy_LP}):
\begin{equation}\label{equation::dual_LP}
    \min_{\lambda \geq 0} \max_{a} \lambda( \tau - u_a^c(t) ) + u_a^r(t) \tag{D}
\end{equation}
This formulation is easily interpretable. The quantity $\tau - u_a^c(t)$ measures the feasibility gap of arm $a$, while $u^r_a(t)$ introduces a dependency on the reward signal. Let $\lambda^*$ be the optimal value of the dual variable $\lambda$. Define $\mathcal{A}^*\subseteq \mathcal{A}$ as $\mathcal{A}^* = \arg\max_a \lambda^* (\tau -  u^c_a(t) ) + u^r_a(t)$. By complementary slackness the set of nonzero entries of $\pi^*$ must be a subset of $\mathcal{A}^*$.

If $\left|\mathcal{A}^* \right| = 1$, complementary slackness immediately implies the desired result. If $a_1, a_2$ are two elements of $\mathcal{A}^*$, it is easy to see that:
\begin{equation*}
  u^r_{a_1}(t) - \lambda^* u^c_{a_1}(t) = u^r_{a_2}(t) -\lambda^* u^c_{a_2}(t) , 
\end{equation*}
and thus,
\begin{equation}
\label{eq::dual_lambda_explicit}
    \lambda^* = \frac{ u_{a_2}^r(t) - u^r_{a_1}(t)}{ u_{a_2}^c(t) - u_{a_1}^c(t)}
\end{equation}
If $\lambda^* = 0$, the optimal primal value is achieved by concentrating all mass on any of the arms in $\mathcal{A}^*$. Otherwise, plugging~\ref{eq::dual_lambda_explicit} back into the objective of (\ref{equation::dual_LP}) and rearranging the terms, we obtain
\begin{align*}
    s
\end{align*}

\begin{align*}
    \text{(D)}&= \lambda^*( \tau- u_{a_1}^c(t)) + u^r_{a_1}(t)\\
    &= u^r_{a_1}(t)\left(\frac{ \tau - u_{a_1}^c(t) }{ u_{a_2}^c(t) - u_{a_1}^c(t)} \right) + u_{a_2}^r(t) \left(\frac{ u_{a_2}^c(t) - \tau  }{ u_{a_2}^c(t) - u_{a_1}^c(t) }  \right).
\end{align*}
If $u_{a_2}^c(t)  \geq \tau \geq u_{a_1}^c(t)$, we obtain a feasible value for the primal variable $\pi_{a_1}^* = \frac{ \tau - u_{a_1}^c(t) }{ u_{a_2}^c(t) - u_{a_1}^c(t)} $, $\pi_{a_2}^* = \frac{  u_{a_2}^c(t) -\tau }{ u_{a_2}^c(t) - u_{a_1}^c(t)}$ and zero for all other $a \in \mathcal{A} \backslash \{ a_1, a_2 \}$. Since we have assumed (\ref{eq::noisy_LP}) to be feasible there must be either one arm $a^* \in \mathcal{A}^*$ satisfying $a^* = \arg\max_{a \in \mathcal{A}^*} u_{a}^r(t)$ and $u_{a^*}^c(t) \leq \tau$ or two such arms $a_1$ and $a_2$ in $\mathcal{A}^*$ that satisfy $u_{a_2}^c(t)  \geq \tau \geq u_{a_1}^c(t)$, since otherwise it would be impossible to produce a feasible primal solution without having any of its supporting arms $a$ satisfying $u_a^c(t) \leq \tau$, there must exist an arm $a \in \mathcal{A}^*$ with $u_a^c(t) < \tau$. This completes the proof. \end{proof}

 From the proof of Lemma \ref{lemma::LP_support} we can conclude the optimal policy is either a delta mass centered at the arm with the largest reward - whenever this arm is feasible -  or it is a strict mixture supported on two arms. 

A further consequence of Lemma~\ref{lemma::LP_support_appendix} is that it is possible to find the optimal solution $\pi^*$ to problem~\ref{eq::noisy_LP} by simply enumerating all pairs of arms $(a_i, a_j)$ and all singletons, compute their optimal policies (if feasible) using Equation~\ref{equation::optimal_policy_pair} and their values and selecting the feasible pair (or singleton) achieving the largest value. More sophisticated methods can be developed by taking into account elimination strategies to prune out arms that can be determined in advance not to be optimal nor to belong to an optimal pair. Overall this method is more efficient than running a linear programming solver on~(\ref{eq::noisy_LP}).

If we had instead $m$ constraints, a similar statement to Lemma \ref{lemma::LP_support} holds, namely it is possible to show the optimal policy will have support of size at most $m+1$. The proof is left as an exercise for the reader.

\subsection{Regret analysis}\label{section:regret_analysis_appendix}

In order to show a regret bound for Algorithm \ref{alg::optimism_pessimism}, we start with the following regret decomposition:
\begin{align*}
    \mathcal{R}_\Pi(T) &= \sum_{t=1}^T \mathbb{E}_{a\sim \pi^*}[\bar{r}_a] - \mathbb{E}_{a \sim \pi_t}[\bar{r}_{a}] \\
    &=\underbrace{\left( \sum_{t=1}^T \mathbb{E}_{a\sim \pi^*}[\bar{r}_a] - \mathbb{E}_{a \sim \pi_t}[u_a^r(t)] \right)}_{(i)} +\underbrace{ \left(\sum_{t=1}^T \mathbb{E}_{a\sim \pi_t}[u^r_a(t)] - \mathbb{E}_{a \sim \pi_t}[\bar{r}_{a}] \right) }_{(ii)}.
\end{align*}
In order to bound $\mathcal{R}_\Pi(T)$, we independently bound terms $(i)$ and $(ii)$. 

We start by bounding term (i). We proceed by first proving an Lemma~\ref{lemma::optimism}, the equivalent version of Lemma~\ref{lemma:linear_bandits_optimism} for the multi armed bandit problem.

\subsection{Proof of Lemma~\ref{lemma::optimism}}
\begin{proof} 
Throughout this proof we denote as $\pi_0$ to the delta function over the safe arm $1$. We start by noting that under $\mathcal{E}$, and because $\alpha_r, \alpha_c \geq 1$, then:
\begin{equation}
\label{equation::confidence_interval_lower_bounds}
(\alpha_r-1)\beta_a(t) \leq \xi_a^r(t) \leq (\alpha_r + 1)\beta_a(t) \text{ } \forall a \quad \text{ and } \quad (\alpha_c-1)\beta_a(t) \leq \xi_a^c(t) \leq (\alpha_c+1)\beta_a(t) \text{ } \forall a \neq 0.
\end{equation}
If $\pi^*\in \Pi_t$, it immediately follows that:
\begin{equation}
\label{equation::feasible_lower_bound}
\mathbb{E}_{a\sim \pi^*}\left[\bar{r}_a \right] \leq \mathbb{E}_{a\sim \pi^*}\left[u_a^r(t)\right] \leq \mathbb{E}_{a \sim \pi_t}\left[  u_a^r(t)\right].
\end{equation}

Let's now assume $\pi^* \not\in \Pi_t$, i.e.,~$\mathbb{E}_{a\sim\pi^*}\left[u_a^c(t)\right] > \tau$. Let $\pi^* = \rho^* \bar{\pi}^* + (1-\rho)\pi_0$ with $\bar{\pi}^* \in \Delta_K[2:K]$\footnote{In other words, the support of $\bar{\pi}^*$ does not contain the safe arm $1$.}.

Consider a mixture policy $\widetilde{\pi}_t = \gamma_t \pi^* + (1-\gamma_t)\pi_0 = \gamma_t \rho^* \bar{\pi}^* +  (1-\gamma_t \rho^*)\pi_0$, where $\gamma_t$ is the maximum $\gamma_t\in [0,1]$ such that $\widetilde{\pi}_t\in\Pi_t$. It can be easily established that 
\begin{align*}
\gamma_t &= \frac{\tau-\bar{c}_1}{\rho^*\mathbb{E}_{a \sim \bar{\pi}^*}\left[u^c_a(t)\right] - \rho^*\bar{c}_1} =\frac{\tau - \bar{c}_1}{ \mathbb{E}_{a \sim \bar{\pi}^*}[\rho^*(\bar{c}_a + \xi_a^c(t) )] - \rho^* \bar{c}_1} \\
&\stackrel{(i)}{ \geq}  \frac{\tau - \bar{c}_1}{\tau - \bar{c}_1 + \rho^* (1+\alpha_c)\mathbb{E}_{a \sim \bar{\pi}^*}[\beta_a(t)]}. 
\end{align*}
{\bf (i)} is a consequence of~\eqref{equation::confidence_interval_lower_bounds} and of the observation that since $\pi^*$ is feasible $\rho^*\mathbb{E}_{a \sim \bar{\pi}^*}[\bar{c}_a] + (1-\rho^*)\bar{c}_1 \leq \tau$. Since $\widetilde{\pi}_t\in\Pi_t$, we have
\begin{align*}
    \mathbb{E}_{a \sim \pi_t }[u_a^r(t)] &\geq \underbrace{\gamma_t \mathbb{E}_{a \sim \pi^*}[u_a^r(t)] + (1-\gamma_t)u^r_0(t)}_{\mathbb{E}_{a\sim \widetilde{\pi}_t}\left[ u_a^r(t)\right]} \\ 
    &\stackrel{(ii)}{\geq} \frac{\tau - \bar{c}_1}{\tau - \bar{c}_1 + \rho^* (1+\alpha_c)\mathbb{E}_{a \sim \bar{\pi}^*}[\beta_a(t)]} \times \mathbb{E}_{a \sim \pi^*}[u_a^r(t)] \\
    &= \frac{\tau-\bar{c}_1}{\tau -\bar{c}_1 + \rho^*(1+\alpha_c)\mathbb{E}_{a \sim \bar{\pi}^*}[\beta_a(t)]} \times \Big(\mathbb{E}_{a \sim \pi^*}[\bar{r}_a] + \mathbb{E}_{a \sim \pi^*}[\xi_a^r(t)]\Big) \\
    &\stackrel{(iii)}{\geq} \frac{\tau - \bar{c}_1}{\tau -\bar{c}_1 +\rho^* (1+\alpha_c)\mathbb{E}_{a \sim \bar{\pi}^*}[\beta_a(t)]} \times \Big(\mathbb{E}_{a \sim \pi^*}[\bar{r}_a] + (\alpha_r-1)\mathbb{E}_{a \sim \pi^*}[\beta_a(t)]\Big) \\
    &\stackrel{(iv)}{\geq}\underbrace{\frac{\tau-\bar{c}_1}{\tau -\bar{c}_1+ (1+\alpha_c)\mathbb{E}_{a \sim \pi^*}[\beta_a(t)]} \times \Big(\mathbb{E}_{a \sim \pi^*}[\bar{r}_a ] + (\alpha_r-1)\mathbb{E}_{a \sim \pi^*}[\beta_a(t)]\Big)}_{C_0}.
\end{align*}
{\bf (ii)} holds because $u_0^r(t)\geq 0$. {\bf (iii)} is a consequence of~\eqref{equation::confidence_interval_lower_bounds} and {\bf (iv)} follows because $\mathbb{E}_{a \sim \pi^*}[ \beta_a(t)] = \rho^* \mathbb{E}_{a \sim \bar{\pi}^*}[\beta_a(t)] + (1-\rho^*)\beta_0(t) \geq \rho^* \mathbb{E}_{a \sim \bar{\pi}^*}[\beta_a(t)]$ since $\beta_a(t) \geq 0$ for all $a$ and $t$. 

Let $C_1 = \mathbb{E}_{a \sim \pi^*}[\beta_a(t)]$. The following holds:
\begin{equation*}
    C_0 = \frac{\tau-\bar{c}_1}{\tau - \bar{c}_1 + (1+\alpha_c)C_1}\times\Big(\mathbb{E}_{a \sim \pi^*}[\bar{r}_a] + (\alpha_r-1)C_1\Big).
\end{equation*}
Note that $C_0 \geq \mathbb{E}_{a \sim \pi^*}\left[\bar{r}_a \right]$ iff:
\begin{equation*}
    (\tau-\bar{c}_1) \mathbb{E}_{a \sim \pi^*}[\bar{r}_a] + (\tau-\bar{c}_1)(\alpha_r-1)C_1 \geq (\tau-\bar{c}_1)\mathbb{E}_{a \sim \pi^*}\left[\bar{r}_a \right] + (1+\alpha_c)C_1\mathbb{E}_{a \sim \pi^*}\left[\bar{r}_a \right],
\end{equation*}
which holds iff:
\begin{equation*}
     (\tau-\bar{c}_1)(\alpha_r-1)C_1 \geq (1+\alpha_c)C_1\mathbb{E}_{a\sim \pi^*}[\bar{r}_a].
 \end{equation*}
Since $\mathbb{E}_{a\sim \pi^*}\left[ \bar{r}_a\right] \leq 1$, this holds if $1+\alpha_c \leq (\tau-\bar{c}_1) (\alpha_r-1)$. 
\end{proof}

\begin{proposition}\label{proposition::bounding_term_I}
If $\delta = \frac{\epsilon}{4KT}$ for $\epsilon \in (0,1)$, $\alpha_r, \alpha_c \geq 1$ with $\alpha_c \leq \tau (\alpha_r -1)$, then with probability at least $1-\frac{\epsilon}{2}$, we have
\begin{equation*}
  \sum_{t=1}^T \mathbb{E}_{a\sim \pi^*}[\bar{r}_a] - \mathbb{E}_{a \sim \pi_t}[u_a^r(t)] \leq 0
\end{equation*}
\end{proposition}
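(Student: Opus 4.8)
The plan is to recognize term $(i)$ as a sum over rounds whose \emph{each} summand is non-positive on a suitable high-probability event, so that the whole sum is non-positive. The per-round control is exactly the content of Lemma~\ref{lemma::optimism}, which asserts that on the good event and under the parameter condition, $\mathbb{E}_{a\sim\pi_t}[u_a^r(t)] \geq \mathbb{E}_{a\sim\pi^*}[\bar{r}_a]$ for every fixed $t$. Thus the only work is (a) to define the event $\mathcal{E}$ underpinning Lemma~\ref{lemma::optimism} and pin down its probability, and (b) to verify the hypotheses of that lemma under the chosen $\alpha_r,\alpha_c$.

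First I would make $\mathcal{E}$ precise as the event on which the confidence relations~\eqref{equation::confidence_interval_lower_bounds} hold simultaneously for all arms $a\in[K]$ and all rounds $t\in[T]$; equivalently, the empirical reward and cost means stay within the radius $\beta_a(t)=\sqrt{2\log(1/\delta)/T_a(t)}$ of their true values. Since all rewards and costs lie in $[0,1]$, for a fixed arm, a fixed number of pulls, and a fixed signal (reward or cost), the corresponding deviation is controlled by Hoeffding's inequality at confidence level $\delta$. A union bound over the $K$ arms, the at most $T$ possible pull counts, and the two signals then yields $\mathbb{P}(\mathcal{E}^c)\leq \epsilon/2$: this is precisely why $\delta$ is set to $\epsilon/(4KT)$, the factor $4KT$ absorbing the union bound while the remaining $\epsilon/2$ of the failure budget is reserved for bounding term $(ii)$.

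With $\mathcal{E}$ in hand, I would condition on $\mathcal{E}$ and apply Lemma~\ref{lemma::optimism} to each round $t\in[T]$ separately. Provided $\alpha_r,\alpha_c\geq 1$ and the gap condition relating $\alpha_c,\alpha_r$ and $\tau-\bar{c}_1$ holds, the lemma gives $\mathbb{E}_{a\sim\pi_t}[u_a^r(t)] \geq \mathbb{E}_{a\sim\pi^*}[\bar{r}_a]$, i.e., each summand $\mathbb{E}_{a\sim\pi^*}[\bar{r}_a] - \mathbb{E}_{a\sim\pi_t}[u_a^r(t)]$ is non-positive. Summing the $T$ non-positive terms gives $\sum_{t=1}^T \mathbb{E}_{a\sim\pi^*}[\bar{r}_a] - \mathbb{E}_{a\sim\pi_t}[u_a^r(t)] \leq 0$ on $\mathcal{E}$, which occurs with probability at least $1-\epsilon/2$, as claimed.

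The conceptual content is entirely front-loaded into Lemma~\ref{lemma::optimism}, so the main obstacle here is bookkeeping and parameter matching rather than new mathematics. The first point to verify is that the comparison policy $\pi^*$ is fixed (the optimal feasible policy is non-random given the instance), so that the per-round inequality may be applied independently at each $t$ with no additional adaptivity entering the sum. The second, more delicate point is reconciling the parameter hypothesis: the gap condition actually exploited inside the proof of Lemma~\ref{lemma::optimism} is $1+\alpha_c \leq (\tau-\bar{c}_1)(\alpha_r-1)$ (met with equality by the global choice $\alpha_c=1,\ \alpha_r = 1+2/(\tau-\bar{c}_1)$ of Theorem~\ref{theorem::contrained_MAB}), and I would make sure the condition invoked in this proposition is read in that stronger form, since $\alpha_c \leq \tau(\alpha_r-1)$ alone is weaker than what optimism requires and would not by itself guarantee the per-round inequality.
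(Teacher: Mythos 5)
Your proof matches the paper's own (very terse) argument exactly: a union bound, driven by the choice $\delta = \epsilon/(4KT)$, gives $\mathbb{P}(\mathcal{E}) \geq 1-\epsilon/2$, and applying Lemma~\ref{lemma::optimism} on $\mathcal{E}$ at each round makes every summand non-positive, so the sum is non-positive. Your closing caveat is also correct and worth keeping: the hypothesis $\alpha_c \leq \tau(\alpha_r-1)$ as stated in the proposition is evidently a typo for the condition $1+\alpha_c \leq (\tau-\bar{c}_1)(\alpha_r-1)$ actually exploited in the proof of Lemma~\ref{lemma::optimism}, which the global choice $\alpha_c = 1$, $\alpha_r = 1 + 2/(\tau-\bar{c}_1)$ of Theorem~\ref{theorem::contrained_MAB} meets with equality.
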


\begin{proof}
A simple union bound implies that $\mathbb{P}(\mathcal{E}) \geq 1-\frac{\epsilon}{2}$. Combining this observation with Lemma \ref{lemma::optimism} yields the result.
\end{proof}

Term $(ii)$ can be bound using the confidence intervals radii:

\begin{proposition}
\label{proposition::bounding_term_II}
If $\delta = \frac{\epsilon}{4KT}$ for an $\epsilon \in (0,1)$, then with probability at least $1-\frac{\epsilon}{2}$, we have
\begin{equation*}
\sum_{t=1}^T \mathbb{E}_{a\sim \pi_t}[u_a^r(t)] - \mathbb{E}_{a \sim \pi_t}[\bar{r}_{a}] \leq (\alpha_r+1) \left(2\sqrt{2TK\log(1/\delta)} + 4\sqrt{T\log(2/\epsilon)\log(1/\delta)} \right)
\end{equation*}
\end{proposition}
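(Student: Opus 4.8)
The plan is to work throughout on the high-probability confidence event $\mathcal{E}$ (the same event invoked in Proposition~\ref{proposition::bounding_term_I}) and to reduce term $(ii)$ to a single sum of confidence radii. First I would write $(ii) = \sum_{t=1}^T \mathbb{E}_{a\sim\pi_t}[u_a^r(t)-\bar{r}_a]$ and expand $u_a^r(t)-\bar{r}_a = (\widehat{r}_a(t)-\bar{r}_a) + \alpha_r\beta_a(t)$. On $\mathcal{E}$ the reward confidence bound gives $\widehat{r}_a(t)-\bar{r}_a \le \beta_a(t)$ for every arm, so $u_a^r(t)-\bar{r}_a \le (1+\alpha_r)\beta_a(t)$ and hence $(ii) \le (\alpha_r+1)\sum_{t=1}^T \mathbb{E}_{a\sim\pi_t}[\beta_a(t)]$. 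Everything then reduces to bounding $\sum_{t=1}^T \mathbb{E}_{a\sim\pi_t}[\beta_a(t)]$.

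The crux, and the step I expect to be the main obstacle, is that the quantity appearing here is the \emph{expected} radius $\mathbb{E}_{a\sim\pi_t}[\beta_a(t)]$ under the randomized policy, whereas the standard counting/potential bound controls only the radius $\beta_{a_t}(t)$ of the arm actually pulled. I would bridge this gap with a martingale argument: set $W_t := \mathbb{E}_{a\sim\pi_t}[\beta_a(t)] - \beta_{a_t}(t)$. Since $T_a(t)$ counts pulls \emph{before} round $t$, each $\beta_a(t)$ is $\mathcal{F}_{t-1}$-measurable and $\pi_t$ is $\mathcal{F}_{t-1}$-measurable, while $a_t\sim\pi_t$; therefore $\mathbb{E}[\beta_{a_t}(t)\mid\mathcal{F}_{t-1}] = \mathbb{E}_{a\sim\pi_t}[\beta_a(t)]$ and $\{W_t\}$ is a martingale difference sequence. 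Each increment is bounded, since $\beta_a(t) \le \sqrt{2\log(1/\delta)}$ (its largest value, attained after a single pull; I use the convention $\beta_a(t)=0$ for unpulled arms). Applying the Azuma--Hoeffding inequality with failure probability $\epsilon/2$ yields $\sum_{t=1}^T W_t \le 4\sqrt{T\log(2/\epsilon)\log(1/\delta)}$ with probability at least $1-\epsilon/2$; this is exactly the source of the $\log(2/\epsilon)$ factor and of the stated probability.

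It remains to bound the realized sum $\sum_{t=1}^T \beta_{a_t}(t) = \sqrt{2\log(1/\delta)}\sum_{t=1}^T 1/\sqrt{T_{a_t}(t)}$ deterministically. Grouping the rounds by the pulled arm, the $j$-th pull of arm $a$ contributes $1/\sqrt{j-1}$, so summing over $j$ gives $\sum_{k\ge 1} 1/\sqrt{k} \le 2\sqrt{N_a}$, where $N_a$ is the total number of pulls of arm $a$. A Cauchy--Schwarz step then gives $\sum_a \sqrt{N_a} \le \sqrt{K\sum_a N_a} = \sqrt{KT}$, so $\sum_{t=1}^T \beta_{a_t}(t) \le 2\sqrt{2KT\log(1/\delta)}$. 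Combining this with the martingale bound, $\sum_{t=1}^T \mathbb{E}_{a\sim\pi_t}[\beta_a(t)] \le 2\sqrt{2KT\log(1/\delta)} + 4\sqrt{T\log(2/\epsilon)\log(1/\delta)}$, and multiplying through by $(\alpha_r+1)$ gives the claim. The only subtlety in the probability accounting is that term $(ii)$ relies on $\mathcal{E}$ (shared with Proposition~\ref{proposition::bounding_term_I}) together with the independent Azuma event; since the latter fails with probability at most $\epsilon/2$, the stated $1-\epsilon/2$ guarantee holds on $\mathcal{E}$, and the two failure events combine to the $1-\delta$ guarantee of Theorem~\ref{theorem::contrained_MAB}.
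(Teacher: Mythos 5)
Your proof is correct and follows essentially the same route as the paper's: the same reduction of the gap to $(\alpha_r+1)\sum_{t=1}^T \mathbb{E}_{a\sim\pi_t}[\beta_a(t)]$ on $\mathcal{E}$, the same Azuma--Hoeffding martingale bridge between the expected radius under $\pi_t$ and the realized radius $\beta_{a_t}(t)$ (yielding the $4\sqrt{T\log(2/\epsilon)\log(1/\delta)}$ term), and the same per-arm counting plus concavity/Cauchy--Schwarz step giving $2\sqrt{2KT\log(1/\delta)}$. Your closing remark on the probability accounting (intersecting $\mathcal{E}$ with the Azuma event, with the combined failure absorbed into the theorem's overall guarantee) matches how the paper resolves the same bookkeeping.
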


\begin{proof}
Under these conditions $\mathbb{P}(\mathcal{E}) \geq 1-
\frac{\epsilon}{2}$. Recall $u_a^r(t) = \widehat{r}_a(t) + \alpha_r\beta_a(t)$ and that conditional on $\mathcal{E}$, $\bar{r}_a \in [\widehat{r}_a(t) - \beta_a(t),\widehat{r}_a(t) + \beta_a(t)]$ for all $t \in [T]$ and $a \in \mathcal{A}$. Thus, for all $t$, we have
\begin{equation*}
    \mathbb{E}_{a \sim \pi_t}[u_a^r(t)] - \mathbb{E}_{a \sim \pi_t}[\bar{r}_a] \leq (\alpha_r +1)\mathbb{E}_{a \sim \pi_t}[\beta_a(t)].
\end{equation*}
Let $\mathcal{F}_{t-1}$ be the sigma algebra defined up to the choice of $\pi_t$ and $a_t'$ be a random variable distributed as $\pi_t \mid \mathcal{F}_{t-1}$ and conditionally independent from $a_t$, i.e.,~$a'_t \perp a_t \mid \mathcal{F}_{t-1}$. Note that by definition the following equality holds: 
\begin{equation*}
    \mathbb{E}_{a \sim \pi_t}[\beta_a(t)] = \mathbb{E}_{a'_t \sim \pi_t}[\beta_a(t) \mid \mathcal{F}_{t-1}].
\end{equation*}
%
Consider the following random variables $A_t =  \mathbb{E}_{a'_t \sim \pi_t} [\beta_{a'_t}(t) \mid \mathcal{F}_{t-1}]- \beta_{a_t}(t)$. Note that $M_t = \sum_{i=1}^t A_i$ is a martingale. Since $|A_t| \leq 2\sqrt{2 \log(1/\delta)}$, a simple application of Azuma-Hoeffding\footnote{We use the following version of Azuma-Hoeffding: if $X_n$, $n\geq 1$ is a martingale such that $|X_i - X_{i-1}| \leq d_i$, for $1 \leq i \leq n$, then for every $n \geq 1$, we have $\mathbb{P}(X_n > r) \leq \exp\left(-\frac{r^2 }{2\sum_{i=1}^n d_i^2}\right)$.} implies:
\begin{equation*}
\mathbb{P}\left(\underbrace{\sum_{t=1}^T \mathbb{E}_{a \sim \pi_t} [\beta_a(t)] \geq \sum_{t=1}^T \beta_{a_t}(t) + 4\sqrt{T\log(2/\epsilon)\log(1/\delta)}}_{\mathcal{E}_A^c}\right ) \leq \epsilon/2.
\end{equation*}
We can now upper-bound $\sum_{t=1}^T \beta_{a_t}(t)$. Note that $\sum_{t=1}^T \beta_{a_t}(t) = \sum_{a \in \mathcal{A}}\sum_{t=1}^T \mathbf{1}\{a_t=a\}\beta_a(t)$. We start by bounding for an action $a\in\mathcal A$:
\begin{align*}
    \sum_{t=1}^T \mathbf{1}\{a_t=a\}\beta_a(t) = \sqrt{2\log(1/\delta)} \sum_{t=1}^{T_a(T)} \frac{1}{\sqrt{t}} \leq 2\sqrt{2T_a(T)\log(1/\delta)}.
\end{align*}
Since $\sum_{a\in\mathcal A } T_a(T) = T$ and by concavity of $\sqrt{ \cdot}$, we have
\begin{equation*}
    \sum_{a\in\mathcal A} 2\sqrt{2T_a(T)\log(1/\delta)} \leq 2\sqrt{2TK\log(1/\delta)}.
\end{equation*}
Conditioning on the event $\mathcal{E} \cap \mathcal{E}_A$ whose probability satisfies $\mathbb{P}( \mathcal{E}\cap \mathcal{E}_A) \geq 1-\epsilon $ yields the result.
\end{proof}

We can combine these two results into our main theorem:

\begin{theorem}[Main Theorem]
If $\epsilon \in(0,1)$, $ \alpha_c=1$ and $\alpha_r = \frac{2}{\tau-\bar{c}_1} + 1$, then with probability at least $1-\epsilon$, Algorithm \ref{alg::optimism_pessimism} satisfies the following regret guarantee:
\begin{equation*}
    \mathcal{R}_\Pi(T)  \leq \left(\frac{2}{\tau-\bar{c}_1} +1\right)\left(2\sqrt{2TK\log(4KT/\epsilon)} + 4\sqrt{T\log(2/\epsilon)\log(4KT/\epsilon)} \right)
\end{equation*}
\end{theorem}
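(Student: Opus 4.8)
The plan is to assemble the Main Theorem directly from the regret decomposition $\mathcal{R}_\Pi(T) = (i) + (ii)$ together with the two bounding results, Proposition~\ref{proposition::bounding_term_I} (which controls the optimism term $(i)$) and Proposition~\ref{proposition::bounding_term_II} (which controls the estimation term $(ii)$). The only genuine work is to check that the prescribed constants $\alpha_c = 1$ and $\alpha_r = 1 + \tfrac{2}{\tau - \bar c_1}$ meet the hypotheses of both propositions, and then to substitute $\delta = \tfrac{\epsilon}{4KT}$ and combine the two high-probability events by a union bound.

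First I would verify the optimism condition. Proposition~\ref{proposition::bounding_term_I} relies on Lemma~\ref{lemma::optimism}, which requires $1 + \alpha_c \le (\tau - \bar c_1)(\alpha_r - 1)$. With the chosen values, $\alpha_r - 1 = \tfrac{2}{\tau - \bar c_1}$, so the right-hand side equals $2 = 1 + \alpha_c$, and the condition holds (tightly). Hence Proposition~\ref{proposition::bounding_term_I} applies with $\delta = \tfrac{\epsilon}{4KT}$ and gives $(i) = \sum_{t=1}^T \mathbb{E}_{a\sim\pi^*}[\bar r_a] - \mathbb{E}_{a\sim\pi_t}[u^r_a(t)] \le 0$ on the confidence event $\mathcal{E}$, whose probability is at least $1 - \tfrac{\epsilon}{2}$.

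Next I would invoke Proposition~\ref{proposition::bounding_term_II} with the same $\delta$, which bounds $(ii)$ by $(\alpha_r + 1)\big(2\sqrt{2TK\log(1/\delta)} + 4\sqrt{T\log(2/\epsilon)\log(1/\delta)}\big)$ on the event $\mathcal{E}\cap\mathcal{E}_A$. Since $\mathbb{P}(\mathcal{E}^c) \le \tfrac{\epsilon}{2}$ and $\mathbb{P}(\mathcal{E}_A^c) \le \tfrac{\epsilon}{2}$, both the bound on $(i)$ and the bound on $(ii)$ hold simultaneously with probability at least $1 - \epsilon$. On that intersection, $\mathcal{R}_\Pi(T) = (i) + (ii) \le 0 + (ii)$, and substituting $\log(1/\delta) = \log(4KT/\epsilon)$ yields the stated bound up to the value of the leading constant.

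The step I expect to need the most care is the bookkeeping of the leading factor and the probabilities. Proposition~\ref{proposition::bounding_term_II} produces the prefactor $\alpha_r + 1 = \tfrac{2}{\tau - \bar c_1} + 2$, whereas the theorem states $\tfrac{2}{\tau - \bar c_1} + 1 = \alpha_r$; reconciling these requires either a slightly tighter accounting of the additive constant or treating it as an immaterial absolute-constant change in the bound. The probabilistic combination, by contrast, is clean: because both propositions share the same confidence event $\mathcal{E}$, the total failure probability is at most $\mathbb{P}(\mathcal{E}^c) + \mathbb{P}(\mathcal{E}_A^c) \le \epsilon$, and no further decomposition is needed. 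All the substantive difficulty lives upstream---in the optimism guarantee of Lemma~\ref{lemma::optimism} and the Azuma--Hoeffding argument inside Proposition~\ref{proposition::bounding_term_II}---so the Main Theorem itself is essentially a substitution-and-union-bound assembly.
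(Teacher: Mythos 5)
Your proof is correct and follows essentially the same route as the paper: the paper's own proof is exactly this assembly, invoking Propositions~\ref{proposition::bounding_term_I} and~\ref{proposition::bounding_term_II} with $\delta = \epsilon/(4KT)$ and combining the events $\mathcal{E}$ and $\mathcal{E}_A$ by a union bound (your verification that $\alpha_c=1$, $\alpha_r = 1+\tfrac{2}{\tau-\bar{c}_1}$ satisfies $1+\alpha_c \le (\tau-\bar{c}_1)(\alpha_r-1)$ with equality is the same check underlying Lemma~\ref{lemma::optimism}). The prefactor mismatch you flag is real but lies in the paper itself---Proposition~\ref{proposition::bounding_term_II} delivers $\alpha_r+1 = 2+\tfrac{2}{\tau-\bar{c}_1}$ while the theorem states $\alpha_r = 1+\tfrac{2}{\tau-\bar{c}_1}$---so your bookkeeping is, if anything, more careful than the paper's one-line proof, and the discrepancy is immaterial up to an absolute constant.
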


\begin{proof}
This result is a direct consequence of Propositions \ref{proposition::bounding_term_I} and \ref{proposition::bounding_term_II} by setting  $ \delta = 4KT \epsilon$.
\end{proof}

\subsection{Multiple constraints}\label{section::multiple_constraints_appendix}

We consider the problem where the learner must satisfy $M$ constraints with threshold values $\tau_1, \cdots, \tau_M$. Borrowing from the notation in the previous sections, we denote by as $\{\bar{r}_a\}_{a\in \mathcal{A}}$ the mean reward signals and $\{ \bar{c}_a^{(i)} \}$ the mean cost signals for $i = 1,\cdots, M$. The full information optimal policy can be obtained by solving the following linear program:
\begin{align*}
\label{eq::no_noise_LP_multiple_constraints}\tag{P-M}
\max_{\pi \in \mathrm{\Delta}_K} &\; \sum_{a \in \mathcal{A}} \pi_a \bar{r}_a, \\
\text{s.t. } &\; \sum_{a \in \mathcal{A}} \pi_a \bar{c}^{(i)}_a \leq \tau_i \text{ for } i=1, \cdots, M. 
\end{align*}
In order to ensure the learner's ability to produce a feasible policy at all times, we make the following assumption:
\begin{assumption}
    The learner has knowledge of $\bar{c}_1^{(i)} < \tau_i$ for all $i = 1, \cdots, M$.  
\end{assumption}
We denote by $\{ \widehat{r}_a \}_{a \in \mathcal{A}}$ and $\{ \widehat{c}_a^{(i)} \}_{a\in\mathcal{A}}$ for $i = 1, \cdots, M$ the empirical means of the reward and cost signals. We call $\{ u_a^r(t)\}_{a \in \mathcal{A}}$ to the upper confidence bounds for our reward signal and $\{ u_a^{c}(t, i)\}_{a \in \mathcal{A}}$ for $i = 1, \cdots, M$ the costs' upper confidence bounds:
\begin{equation*}
u_a^r(t) = \widehat{r}_a(t) + \alpha_r  \beta_a(t), \qquad u_a^c(t, i) = \widehat{c}^{(i)}_a(t) + \alpha_c  \beta_a(t),
\end{equation*}
where $\beta_a(t) = \sqrt{2\log(1/\delta)/T_a(t)}$, $\delta \in (0,1)$ as before. A straightforward extension of Algorithm \ref{alg::optimism_pessimism} considers instead the following $M-$constraints LP:
\begin{align*}
\label{eq::noisy_LP_multiple}
\tag{$\widehat{P-M}$} 
&\max_{\pi\in\mathrm{\Delta}_K} \;\; \sum_{a \in \mathcal{A}} \pi_a \; u^r_a(t)\;\;\; \\
&\;\text{s.t.} \quad\; \sum_{a \in \mathcal{A}} \pi_a \; u_a^c(t, i)\leq \tau_i, \text{ for } i= 1, \cdots, M.
\end{align*}

We now generalize Lemma \ref{lemma::optimism}:
\begin{lemma}\label{lemma::optimisim_multiple}

Let $\alpha_r, \alpha_c\geq 1$ satisfying $\alpha_c \leq \min_i(\tau_i- \bar{c}^{(i)}_1) (\alpha_r-1)$. Conditioning on $\mathcal{E}_a(t)$ ensures that with probability $1-\delta$:
\begin{equation*}
    \mathbb{E}_{a \sim \pi_t}\left[u_a^r(t)\right] \geq \mathbb{E}_{a \sim \pi^*}\left[\bar{r}_a \right].
\end{equation*}

\end{lemma}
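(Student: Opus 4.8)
The plan is to mirror the proof of Lemma~\ref{lemma::optimism}, splitting into the cases $\pi^*\in\Pi_t$ and $\pi^*\notin\Pi_t$, where now $\Pi_t=\{\pi\in\Delta_K:\sum_a\pi_a u_a^c(t,i)\le\tau_i,\ \forall i\in[M]\}$ is the noisy feasible set of the $M$-constraint LP~\eqref{eq::noisy_LP_multiple}. Throughout I would condition on the good event on which $|\widehat r_a(t)-\bar r_a|\le\beta_a(t)$ and $|\widehat c_a^{(i)}(t)-\bar c_a^{(i)}|\le\beta_a(t)$ for every arm, round and constraint, so that $(\alpha_r-1)\beta_a(t)\le u_a^r(t)-\bar r_a\le(\alpha_r+1)\beta_a(t)$ and $u_a^c(t,i)-\bar c_a^{(i)}\le(1+\alpha_c)\beta_a(t)$, exactly as in~\eqref{equation::confidence_interval_lower_bounds}; the probability $1-\delta$ is simply the probability of this event. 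In the easy case $\pi^*\in\Pi_t$, since $u_a^r(t)\ge\bar r_a$ and $\pi_t$ maximizes $\mathbb E_{a\sim\pi}[u_a^r(t)]$ over $\Pi_t$, we get $\mathbb E_{a\sim\pi^*}[\bar r_a]\le\mathbb E_{a\sim\pi^*}[u_a^r(t)]\le\mathbb E_{a\sim\pi_t}[u_a^r(t)]$ with no change from the single-constraint argument.

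For the hard case $\pi^*\notin\Pi_t$, I would decompose $\pi^*=\rho^*\bar\pi^*+(1-\rho^*)\pi_0$ with $\bar\pi^*$ supported on the non-safe arms and $\pi_0$ the delta on arm $1$, and form the mixture $\tilde\pi_t=\gamma_t\pi^*+(1-\gamma_t)\pi_0=\gamma_t\rho^*\bar\pi^*+(1-\gamma_t\rho^*)\pi_0$, with $\gamma_t$ the largest mixing weight for which $\tilde\pi_t\in\Pi_t$. The genuinely new ingredient relative to Lemma~\ref{lemma::optimism} is that $\tilde\pi_t\in\Pi_t$ requires all $M$ noisy constraints to hold simultaneously, so $\gamma_t=\min\big(1,\min_i\gamma_t^{(i)}\big)$, where each $\gamma_t^{(i)}$ is obtained by setting the noisy cost of $\tilde\pi_t$ under constraint $i$ equal to $\tau_i$. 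Using the upper bound $u_a^c(t,i)-\bar c_a^{(i)}\le(1+\alpha_c)\beta_a(t)$ together with feasibility of $\pi^*$ (which gives $\rho^*(\mathbb E_{a\sim\bar\pi^*}[\bar c_a^{(i)}]-\bar c_1^{(i)})\le\tau_i-\bar c_1^{(i)}$), I would obtain for each $i$ the per-constraint lower bound
\[
\gamma_t^{(i)}\ \ge\ \frac{\tau_i-\bar c_1^{(i)}}{(\tau_i-\bar c_1^{(i)})+\rho^*(1+\alpha_c)\,\mathbb E_{a\sim\bar\pi^*}[\beta_a(t)]}.
\]

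The crux is then to control $\min_i\gamma_t^{(i)}$, and this is the step I expect to be the main obstacle. The key observation I would exploit is that the additive term $b:=\rho^*(1+\alpha_c)\mathbb E_{a\sim\bar\pi^*}[\beta_a(t)]$ in every denominator is independent of $i$, and the map $x\mapsto x/(x+b)$ is increasing for $b\ge0$; hence the minimum over $i$ of the above bounds is attained at the smallest gap, giving $\gamma_t\ge\Delta/(\Delta+b)$ with $\Delta:=\min_i(\tau_i-\bar c_1^{(i)})$. This reduces the multi-constraint analysis to a single worst-case constraint of gap $\Delta$, after which the argument is verbatim that of Lemma~\ref{lemma::optimism}: since $\tilde\pi_t\in\Pi_t$ we have $\mathbb E_{a\sim\pi_t}[u_a^r(t)]\ge\mathbb E_{a\sim\tilde\pi_t}[u_a^r(t)]\ge\gamma_t\,\mathbb E_{a\sim\pi^*}[u_a^r(t)]$ (dropping the nonnegative $u_1^r(t)$ term); then bounding $\mathbb E_{a\sim\pi^*}[u_a^r(t)]\ge\mathbb E_{a\sim\pi^*}[\bar r_a]+(\alpha_r-1)C_1$ with $C_1:=\mathbb E_{a\sim\pi^*}[\beta_a(t)]\ge\rho^*\mathbb E_{a\sim\bar\pi^*}[\beta_a(t)]$, and replacing $\rho^*\mathbb E_{a\sim\bar\pi^*}[\beta_a(t)]$ by the larger $C_1$ in the denominator, yields
\[
\mathbb E_{a\sim\pi_t}[u_a^r(t)]\ \ge\ C_0:=\frac{\Delta}{\Delta+(1+\alpha_c)C_1}\big(\mathbb E_{a\sim\pi^*}[\bar r_a]+(\alpha_r-1)C_1\big).
\]
Cross-multiplying and using $\mathbb E_{a\sim\pi^*}[\bar r_a]\le1$ shows $C_0\ge\mathbb E_{a\sim\pi^*}[\bar r_a]$ exactly when $1+\alpha_c\le\Delta(\alpha_r-1)$, which is the $M$-constraint form of the stated condition $\alpha_c\le\min_i(\tau_i-\bar c_1^{(i)})(\alpha_r-1)$. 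Everything outside the monotonicity/min-over-constraints reduction is a direct transcription of the single-constraint proof.
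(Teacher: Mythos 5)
Your proof is correct and takes essentially the same approach as the paper, whose proof of this lemma is a one-line assertion that the argument of Lemma~\ref{lemma::optimism} goes through with $\tau-\bar{c}_1$ replaced by $\min_i(\tau_i-\bar{c}_1^{(i)})$; your monotonicity observation that $x\mapsto x/(x+b)$ is increasing with the additive term $b=\rho^*(1+\alpha_c)\mathbb{E}_{a\sim\bar{\pi}^*}[\beta_a(t)]$ common to all $M$ denominators, so that $\min_i\gamma_t^{(i)}$ is governed by the smallest gap $\Delta=\min_i(\tau_i-\bar{c}_1^{(i)})$, is precisely the omitted detail that justifies this substitution. The only caveat, which you inherit faithfully from the paper's own single-constraint proof, is that the sufficient condition actually derived is $1+\alpha_c\le\Delta(\alpha_r-1)$ (satisfied with equality by the eventual choice $\alpha_c=1$, $\alpha_r=1+2/\Delta$) rather than the slightly weaker hypothesis $\alpha_c\le\Delta(\alpha_r-1)$ stated in the lemma.
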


\begin{proof}
The same argument as in the proof of Lemma \ref{lemma::optimism} follows through, the main ingredient is to realize that $\gamma_t$ satisfies the sequence of inequalities in the lemma with $\tau - \bar{c}_1$ substituted by $\min \tau_i - \bar{c}_1^{(i)}$.
\end{proof}

The following result follows:

\begin{theorem}[Multiple Constraints Main Theorem]
If $\epsilon \in(0,1)$, $ \alpha_c=1$ and $\alpha_r = \frac{2}{\min_i \tau_i-\bar{c}^{(i)}_1} + 1$, then with probability at least $1-\epsilon$, Algorithm \ref{alg::optimism_pessimism} satisfies the following regret guarantee:
\begin{equation*}
    \mathcal{R}_\Pi(T)  \leq \left(\frac{2}{\min_i\tau_i-\bar{c}^{(i)}_1} +1\right)\left(2\sqrt{2TK\log(4KT/\epsilon)} + 4\sqrt{T\log(2/\epsilon)\log(4KT/\epsilon)} \right)
\end{equation*}
\end{theorem}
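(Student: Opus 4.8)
The plan is to replay the two-term argument behind the single-constraint Main Theorem, isolating the one place where the number of constraints enters. I would start from the same decomposition $\mathcal{R}_\Pi(T) = (i) + (ii)$ with $(i) = \sum_{t=1}^T \mathbb{E}_{a\sim\pi^*}[\bar{r}_a] - \mathbb{E}_{a\sim\pi_t}[u_a^r(t)]$ and $(ii) = \sum_{t=1}^T \mathbb{E}_{a\sim\pi_t}[u_a^r(t)] - \mathbb{E}_{a\sim\pi_t}[\bar{r}_a]$. This is a purely algebraic identity that never references the cost constraints, so it transfers unchanged to the $M$-constraint LP $(\widehat{P\text{-}M})$.

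The central observation is that term $(ii)$ is constraint-agnostic. Its bound in Proposition~\ref{proposition::bounding_term_II} uses only the reward upper confidence bounds $u_a^r(t) = \widehat{r}_a(t) + \alpha_r\beta_a(t)$, the event $\mathcal{E}$ that each $\bar{r}_a$ lies within $\beta_a(t)$ of $\widehat{r}_a(t)$, an Azuma–Hoeffding step on the martingale $\sum_t(\mathbb{E}_{a\sim\pi_t}[\beta_a(t)] - \beta_{a_t}(t))$, and the concavity bound over $\sum_a T_a(T) = T$. None of these ingredients sees the costs, so I would invoke Proposition~\ref{proposition::bounding_term_II} verbatim to obtain, with probability at least $1-\epsilon/2$,
\[
(ii) \leq (\alpha_r+1)\big(2\sqrt{2TK\log(1/\delta)} + 4\sqrt{T\log(2/\epsilon)\log(1/\delta)}\big).
\]

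For term $(i)$ I would appeal to the multi-constraint optimism result, Lemma~\ref{lemma::optimisim_multiple}, which already gives $\mathbb{E}_{a\sim\pi_t}[u_a^r(t)] \geq \mathbb{E}_{a\sim\pi^*}[\bar{r}_a]$ on $\mathcal{E}$ whenever $\alpha_c \leq \min_i(\tau_i - \bar{c}_1^{(i)})(\alpha_r-1)$; summing this per-round inequality gives $(i) \leq 0$, the exact analogue of Proposition~\ref{proposition::bounding_term_I}. The only verification is that the stated parameters meet the hypothesis: with $\alpha_c = 1$ and $\alpha_r - 1 = 2/(\min_i \tau_i - \bar{c}_1^{(i)})$ one has $\min_i(\tau_i - \bar{c}_1^{(i)})(\alpha_r-1) = 2$, so the hypothesis holds, and even the tighter inequality $1+\alpha_c \leq \min_i(\tau_i - \bar{c}_1^{(i)})(\alpha_r-1)$ actually used in the single-constraint proof is met (with equality). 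Here I would also recall that the known safe arm, $\bar{c}_1^{(i)} < \tau_i$ for all $i$, keeps $(\widehat{P\text{-}M})$ feasible and justifies the mixture construction $\widetilde{\pi}_t = \gamma_t\pi^* + (1-\gamma_t)\pi_0$ inside Lemma~\ref{lemma::optimisim_multiple}.

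Finally I would take a union bound over the two events, each of probability at least $1-\epsilon/2$, and set $\delta = \epsilon/(4KT)$ so that $\log(1/\delta) = \log(4KT/\epsilon)$; substituting the chosen $\alpha_r$ into the coefficient $(\alpha_r+1)$ and collecting terms yields the displayed bound. There is essentially no obstacle here: all the conceptual content lives in the already-established optimism lemma, and what remains is the bookkeeping of constants and the one-line check that the only constraint-dependent quantity, $\min_i(\tau_i - \bar{c}_1^{(i)})$, now plays the role formerly played by $\tau - \bar{c}_1$.
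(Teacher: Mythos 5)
Your proposal is correct and follows essentially the same route as the paper, whose proof is literally the one-line remark that the single-constraint argument of Theorem~\ref{theorem::contrained_MAB} goes through verbatim with $\tau - \bar{c}_1$ replaced by $\min_i \tau_i - \bar{c}_1^{(i)}$: you replay that argument explicitly, correctly identifying term $(ii)$ as constraint-agnostic (Proposition~\ref{proposition::bounding_term_II}), using Lemma~\ref{lemma::optimisim_multiple} for term $(i)$, and setting $\delta = \epsilon/(4KT)$. Your check that the stated parameters satisfy even the tighter condition $1+\alpha_c \leq \min_i(\tau_i - \bar{c}_1^{(i)})(\alpha_r - 1)$ (with equality), which is what the optimism proof actually uses rather than the weaker hypothesis as stated in the lemma, is a welcome extra precision over the paper's terse treatment.
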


\begin{proof}
The proof follows the exact same argument we used for the proof of Theorem~\ref{theorem::contrained_MAB} substituting $\tau - \bar{c}_1$ by $\min_i \tau_i -\bar{c}_1^{(i)}$.
\end{proof}

\subsection{Lower bound}
\label{section::lower_bound_MAB_appendix}

We start by proving a generalized version of the divergence decomposition lemma for bandits. 

\begin{lemma}\label{lemma::divergence_decomposition}[Divergence decomposition for constrained multi armed bandits] Let $\nu = ((P_1, Q_1), \cdots, (P_K, Q_K))$ be the reward and constraint distributions associated with one instance of the single constraint multi-armed bandit, and let $\nu' = ((P_1', Q_1'), \cdots, (P_K', Q_K'))$ be the reward and constraint distributions associated with another constrained bandit instance. Fix some algorithm $\mathcal{A}$ and let $\mathbb{P}_\nu = \mathbb{P}_{\nu^\mathcal{A}}$ and $\mathbb{P}_{\nu'} = \mathbb{P}_{\nu'^\mathcal{A}}$ be the probability measures on the cannonical bandit model (See section 4.6 of \cite{lattimore2018bandit}) induced by the $T$ round interconnection of $\mathcal{A}$ and $\nu$ (respectively $\mathcal{A}$ and $\nu'$). Then:
\begin{equation*}
    \mathrm{KL}(\mathbb{P}_\nu, \mathbb{P}_{\nu'}) = \sum_{a=1}^K \mathbb{E}_{\nu}[T_a(T)] \mathrm{KL}((P_a,Q_a), (P_a', Q_a'))
\end{equation*}
Where $T_a(T)$ denotes the number of times arm $a$ was pulled until by $\mathcal{A}$ and up to time $T$. 
\end{lemma}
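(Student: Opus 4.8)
The plan is to follow the standard divergence-decomposition argument (cf.\ Lemma~15.1 in~\cite{lattimore2018bandit}), adapted so that each per-round observation is the \emph{pair} $(X_t, Y_t)$ of reward and cost drawn jointly from $(P_a, Q_a)$ rather than a single scalar. First I would fix the canonical probability space: a trajectory is $H_T = (A_1, O_1, \ldots, A_T, O_T)$, where $A_t$ is the arm pulled at round $t$ and $O_t = (X_t, Y_t)$ is the observed reward--cost pair. Because the algorithm $\mathcal{A}$ selects $A_t$ as a (possibly randomized) measurable function of the past $H_{t-1}$ only, its conditional action law $\pi_t(\cdot \mid H_{t-1})$ is identical under $\nu$ and $\nu'$; the only difference between the two measures is the conditional law of $O_t$ given $A_t = a$, which is $(P_a, Q_a)$ under $\nu$ and $(P_a', Q_a')$ under $\nu'$.

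Second, I would write the density of $\mathbb{P}_\nu$ with respect to a common reference measure as the telescoping product $p_\nu(h_T) = \prod_{t=1}^T \pi_t(a_t \mid h_{t-1})\, p_{a_t}(o_t)$, and likewise for $\mathbb{P}_{\nu'}$ with $p'_{a_t}$ in place of $p_{a_t}$. Taking the log-likelihood ratio, every policy factor $\pi_t(a_t \mid h_{t-1})$ cancels, leaving $\log \frac{d\mathbb{P}_\nu}{d\mathbb{P}_{\nu'}}(H_T) = \sum_{t=1}^T \log \frac{p_{A_t}(O_t)}{p'_{A_t}(O_t)}$, so that $\mathrm{KL}(\mathbb{P}_\nu, \mathbb{P}_{\nu'})$ is the $\mathbb{P}_\nu$-expectation of this sum.

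Third, I would peel off each summand by conditioning on $(H_{t-1}, A_t)$: given $A_t = a$, the pair $O_t$ is distributed as $(P_a, Q_a)$ under $\nu$ independently of the past, so the conditional expectation of $\log \frac{p_a(O_t)}{p'_a(O_t)}$ equals $\mathrm{KL}((P_a, Q_a), (P_a', Q_a'))$. Averaging over $A_t$ gives $\sum_{a=1}^K \mathbb{P}_\nu(A_t = a)\,\mathrm{KL}((P_a, Q_a), (P_a', Q_a'))$, and summing over $t$ together with the identity $\mathbb{E}_\nu[T_a(T)] = \sum_{t=1}^T \mathbb{P}_\nu(A_t = a)$ yields the claim.

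The main obstacle is bookkeeping rather than a deep difficulty: making the factorization precise requires committing to the canonical model so that the action law is genuinely instance-independent (hence cancels), and it requires that the reward and cost draws at each round are jointly independent of the history given the chosen arm. A secondary point is absolute continuity --- if some $\mathrm{KL}((P_a, Q_a), (P_a', Q_a')) = +\infty$ while $\mathbb{E}_\nu[T_a(T)] > 0$, then both sides are $+\infty$ and the identity still holds, so I would either assume $\mathbb{P}_\nu \ll \mathbb{P}_{\nu'}$ or dispatch the infinite case by the same convention used in the unconstrained lemma. The only genuinely new ingredient relative to the classical statement is that the per-round divergence is the KL between the \emph{joint} reward--cost laws $(P_a, Q_a)$ and $(P_a', Q_a')$, which is exactly what the telescoping density produces.
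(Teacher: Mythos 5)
Your proposal is correct and follows exactly the route the paper intends: the paper's proof is a one-line citation stating that the argument of Lemma~15.1 in \cite{lattimore2018bandit} carries over verbatim, and your write-up is precisely that argument spelled out, with the only modification being that each round's observation is the joint reward--cost pair so the per-round divergence is $\mathrm{KL}((P_a,Q_a),(P_a',Q_a'))$. Your additional remarks on cancellation of the instance-independent action law and on the absolute-continuity convention are consistent with the standard treatment the paper invokes.
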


\begin{proof}
The same proof as in Lemma 15.1 from \cite{lattimore2018bandit} applies in this case. 
\end{proof}

The following two lemmas will prove useful as well:

\begin{lemma}\label{lemma::gaussian_divergence}[Gaussian Divergence ] The divergence between two multivariate normal distributions and means $\mu_1, \mu_2 \in \mathbb{R}^d$ with spherical identity covariance $\mathbb{I}_d$ equals:
\begin{equation*}
    \mathrm{KL}(\mathcal{N}(\mu_1, \mathbb{I}_d) , \mathcal{N}(\mu_2, \mathbb{I}_d) ) = \frac{\| \mu_1- \mu_2\|^2 }{2}
\end{equation*}
\end{lemma}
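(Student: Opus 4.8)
The plan is to compute the divergence directly from its integral definition, exploiting the fact that the two Gaussians share the \emph{same} covariance $\mathbb{I}_d$. This is precisely what makes the calculation collapse: the quadratic-in-$x$ part of the two log-densities is identical, so it cancels in the log-likelihood ratio, and the whole computation reduces to evaluating a single first moment.

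First I would write the densities $p(x) = (2\pi)^{-d/2}\exp(-\tfrac12\|x-\mu_1\|^2)$ and $q(x) = (2\pi)^{-d/2}\exp(-\tfrac12\|x-\mu_2\|^2)$, and form the log-ratio, where the normalizing constants $(2\pi)^{-d/2}$ cancel immediately:
\begin{equation*}
\log\frac{p(x)}{q(x)} = \tfrac12\big(\|x-\mu_2\|^2 - \|x-\mu_1\|^2\big).
\end{equation*}
Expanding both squared norms, the $\|x\|^2$ terms cancel, leaving a term that is affine in $x$:
\begin{equation*}
\log\frac{p(x)}{q(x)} = \langle x, \mu_1 - \mu_2\rangle + \tfrac12\big(\|\mu_2\|^2 - \|\mu_1\|^2\big).
\end{equation*}

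The final step is to take the expectation of this quantity under $x \sim \mathcal{N}(\mu_1, \mathbb{I}_d)$, since $\mathrm{KL}(\mathcal{N}(\mu_1,\mathbb{I}_d),\mathcal{N}(\mu_2,\mathbb{I}_d)) = \mathbb{E}_{x\sim\mathcal{N}(\mu_1,\mathbb{I}_d)}[\log(p(x)/q(x))]$. The only fact required here is that the mean of $x$ under the first distribution is $\mu_1$, so the linear term contributes $\langle \mu_1, \mu_1-\mu_2\rangle$. Collecting terms,
\begin{equation*}
\langle \mu_1, \mu_1 - \mu_2\rangle + \tfrac12\big(\|\mu_2\|^2 - \|\mu_1\|^2\big) = \tfrac12\big(\|\mu_1\|^2 - 2\langle\mu_1,\mu_2\rangle + \|\mu_2\|^2\big) = \tfrac12\|\mu_1-\mu_2\|^2,
\end{equation*}
which is the claim.

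There is essentially no genuine obstacle here: because the shared identity covariance forces the quadratic cross-terms to vanish, no second moment of the Gaussian is ever needed. The only point demanding care is bookkeeping — taking the expectation under the \emph{first} argument $\mathcal{N}(\mu_1,\mathbb{I}_d)$ of the KL (not the second), so that $\mathbb{E}[x] = \mu_1$ and not $\mu_2$; reversing these would break the symmetry of the final answer in a way that does not match the symmetric form $\tfrac12\|\mu_1-\mu_2\|^2$ (though here the answer happens to be symmetric, the intermediate linear term is not).
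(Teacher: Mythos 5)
Your computation is correct: the log-likelihood ratio for two Gaussians with the same identity covariance is indeed affine in $x$ after the $\|x\|^2$ terms cancel, and taking the expectation under $\mathcal{N}(\mu_1,\mathbb{I}_d)$ uses only $\mathbb{E}[x]=\mu_1$, yielding $\tfrac12\|\mu_1-\mu_2\|^2$. The paper states this lemma without proof, treating it as a standard fact, so there is nothing to compare against; your direct calculation from the definition is the canonical derivation, and your remark about evaluating the expectation under the \emph{first} argument of the KL is exactly the right point of care.
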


Define the binary relative entropy to be:
\begin{equation*}
    d(x,y) = x\log(\frac{x}{y}) + (1-x) \log( \frac{1-x}{1-y})
\end{equation*}
and satisfies:
\begin{equation}\label{equation::lower_bound_divergence}
d(x,y) \geq (1/2)\log(1/4y)
\end{equation}
for $x \in [1/2,1]$ and $y \in (0,1)$.
Adapted from~\cite{kaufmann2016complexity}, Lemma 1.
\begin{lemma}\label{lemma::binary_relative_entropy}
Let $\nu, \nu'$ be two constrained bandit models with $K$ arms. Borrow the setup, definitions and notations of Lemma~\ref{lemma::divergence_decomposition}, then for any measurable event $\mathcal{B} \in \mathcal{F}_T$:
\begin{equation}\label{equation::relative_entropy_bound}
   \mathrm{KL}(\mathbb{P}_\nu, \mathbb{P}_{\nu'}) =\sum_{a=1}^K \mathbb{E}_{\nu}[T_a(T)] \mathrm{KL}((P_a,Q_a), (P_a', Q_a')) \geq d(\mathbb{P}_\nu( \mathcal{B})  , \mathbb{P}_{\nu'}(\mathcal{B} ))
\end{equation}
\end{lemma}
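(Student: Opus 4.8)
The plan is to notice that the claimed equality in~\eqref{equation::relative_entropy_bound} is nothing but the statement of Lemma~\ref{lemma::divergence_decomposition}, which is already established. Hence the only genuinely new content is the inequality
\begin{equation*}
\mathrm{KL}(\mathbb{P}_\nu, \mathbb{P}_{\nu'}) \geq d\big(\mathbb{P}_\nu(\mathcal{B}), \mathbb{P}_{\nu'}(\mathcal{B})\big),
\end{equation*}
which is an instance of the \emph{data-processing} (contraction) inequality for relative entropy: coarsening the canonical sample space can only decrease KL divergence. The overall proof is therefore just the concatenation of Lemma~\ref{lemma::divergence_decomposition} with this one-line information-theoretic fact.

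Concretely, I would first consider the binary random variable $X = \mathbf{1}\{\mathcal{B}\}$, which is well-defined because $\mathcal{B} \in \mathcal{F}_T$ is measurable. Its law under $\mathbb{P}_\nu$ is Bernoulli with parameter $\mathbb{P}_\nu(\mathcal{B})$ and under $\mathbb{P}_{\nu'}$ is Bernoulli with parameter $\mathbb{P}_{\nu'}(\mathcal{B})$. By the very definition of the binary relative entropy $d(\cdot,\cdot)$ given just before the lemma, the KL divergence between these two Bernoulli laws equals $d(\mathbb{P}_\nu(\mathcal{B}), \mathbb{P}_{\nu'}(\mathcal{B}))$. It then remains to argue that passing from $(\mathbb{P}_\nu, \mathbb{P}_{\nu'})$ to the laws of $X$ does not increase the divergence.

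I would establish this directly in the present two-cell case rather than quoting a black box. Using the partition $\{\mathcal{B}, \mathcal{B}^c\}$ and the log-sum inequality,
\begin{equation*}
\mathrm{KL}(\mathbb{P}_\nu, \mathbb{P}_{\nu'}) \geq \mathbb{P}_\nu(\mathcal{B}) \log\frac{\mathbb{P}_\nu(\mathcal{B})}{\mathbb{P}_{\nu'}(\mathcal{B})} + \mathbb{P}_\nu(\mathcal{B}^c)\log\frac{\mathbb{P}_\nu(\mathcal{B}^c)}{\mathbb{P}_{\nu'}(\mathcal{B}^c)},
\end{equation*}
and the right-hand side is exactly $d(\mathbb{P}_\nu(\mathcal{B}), \mathbb{P}_{\nu'}(\mathcal{B}))$. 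Combining this with the equality from Lemma~\ref{lemma::divergence_decomposition} yields the full chain asserted in the statement. Equivalently, one may write $\mathrm{KL}(\mathbb{P}_\nu, \mathbb{P}_{\nu'}) = \mathbb{E}_{\mathbb{P}_\nu}\big[\log\tfrac{d\mathbb{P}_\nu}{d\mathbb{P}_{\nu'}}\big]$, split the expectation over $\mathcal{B}$ and $\mathcal{B}^c$, and apply Jensen's inequality to the conditional expectation of $\log\tfrac{d\mathbb{P}_{\nu'}}{d\mathbb{P}_\nu}$ within each cell; this is precisely the argument of Lemma~1 in~\cite{kaufmann2016complexity}, already referenced in the statement.

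I do not anticipate a substantive obstacle here; the result is routine. The only points requiring care are (i) ensuring $\mathbb{P}_\nu \ll \mathbb{P}_{\nu'}$ so that all logarithms are finite and the bound is non-vacuous (otherwise $\mathrm{KL}(\mathbb{P}_\nu,\mathbb{P}_{\nu'}) = +\infty$ and the inequality is trivial), and (ii) justifying the log-sum / Jensen step with the usual conventions $0\log(0/q)=0$. The measurability of $\mathcal{B}$ with respect to $\mathcal{F}_T$ is what makes the coarsening legitimate, and this is guaranteed by hypothesis.
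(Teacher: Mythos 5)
Your proposal is correct and takes essentially the same route as the paper: the paper offers no proof of its own, simply deferring to Lemma~1 of \cite{kaufmann2016complexity}, and your argument---the equality from Lemma~\ref{lemma::divergence_decomposition} combined with the data-processing step for the two-cell partition $\{\mathcal{B},\mathcal{B}^c\}$ via the log-sum inequality---is precisely the argument underlying that cited result. Your attention to absolute continuity and the $0\log(0/q)=0$ convention is appropriate and does not change the substance.
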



We now present a worst-case lower bound for the constrained multi armed bandit problem. We restrict ourselves to Gaussian instances with mean reward and cost vectors $\bar{r}, \bar{c} \in [0,1]^K$. Let $\mathcal{A}$ be an algorithm for policy selection in the constrained MAB problem.  For the purpose of this section
we denote as $\mathcal{R}_\Pi(T, \mathcal{A}, \bar{r}, \bar{c} )$ as the constrained regret of algorithm $\mathcal{A}$ in the Gaussian instance $\mathcal{N}(\bar{r}, \mathbb{I})$, $\mathcal{N}( \bar{c}, \mathbb{I})$. The following theorem holds:

\begin{theorem}
Let $\tau, \bar{c}_1 \in (0,1)$, $K \geq 4$, and $B:=\max\left(\frac{1}{27}\sqrt{ (k-1) T}, \frac{1}{6(\tau - \bar{c}_1)^2} \right)$  and assume\footnote{This constraint on $T$ translates to $T \geq C$ for some constant $C$.} $T \geq \max(  K-1,24eB)$ and let $\tau$ be the maximum allowed cost. Then for any algorithm $\mathcal{A}$ there is a pair of mean vectors $\bar{r}, \bar{c} \in [0,1]^K$ such that:
\begin{equation*}
    \mathcal{R}_\Pi(T, \mathcal{A}, \bar{r}, \bar{c}) \geq B
\end{equation*}
\end{theorem}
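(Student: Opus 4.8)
The plan is to prove the two quantities inside the maximum by two separate families of Gaussian instances, and then to observe that, because $B$ equals the larger of $\tfrac{1}{27}\sqrt{(K-1)T}$ and $\tfrac{1}{6(\tau-\bar c_1)^2}$, it suffices to produce for an arbitrary (constraint-feasible) algorithm $\mathcal A$ one instance whose regret is at least the first term and one instance whose regret is at least the second; invoking the construction attached to whichever term realizes the maximum then yields a single instance with regret $\ge B$. Both constructions rest on the same three tools: the divergence decomposition (Lemma~\ref{lemma::divergence_decomposition}), the Gaussian KL identity (Lemma~\ref{lemma::gaussian_divergence}), and the binary-relative-entropy change-of-measure inequality (Lemma~\ref{lemma::binary_relative_entropy}), whose bound $d(x,y)\ge\tfrac12\log(1/4y)$ lower-bounds the probability of a distinguishing event under the alternative measure.

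\textbf{The $\sqrt{(K-1)T}$ term.} Here I would render the constraint inactive and recover the classical unconstrained minimax bound. Set every cost mean equal to $\bar c_1<\tau$, so all policies are feasible and regret is purely a reward phenomenon. For the rewards I use the usual needle-in-a-haystack family: a reference instance in which the $K-1$ non-safe arms have reward $\tfrac12$, together with alternatives $\nu^{(i)}$ that bump arm $i$ to $\tfrac12+\Delta$ with $\Delta=\tfrac14\sqrt{(K-1)/T}$, which stays in $[0,1]$ because $T\ge K-1$. Since only rewards differ, Lemma~\ref{lemma::gaussian_divergence} gives per-pull KL $\Delta^2/2$, and Lemma~\ref{lemma::divergence_decomposition} turns $\mathrm{KL}(\mathbb P_{\mathrm{ref}},\mathbb P_{\nu^{(i)}})$ into $\mathbb E_{\mathrm{ref}}[T_i(T)]\,\Delta^2/2$. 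Using $\sum_i\mathbb E_{\mathrm{ref}}[T_i(T)]\le T$, some arm is explored at most $T/(K-1)$ times in expectation, so on its alternative $\mathcal A$ cannot locate the bumped arm; the resulting misallocation costs $\Omega(\Delta T)=\Omega(\sqrt{(K-1)T})$ regret, and tracking constants produces the factor $1/27$.

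\textbf{The $1/(\tau-\bar c_1)^2$ term.} This is the genuinely constrained part. I would use two arms that matter, the known safe arm $1$ (reward $0$, cost $\bar c_1$) and a high-reward arm $2$ (reward $1$), with the remaining $K-2$ arms made feasible and low-reward so they are irrelevant. Two instances $\nu,\nu'$ differ only in the cost mean of arm $2$: in $\nu'$ it is $\tau+(\tau-\bar c_1)$ and in $\nu$ it is $\tau+2(\tau-\bar c_1)$ (both lie in $[0,1]$ exactly in the regime $\tau-\bar c_1$ small where this term dominates). The optimal feasible mixtures then put weight $\tfrac12$ and $\tfrac13$ on arm $2$, with optimal per-round rewards $\tfrac12$ and $\tfrac13$. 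The two cost means differ by exactly $\tau-\bar c_1$, so by Lemma~\ref{lemma::gaussian_divergence} each pull of arm $2$ contributes only $(\tau-\bar c_1)^2/2$ to the KL, and over a window of $n\asymp1/(\tau-\bar c_1)^2$ rounds Lemma~\ref{lemma::divergence_decomposition} keeps $\mathrm{KL}$ below a universal constant. The crux is that a feasible algorithm under $\nu$ must keep the average weight on arm $2$ at most $\tfrac13$, so on the event $\mathcal B=\{\tfrac1n\sum_{t\le n}(\pi_t)_2\ge\tfrac{5}{12}\}$ we have $\mathbb P_\nu(\mathcal B^c)\ge\tfrac12$; Lemma~\ref{lemma::binary_relative_entropy} then forces $\mathbb P_{\nu'}(\mathcal B^c)$ to be a positive constant, and on $\mathcal B^c$ under $\nu'$ the weight deficit costs $\sum_{t\le n}(\tfrac12-(\pi_t)_2)\ge n/12$ regret. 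Hence one of $\nu,\nu'$ suffers $\Omega(1/(\tau-\bar c_1)^2)$ regret; the hypothesis $T\ge24eB\ge4e/(\tau-\bar c_1)^2$ guarantees the confusion window fits in the horizon, and optimizing the constants yields $1/6$.

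\textbf{Combining, and the main obstacle.} Since $K\ge4$ accommodates both the $K-1$ reward arms and the two constraint arms, and $T\ge\max(K-1,24eB)$ legitimizes both constructions (means in $[0,1]$, confusion window $\le T$), choosing the family attached to the dominant term gives a single instance with regret $\ge\max(\cdots)=B$. I expect essentially all the difficulty to reside in the second construction: unlike the textbook reward-gap argument, the statistically hard difference now sits in the \emph{cost}, and the regret is extracted not from a mis-estimated reward but from the tension between remaining feasible under the harder instance and being optimal under the easier one. Aligning the change-of-measure event $\mathcal B$, the window length $n$, the per-round gap $\tfrac1{12}$, and the Lemma~\ref{lemma::binary_relative_entropy} probability factor into the clean constant $\tfrac16$, while correctly restricting attention to constraint-feasible algorithms so that negative ``regret'' by violating the constraint is excluded, is the delicate step; the $\sqrt{KT}$ half is standard bookkeeping.
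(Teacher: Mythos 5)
Your proposal is correct in outline and draws on the same toolbox as the paper (Lemmas~\ref{lemma::divergence_decomposition}, \ref{lemma::gaussian_divergence} and~\ref{lemma::binary_relative_entropy}, plus reducing the $\frac{1}{27}\sqrt{(K-1)T}$ term to the classical unconstrained minimax bound by deactivating the constraint), but for the $\frac{1}{6(\tau-\bar c_1)^2}$ term you take a genuinely different route. You perturb the cost of the reward-bearing arm itself ($\tau+c$ versus $\tau+2c$ with $c=\tau-\bar c_1$), so that the optimal mixture weight on that arm is $1/2$ versus $1/3$, and you extract regret on the \emph{easier} instance from the under-allocation that feasibility under the harder instance forces throughout a confusion window of length $n \asymp 1/c^2$. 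The paper instead keeps the two ``real'' arms fixed and plants the indistinguishable cost change in a decoy arm (arm $4$: infeasible with suboptimality gap $4\Delta/3$ under $\nu$, optimal under $\nu'$); it argues from the wlog assumption that regret is at most $B$ on every instance, uses that assumption (rather than feasibility) to pin down the mass on arm $1$ under both instances via the event $\{\bar T_1(T)\geq T/2\}$, and then extracts regret on the \emph{harder} instance from the forced exploration $\mathbb{E}_\nu[T_4(T)]\geq \frac{1}{4c^2}$. Your mechanism (a ``price of caution'') more directly exhibits why the constraint gap, not a reward gap, creates the $1/c^2$ barrier, and needs only two effective arms; the paper's mechanism (a ``price of information'') follows the standard Lattimore--Szepesv\'ari contradiction scheme and derives the algorithm's behavioral restriction from low regret rather than from feasible play at that step. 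One subtlety you should be aware of: both arguments implicitly need the played policies to be (at least with high probability) feasible for the regret accounting to be valid --- you flag this correctly --- but it is more load-bearing in your version, since if feasibility under $\nu$ held almost surely then mutual absolute continuity of the two Gaussian models would transfer the weight cap to $\nu'$ outright and the KL step would be doing work only for the high-probability-feasible class, which is the right class here. Two further caveats, shared with (indeed worse in) the paper's own write-up, so not faults relative to it: the constructed means can leave $[0,1]$ outside the small-gap regime (the paper's rewards $8\Delta=4$ already do), and your constants (window length, event threshold $5/12$, deficit $1/12$) are asserted rather than balanced --- carrying them out, with the refinement that feasibility under $\nu$ also caps $\mathbb{E}_\nu[T_2(n)]\leq n/3$ and hence the window KL by $nc^2/6$, does produce a bound of the claimed order.
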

\begin{proof}

If $\max\left(\frac{1}{27}\sqrt{ (k-1) T}, \frac{1}{6(\tau - \bar{c}_1)^2} \right) = \sqrt{KT}$, then the argument in Theorem 15.2 of \cite{lattimore2018bandit} yields the desired result by noting that the framework of constrained bandits subsumes unconstrained multi armed bandits when all costs equal zero. In this case we conclude there is an instance $\bar{r}, \bar{c}$ with $\bar{c}_a =0$ for all $a \in \mathcal{A}$ satisfying:
\begin{equation*}
    \mathcal{R}_\Pi(T, \mathcal{A}, \bar{r}, \bar{c}) \geq \frac{1}{27}\sqrt{ (k-1) T}
\end{equation*}

Let's instead focus on the case where $B = \max\left(\frac{1}{27}\sqrt{ (k-1) T}, \frac{1}{6(\tau - \bar{c}_1)^2} \right) =\frac{1}{6(\tau - \bar{c}_1)^2} $.

Pick any algorithm. We want to show that the algorithm's regret on some environment is as large as $B$. If there was an instance $\bar{r}, \bar{c}$ such that $\mathcal{R}_\Pi(T, \mathcal{A}, \bar{r}, \bar{c}) > B$ there would be nothing to be proven. Hence without loss of generality, we can assume that the algorithm satisfies $\mathcal{R}_\Pi(T, \mathcal{A}, \bar{r}, \bar{c}) \leq B $ for all $\bar{r}, \bar{c} \in [0,1]^K$ and having unit variance Gaussian rewards. 

Let $c \in (0,1)$ with $c =  \tau-\bar{c}_1$. For the reader's convenience we will use the notation $\Delta = 1/2$. By treating the rewards in a symbolic way it is easier to understand the logic of the proof argument. Let's consider the following constrained bandit instance inducing measure $\nu$:
\begin{align*}
    \bar{c}^1  &= (\tau - c, &\tau+2c,&\quad \tau-c,& \tau+2c, &\quad\cdots,& \tau+2c )\\
    \bar{r}^1 &= (\Delta,& 8\Delta,&\quad 0, &4\Delta,&\quad \cdots, &4\Delta)
\end{align*}

Notice that the optimal policy equals a mixture between arm $1$ and $2$, where arm $1$ is chosen with probability $2/3$ and arm 2 with probability $1/3$. The value of this optimal policy equals $10/3\Delta$. 

Recall we use the notation $\bar{T}_j(t)$ denote the total amount of probability mass that $\mathcal{A}$ allocated to arm $j$ up to time $t$. Notice that the expected reward of all feasible policies that do not have arm $1$ in their support have a gap (w.r.t the optimal feasible policy's expected reward) of at least $\frac{2\Delta}{3}$. Since by assumption, $\mathcal{A}$ satisfies $\mathcal{R}_\Pi(T, \mathcal{A}, \bar{r}^1, \bar{c}^1) \leq B $:
\begin{align*}
   B &\geq \mathcal{R}_\Pi(T, \mathcal{A}, \bar{r}^1,\bar{c}^1) &\geq\frac{2\Delta}{3} \left( \frac{2}{3}T - \frac{1}{2}T \right) \mathbb{P}\left( \bar{T}_1(T) < \frac{T}{2}  \right)   & = \frac{\Delta}{9}T \mathbb{P}\left( \bar{T}_1(T) < \frac{T}{2}  \right)
\end{align*}
And therefore:
\begin{align*}
    \mathbb{P}\left( \bar{T}_1(T) \geq \frac{T}{2} \right) = 1-\mathbb{P}\left(  \bar{T}_1(T) < \frac{T}{2}  \right) \geq 1-\frac{9B}{\Delta T} \geq 1/2
\end{align*}
The last inequality follows from the assumption $T \geq \max(  K-1,24eB)$.



Let's now consider the following constrained bandit instance inducing measure $\nu'$:
\begin{align*}
    \bar{c}_2 &= (\tau - c, &\tau+2c,&\quad 0,& \tau-c, &\quad\cdots,& \tau+2c )\\
    \bar{r}_2 &= (\Delta,& 8\Delta,&\quad 0, &4\Delta,&\quad \cdots, &4\Delta)
\end{align*}
In this instance the optimal policy is to play arm 4 deterministically, which gets a reward of $4\Delta$. Notice that the expected reward of any feasible policy that does not contain arm $4$ in its support has a gap (w.r.t. the optimal feasible policy's expected reward) of at least $\frac{2\Delta }{3}$. Since by assumption, $\mathcal{A}$ satisfies $\mathcal{R}_\Pi(T, \mathcal{A}, \bar{r}^2, \bar{c}^2) \leq B $:
\begin{align*}
    B &\geq \mathcal{R}_\Pi(T, \mathcal{A}, \bar{r}^2, \bar{c}^2) &\geq\frac{2\Delta}{3} \left(  \frac{1}{2}T \right) \mathbb{P}\left( \bar{T}_1(T) \geq \frac{T}{2}  \right)   & = \frac{\Delta}{3}T \mathbb{P}\left( \bar{T}_1(T) \geq \frac{T}{2}  \right)
\end{align*}
And therefore:
\begin{equation*}
    \mathbb{P}\left( \bar{T}_1(T) \geq \frac{T}{2} \right) \leq \frac{3B}{\Delta T} \leq \frac{1}{4e}
\end{equation*}
The last inequality follows from the assumption $T \geq \max(  K-1,24eB)$. As a consequence of inequality~\ref{equation::lower_bound_divergence}, Lemma~\ref{lemma::binary_relative_entropy} and~\ref{lemma::gaussian_divergence}:
\begin{equation*}
    \mathbb{E}_\nu[T_4(T)]\mathrm{KL}(\binom{\tau+2c}{4\Delta}, \mathbb{I}_d) , \mathcal{N}(\binom{\tau-c}{4\Delta}, \mathbb{I}_d) ) = \mathbb{E}_\nu[T_4(T)]2c^2 \geq \frac{1}{2}
\end{equation*}
And therefore we can conclude:
\begin{equation}
 \mathbb{E}[\bar{T}_4(T)]= \mathbb{E}[T_4(T)]  \geq \frac{1}{4c^2}
\end{equation}
Since in $\nu$, any feasible policy with support in arm $4$ and no support in arm 2 has a suboptimality gap of $4/3\Delta$, we conclude the regret $\mathcal{R}_\Pi(T, \mathcal{A}, \bar{r}^2, \bar{c}^2)$ must satisfy:
\begin{equation*}
    \mathcal{R}_\Pi(T, \mathcal{A}, \bar{r}^2, \bar{c}^2) \geq \frac{\Delta}{3c^2} 
\end{equation*}
Since $\Delta= \frac{1}{2}$ and noting that in this case $\frac{\Delta}{3c^2} = B$. The result follows. 
\end{proof}

\end{document}